\newif\ifscreen \screenfalse 
\newif\ifnotes \notesfalse   
\newif\ifarxiv\arxivtrue
\newif\ifconf \conffalse
\newcommand{\akshay}[1]{\textsf{\color{orange} Akshay: { #1}}}
\newcommand{\note}[1]{\textsf{\color{blue} Note: { #1}}}
\newcommand{\vinod}[1]{\textsf{\color{red} Vinod: { #1}}}
\newcommand{\preetum}[1]{\textsf{\color{blue} Preetum: { #1}}}
\newcommand{\akshay}[1]{}
\newcommand{\note}[1]{}
\newcommand{\vinod}[1]{}
\newcommand{\preetum}[1]{}
	\title[Computational Limitations in Robust Classification]{Computational Limitations in Robust Classification and \\ Win-Win Results}
	\newcommand{\citep}[1]{\cite{#1}}
\renewcommand{\set}[1]{\left\{ #1 \right\} }
\newtheorem*{lemma*}{Lemma}
\newtheorem*{claim*}{Claim}
\newtheorem*{blprexample*}{BLPR Counter-Example}
\newtheorem{assumption}[theorem]{Assumption}
\newtheorem{claim}[theorem]{Claim}
\newtheorem*{blprexample*}{BLPR Counter-Example}
\Crefname{claim}{Claim}{Claims}
\crefname{claim}{Claim}{Claims}
\Crefname{assumption}{Assumption}{Assumptions}
\crefname{assumption}{Assumption}{Assumptions}
\begin{document}
\ifconf
\maketitle
\else
	\title{Computational Limitations in Robust Classification and Win-Win Results\footnote{\akshay{added.}This work is a merge of \cite{DV19old} and \cite{nakkiran}.}} 
	\author{Akshay Degwekar
        \and Preetum Nakkiran
		\and Vinod Vaikuntanathan}
	\date{\today}
	\maketitle
	\thispagestyle{empty}
\fi

\begin{abstract}
We continue the study of statistical/computational tradeoffs in learning robust classifiers, following the recent work of Bubeck, Lee, Price and Razenshteyn who showed examples of classification tasks where (a) an efficient robust classifier exists, {\em in the small-perturbation regime}; (b) a non-robust classifier can be learned efficiently; but (c) it is computationally hard to learn a robust classifier, assuming the hardness of factoring large numbers. Indeed, the question of whether a robust classifier for their task exists in the large perturbation regime seems related to important open questions in computational number theory.

In this work, we extend their work in three directions.

First, we demonstrate classification tasks where computationally efficient
robust classification is impossible, even when computationally unbounded robust
classifiers exist. For this,
we rely on the existence of average-case hard functions,
requiring no cryptographic assumptions.
\preetum{Edited.}

Second, we show hard-to-robustly-learn classification tasks {\em in the large-perturbation regime}. Namely, we show that even though an efficient classifier that is {\em very robust} (namely, tolerant to large perturbations) exists, it is computationally hard to learn any non-trivial robust classifier. Our first construction relies on the existence of one-way functions, a minimal assumption in cryptography, and the second on the hardness of the learning parity with noise problem. In the latter setting, not only does a non-robust classifier exist, but also an efficient algorithm that generates fresh new labeled samples given access to polynomially many training examples (termed as generation by Kearns et.\ al.\ (1994)).

Third, we show that any such counterexample implies the existence of cryptographic primitives such as one-way functions or even forms of public-key encryption. This leads us to a win-win scenario: either we can quickly learn an efficient robust classifier, or we can construct new instances of popular and useful cryptographic primitives.
\end{abstract}

\ifconf

\else
	\clearpage
	\setcounter{page}{1}
	\setcounter{tocdepth}{2}
	\tableofcontents
	\pagenumbering{roman}
	\newpage
 	\pagenumbering{arabic}
\fi



\section{Introduction}

\def\err{\delta}
\def\errrob{\overline{\err}}

The basic task in learning theory is to learn a classifier given a dataset. Namely, given a labeled dataset $\{(X_i, f(X_i))\}_{i\in [n]}$ where $f$ is the unknown ground-truth and $X_i$ are drawn i.i.d. from a distribution $D$, learn a classifier $h$ so as to (approximately) minimize
\[ \err := \Pr_{X\sim D}[h(X) \neq f(X)]  \]
Adversarial machine learning is harder in that the learned classifier is required to be {\em robust}. Namely, it has to produce the right answer even under bounded perturbations (under some distance measure) of the sample $X\sim D$. That is, the goal is to learn a classifier $h$ so as to (approximately) minimize
\[ \errrob := \Pr_{X \sim D}[\exists Y \in B(X, \eps) \mbox{ s.t. } h(Y) \neq f(Y)] \]
where $ B(X, \eps) = \set{ Y :  d(X,Y) \leq \epsilon }$ and $d$ is the distance measure in question.

Learning {\em robust} classifiers is an important question given a large number of attacks against practical machine learning systems that show how to minimally perturb a sample $X$ so that classifiers output the wrong prediction with high probability. Such attacks were first discovered in the context of spam filtering and malware classification~\cite{dalvi2004adversarial,lowdM05,biggio2018wild} and more recently, following~\cite{goodfellowexplaining,szegedy2013intriguing}, in image classification, voice recognition and many other domains.

This state of affairs raises a slew of questions in learning theory.  Fix a concept class $\mathcal{F}$ and a distribution $D$ for which efficient (non-robust) learning is possible. Do there exist robust classifiers for $F$? Do there exist {\em efficiently computable} robust classifiers for $F$? Pushing the envelope further, can such classifiers be learned with small sample-complexity? and finally, is the learning algorithm computationally efficient? The answer to these questions give rise to five possible worlds of robust learning, first postulated in two recent works~\cite{BPR18} and~\cite{BLPR18}, henceforth referred to as BPR and BLPR respectively.\footnote{To be precise, \cite{BPR18} postulated four worlds, namely worlds $1$ and $3$--$5$. Subsequent work of \cite{BLPR18} added the second world.} This is the starting point of our work.

\begin{description}
  \item[World $1$.] No robust classifiers exist, regardless of computational or sample-efficiency considerations. \cite{FFF18} show a learning task in this world, namely one where computationally efficient non-robust classification is possible, no robust classifiers exist. On the other hand, for natural learning tasks, humans seem to be robust classifiers that tolerate non-zero error rate $\epsilon$, indeed even efficient robust classifiers; see \cite{BPR18} for a more detailed discussion.
  \item[World $2$.] Robust classifiers exist, but they are computationally inefficient. We demonstrate learning tasks in this world. 
 \akshay{edited. old. Learning tasks in this world were first demonstrated by \cite{nakkiran}.
  The present work is partially merged with \cite{nakkiran}, and presents other such
  learning tasks.
  \preetum{Added}}
  \item[World $3$.] Computationally efficient robust classifiers exist, but learning them incurs large sample complexity.
  \cite{schmidt2018adversarially} show a learning task where a computationally efficient robust classifier exists, but learning it requires polynomially more samples than non-robust learning. On the other hand, \cite{BPR18} show that this gap {\em cannot be} more than linear in the dimension; see \cite{BPR18} for a more detailed discussion.
  \item[World $4$.] Computationally efficient robust classifiers exist, and can be learned sample-efficiently, but training is computationally inefficient. 
  \cite{BLPR18} show a learning task in this world. However, as we observe below, their {\em computationally efficient} robust classifier only recovers from a very small {\em number} (indeed, a constant number) of perturbations. Whether there exists an efficient robust classifier for their task that recovers from large perturbations seems related to long-standing open questions in computational number theory~\cite{nadia-personal-comm,blogpost}.
  As our second result, we show two examples of learning tasks that live in this world; more details in Section~\ref{sec:results}.
  \item[World $5$.] The best world of all, in which there exist efficient algorithms both for classification and training, and the sample complexity is small (but it could be that we haven't discovered the right algorithm just yet.)

  We want to understand -- {\em are we likely to find learning tasks such as the ones \cite{BLPR18} and we demonstrate in the wild?} To that end, our third result is a {\em win-win} statement: namely, any such learning task gives rise to a cryptographic object-- either a simple one like a one-way function or a complex one like public-key encryption.
\end{description}

We proceed to describe the three results in more detail.

But before we do so, a word of warning. We and \cite{BLPR18} define these five worlds in a coarse way using polynomial-time as a proxy for computational efficiency, and a large constant accuracy as a proxy for successful classification. (We should also mention that \cite{BPR18} use SQ-learning as a different proxy for computationally efficient learning.) One could be more careful and speak of running-time/accuracy tradeoffs in the different worlds, but since our goal here is to show broad counterexamples, we do not attempt to do such a fine-grained distinction.

%
%

\section{Our Results}\label{sec:results}
We explore the relationship of computational constraints and efficient robust classification. The setting we consider consists of two distributions $ D_0, D_1$ and the classifier has to correctly classify inputs from both.
We consider the two facets to efficient robust classification:  (1) \emph{existence:} do efficient robust classifiers exist? (corresponds to World 2) and  (2) \emph{learnbility:} can we learn robust classifiers efficiently? We show three sets results on which we elaborate below.

\subsection{Existence (World 2)}
In terms of feasibility, we show that there are learning tasks where while
inefficient robust classification is possible, \emph{no efficient robust
classifiers exist}. That is, we demonstrate learning tasks in World 2.
We can show the following:
\preetum{Edited this section}

\begin{theorem}[Informal] \label{infthm:avg-case}
There exist classification tasks over  where (1) efficient non-robust classifiers exist,
(2) no efficient robust classifier exists,
but (3) inefficient robust classifiers exist.
\end{theorem}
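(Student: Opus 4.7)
The plan is to adapt the hint-bit template of \cite{BLPR18}: in each clean sample we append an auxiliary bit that reveals the label directly, so that a trivial efficient classifier reads it off, but a single-bit adversarial perturbation can destroy it, thereby forcing any robust classifier to compute a function that is hard on average. The crucial difference from \cite{BLPR18} is that we set the hint to be the value of an average-case hard Boolean function rather than a cryptographic trapdoor, so no one-way-function-style assumption is needed.

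Concretely, I would fix a balanced Boolean function $F:\{0,1\}^n \to \{0,1\}$ that is strongly average-case hard, i.e., every polynomial-time algorithm $A$ satisfies $\Pr_x[A(x)=F(x)] \leq 1/2+\mathrm{negl}(n)$. Such an $F$ is known in $\mathsf{EXP}$ from worst-case complexity assumptions (e.g.\ $\mathsf{EXP}\not\subseteq \mathsf{BPP}/\mathsf{poly}$, via Impagliazzo--Wigderson-style hardness amplification), and is strictly weaker than the existence of one-way functions. The classification task is then: instance space $\{0,1\}^{n+1}$, true labeling $f(x,b):=F(x)$, Hamming perturbation ball $B(X,1)$, and distribution $D$ that outputs $(x,F(x))$ for $x$ uniform in $\{0,1\}^n$. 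For (1), the efficient classifier $h(x,b):=b$ has zero non-robust error on $D$. For (3), the unbounded classifier $h^\ast(x,b):=F(x)$ ignores $b$ and recomputes $F$ on whatever $x$-part it sees; since $f$ depends only on $x$, this gives $\errrob(h^\ast)=0$. For (2), suppose an efficient $h$ had $\errrob(h)\le 1/2-\eps$ for non-negligible $\eps$. Then for a $(1/2+\eps)$-fraction of $x$'s no $Y\in B((x,F(x)),1)$ makes $h(Y)\neq f(Y)$; in particular $h(x,0)=F(x)$ on those $x$'s. The polynomial-time algorithm $A(x):=h(x,0)$ would then compute $F$ with advantage $\eps$ over random guessing, contradicting the average-case hardness of $F$.

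The main obstacle I expect is quantitative rather than structural: the reduction in (2) requires a \emph{strongly} hard-on-average $F$ (no polynomial-time advantage beyond $\mathrm{negl}(n)$), so one has to be explicit about which complexity assumption is invoked and why it is genuinely weaker than the cryptographic assumptions used elsewhere in the paper. A secondary issue is handling perturbation radius larger than $1$: to support radius $r$, I would encode $x$ using a binary error-correcting code of minimum distance strictly greater than $2r$, so that the unbounded classifier in (3) can still decode the $x$-portion uniquely and apply $F$, while a small redundant hint field (encoding $F(x)$ with repetition degree just below $2r$) remains destroyable by the adversary and so preserves the contradiction in (2).
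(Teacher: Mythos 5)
Your proposal is essentially the paper's own construction: append a plaintext hint bit revealing the label, let the perturbation adversary overwrite it, so that any robust classifier must compute an average-case hard function (impossible efficiently), while an unbounded classifier can compute it, with an error-correcting code supplying robustness at larger radii. Two minor differences worth noting: the paper simply takes $g$ to be a random function, so no assumption such as $\mathsf{EXP}\not\subseteq\mathsf{BPP}/\mathrm{poly}$ is needed at all (its one-way-permutation variant is only for making the distributions efficiently samplable), and it encodes $(x,g(x))$ jointly while keeping a single unencoded hint bit, which sidesteps the subtlety in your length-$(2r-1)$ repetition hint --- flipping $r$ of $2r-1$ copies makes the \emph{minority} value reveal the label, so the adversary must instead map the hint field to a string whose distribution is independent of the label (or you should just keep the hint short enough to overwrite within budget).
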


This result does not require cryptographic assumptions, and relies only
on the existence of \emph{average-case hard} functions and good
error-correcting codes.
In fact, this result scales down to more fine-grained notions of efficiency than
polynomial-time.
All that is required is a function that is average-case hard for the
``efficient'' class, but computable by the ``inefficient'' class.

We give several examples of such learning tasks, including some examples that
require cryptographic assumptions but obtain other desirable properties
(such as obtaining tasks with efficiently-samplable distributions).
More details are given in
\cref{sec:tech:avgcase,sec:tech:codes,sec:lpn,sec:lwe,sec:avgcase-ecc}.

\subsection{Learnability (World 4)} We want to understand the hardness of learning an efficient robust classifier when it exists.
The starting point of this work was the BLPR work \citep{BLPR18}.
They showed that under cryptographic assumptions, there exists a learning task which admits efficient robust classifiers, but it is computationally infeasible to train such a classifier. More precisely, they showed that there exists a classification task (over $ \zo^{n} $) where (a) learning {\em any non-trivial} robust classifier is computationally infeasible while (b) an efficient robust classifier exists.

Unfortunately, we observe that their robust classifier is efficient only when correcting a constant number of errors. Indeed, as we explain in \cref{sec:tech:blpr}, the question of whether there exists a computationally efficient robust classifier for their task correcting even $\omega(1)$ bits of error is an important open question in computational number theory that has received some attention in the cryptanalysis community \citep{blogpost,nadia-personal-comm}.
 
The BPLR construction can be rescued using error correcting codes to enable efficient robust classifiers robust to large (constant fraction) perturbations. Our results strengthen theirs in two ways: we can weaken the required cryptographic assumption to that one-way functions exist and demonstrate tasks where the gap between learning and robust classification is more: in that efficient learning algorithms can learn to not only classify, but also to generate fresh samples from the distributions.

%

%

\begin{theorem}[Informal] \label{infthm:owfs}
Under the minimal cryptographic assumption that one-way functions, there exist classification tasks over $ \zo^m $ where (1) it is easy to learn a non-robust classifier (2) an efficient robust classifier that tolerates $ m/8 $-sized perturbations exists, and (3) it is computationally hard to learn any non-trivial robust classifier.
\end{theorem}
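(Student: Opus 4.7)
The plan is to instantiate the classification task using a pseudorandom function $F_k: \zo^n \to \zo$, which exists under the assumption that one-way functions exist (via HILL and GGM), together with an explicit binary error-correcting code $C: \zo^n \to \zo^{m-1}$ of constant rate, relative distance strictly greater than $1/4$, and efficient unique decoding from $m/8$ errors (e.g., a constructive family at the Zyablov bound, with $n = \Theta(m)$). Sampling a key $k$ and using rejection sampling over $x$, define the two class distributions on $\zo^m$ by
\begin{align*}
D_0 &= \{(0,\, C(x)) : F_k(x) = 0\},\\
D_1 &= \{(1,\, C(x)) : F_k(x) = 1\}.
\end{align*}
Part (1) is then immediate, since the first coordinate is a perfect non-robust classifier that a learner discovers from a handful of training examples. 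For part (2), the efficient robust classifier hard-codes $k$, decodes the last $m-1$ coordinates of its input $y \in \zo^m$ using $C$ to recover some $\widetilde x$, and outputs $F_k(\widetilde x)$: any $m/8$-bounded adversarial perturbation corrupts at most $m/8$ codeword coordinates (the label bit is simply ignored), so unique decoding returns the correct $x$.

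The core of the proof is part (3), which I would establish by a reduction from PRF security. The key robustness observation is that flipping the first (label) bit is a $1 \leq m/8$ perturbation, so any $m/8$-robust classifier $h$ with non-trivial accuracy must satisfy $h(0, z) = h(1, z) = F_k(x)$ on a noticeable fraction of payloads $z = C(x)$; equivalently, the circuit $z \mapsto h(1, z)$ computes $F_k(\mathrm{Dec}(z))$ with non-negligible advantage over random codeword inputs and key choice. The PRF distinguisher $\mathcal{D}$ with oracle $\mathcal{O}$ (either $F_k$ or a uniformly random function) then simulates the training set by sampling $x_i \leftarrow \zo^n$, querying $b_i = \mathcal{O}(x_i)$, and forming labeled examples $(b_i, C(x_i))$; it feeds these to the purported efficient robust learner to obtain $h$, picks a fresh $x^\ast$, and outputs ``PRF'' iff $h(1, C(x^\ast)) = \mathcal{O}(x^\ast)$. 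When $\mathcal{O} = F_k$ the simulated distribution matches the true mixture of $D_0$ and $D_1$ exactly, so $h$ agrees with $\mathcal{O}(x^\ast)$ with probability inherited from the learner's advantage; when $\mathcal{O}$ is truly random, $\mathcal{O}(x^\ast)$ is independent of the view of $\mathcal{D}$, so agreement is exactly $1/2$, yielding a non-negligible distinguishing gap and contradicting PRF security.

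The main obstacle is the label-bit invariance step: since the learner is only promised non-trivial rather than perfect robust accuracy, the identity $h(1, C(x)) = F_k(x)$ holds only on average and not pointwise. I would handle this via an averaging argument applying the robustness condition separately to samples from $D_0$ and $D_1$: robustness on a noticeable fraction of $D_0$-samples pins down $h(1, C(x)) = 0$ on a noticeable fraction of $x \in F_k^{-1}(0)$, and symmetrically on $F_k^{-1}(1)$, and together these give the non-negligible PRF-distinguishing advantage the reduction needs. A secondary concern is that the reduction's challenge $x^\ast$ must be fresh relative to the training queries, which is handled trivially by choosing $n$ large enough that collisions with the polynomially many training $x_i$ occur with negligible probability.
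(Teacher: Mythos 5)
Your proposal is essentially the paper's own route: a Kearns--Valiant-style task built from a PRF (hence from one-way functions), an error-correcting code to make an efficient robust classifier that hard-codes the key and decodes, a first coordinate that makes non-robust classification trivial, a perturbation adversary that destroys that coordinate, and a reduction that simulates the training samples with the PRF oracle to turn any non-trivial robustly-learned classifier into a PRF distinguisher. The one substantive difference is the instantiation of the two classes: the paper sets $D_0 = (0,\encode(x,F_k(x)))$ and $D_1 = (1,\encode(x,1-F_k(x)))$ with $x$ uniform in \emph{both} classes, whereas you partition the domain by the PRF value, $D_b = \{(b, C(x)) : F_k(x)=b\}$, and encode only $x$. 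Your variant is fine, but it introduces a wrinkle the paper's version avoids: your claim that the oracle-based simulation ``matches the true mixture of $D_0$ and $D_1$ exactly'' is only true if $F_k$ is exactly balanced, and likewise your challenge $x^\ast$ drawn uniformly does not exactly reproduce the experiment in which $b$ is uniform and $x\gets D_b$. This is repairable --- simulate each class (and the challenge) by rejection sampling with the oracle, which is exact in the PRF case, and note that PRF security forces $\Pr_x[F_k(x)=1]$ to be close to $1/2$ (else estimating the bias on random inputs is itself a distinguisher), so the rejection sampling is efficient and the task is well-defined except with negligible probability over $k$ --- but it should be stated rather than asserted as exact. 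The paper's choice of putting the bit $F_k(x)\oplus b$ inside the codeword with $x$ uniform in both classes sidesteps all of this: both classes have identical, efficiently simulatable marginals given oracle access, so the reduction is immediate, and the ``erase the label bit'' adversary reduces robust classification directly to predicting $F_k(x)$ from $x$. Everything else in your argument (the code parameters, the decoding-based robust classifier tolerating $m/8$ Hamming perturbations, the averaging over non-trivial accuracy, and the freshness of $x^\ast$) matches the paper's proof of its Theorem on PRF-based tasks.
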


\begin{theorem}[Informal] \label{infthm:lwelpn}
Assuming  Learning Parity with Noise (or Learning with Errors) in the ``public-key'' regime of parameters, there exist classification tasks on $ \zo^m $ where (1) it is easy to learn a non-robust classifier. (2) an efficient robust classifier tolerating $ O(\sqrt m) $-errors exists, and (3) it is computationally hard to learn any non-trivial robust classifier.

Furthermore, it is easy to learn generators/evaluators for the non-robust distributions.\footnote{Generators and Evaluators \citep{kearns1994learnability}, are algorithms that can sample from the distribution and output the pdf of the distribution respectively.}
\end{theorem}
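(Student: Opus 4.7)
The plan is to instantiate the classification task via a CPA-secure public-key encryption scheme $\Pi = (\text{KeyGen}, \text{Enc}, \text{Dec})$ from LPN (Alekhnovich's PKE) or from LWE (Regev's PKE), composed with a good binary error-correcting code $E:\{0,1\}^{\ell} \to \{0,1\}^{m-1}$ of constant rate and constant relative distance (e.g., Spielman codes). I would sample a key pair $(\text{pk}, \text{sk})$ once, publish $\text{pk}$ as part of the public task description, and define $D_b$ for $b \in \{0,1\}$ to sample $c \leftarrow \text{Enc}_{\text{pk}}(b)$ and output $(E(c), b) \in \{0,1\}^m$. The designated efficient robust classifier, which uses $\text{sk}$ as a trapdoor, is $h^*(z, b') := \text{Dec}_{\text{sk}}(E^{-1}(z))$: it discards the appended label bit, ECC-decodes $z$, and decrypts.

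Next I would verify the four properties in sequence. For (1), the classifier ``output the last bit'' has perfect clean accuracy, so an efficient non-robust classifier is trivially learnable. For (2), $h^*$ is polynomial time and robust to $O(\sqrt{m})$ bit flips, since any $\sqrt{m}$ errors lie comfortably within the ECC's decoding radius $\Omega(m)$, so $E^{-1}$ returns the exact ciphertext $c$ and $\text{Dec}_{\text{sk}}(c) = b$ by base PKE correctness. For (3), I would argue by contradiction: if a PPT learner produced a classifier $h$ with non-negligible robust advantage, then because flipping the single label coordinate is an allowed perturbation, $h$ would have to predict $b$ from $(z, 1-b)$ as well, so fixing the label coordinate to $0$ gives a PPT predictor of $b$ from $z = E(\text{Enc}_{\text{pk}}(b))$; since $E$ is a public, efficient map, this yields a PPT distinguisher of $\text{Enc}(0)$ from $\text{Enc}(1)$, contradicting CPA security and hence the underlying LPN/LWE assumption. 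For (4), the learner already knows $\text{pk}$ from the task description and simply outputs the sampler $b \mapsto (E(\text{Enc}_{\text{pk}}(b)), b)$, a perfect polynomial-time generator for $D_b$; the evaluator is likewise efficient since the encryption randomness in Alekhnovich or Regev induces an explicitly computable density on ciphertexts.

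The hard part will be balancing parameters so that the perturbation budget $O(\sqrt{m})$ is simultaneously small enough that $E$ still decodes correctly (so the robust classifier works) and large enough to flip the lone label bit (so the label-reading heuristic is itself not robust); both are easy with a constant-rate constant-distance code, but I would want to set $m$ so that $\sqrt{m}$ stays well below the decoding radius while $m$ is still polynomial in the security parameter governing LPN/LWE hardness. A secondary concern is keeping the PKE decryption error negligible after wrapping with $E$, but because the decoder returns the exact ciphertext this reduces to the native correctness of Alekhnovich/Regev and needs no amplification. The last subtlety is the evaluator requirement in (4), which I discharge by using the fact that $\text{Enc}$ in both base schemes is a public linear/algebraic operation on its randomness, so the induced density on $(E(c),b)$ has an explicit, efficiently computable form.
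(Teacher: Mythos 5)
Your construction establishes properties (1)--(3) essentially correctly, but it is structurally the paper's \emph{one-way-function} construction (append a label bit, encode with an ECC, argue unlearnability by killing the label bit and invoking indistinguishability) with a PKE plugged in place of the PRF, and it misses the point of this particular theorem: the ``furthermore'' clause about learning generators/evaluators. In the paper's model (following Kearns et al.), the learner is given only \emph{oracle/sample access} to $D_0, D_1$ and must output a generator or evaluator from samples; it is not handed a description of the task. Your argument for (4) assumes the learner ``already knows $\mathrm{pk}$ from the task description,'' which is outside that model, and even granting $\mathrm{pk}$ the evaluator claim does not go through: the density of a Regev/Alekhnovich ciphertext is proportional to the number of encryption-randomness strings mapping to it (a subset-sum/preimage-counting quantity), which is not ``explicitly computable'' in polynomial time. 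Worse, your clean distributions are (encoded) ciphertext distributions, which are pseudorandom; nothing in your proposal shows that a sample-access learner can produce a good generator for them, and indeed the paper explicitly notes that its OWF-style construction — which yours mirrors — does \emph{not} support generation. So the distinguishing feature of Theorem~\ref{infthm:lwelpn} is exactly the part left unproved.

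The paper's route is different and is chosen precisely to make (4) trivial: it takes $D_0$ to be the uniform distribution over a linear code sampled \emph{with a trapdoor} (a sparse dual matrix $\mat E$ for LPN via Alekhnovich-style sampling, a short basis via GPV for LWE) and $D_1$ to be its shift by the all-ones vector (resp. $\frac q2\cdot\vec 1$); there is no label bit. Then polynomially many clean samples span the code, so Gaussian elimination lets the learner both classify non-robustly and output an exact generator and evaluator (uniform over a learned affine subspace). The trapdoor gives the efficient robust classifier tolerating $O(\sqrt m)$ noise, and the perturbation adversary adds random low-weight noise, so that classifying perturbed samples is exactly distinguishing LPN/LWE samples from (shifted) ones, which is hard by the assumption plus the indistinguishability of the trapdoored matrix from uniform. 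To repair your proposal you would either need to move to such coset-of-a-trapdoored-code distributions, or give a genuine argument that your ciphertext distributions admit sample-access-learnable generators and evaluators, which you currently do not have.
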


We elaborate on the differences between the two theorems in the techniques section. Briefly, there are three key differences: \cref{infthm:lwelpn} requires a stronger assumption, but gives a more ``natural'' example where the resulting distributions are ``more easier'' to learn non-robustly. In particular, it is easy to learn how to generate fresh samples from the two distributions, something that the one-way function based example cannot support.
This is important because we want to separate the complexity of learning the distribution from that of robust classification. And here, these distributions can be learned in a stronger sense while still being hard to classify  under adversarial perturbations.

\subsection{A Win-Win Result} Finally, we want to understand -- \emph{Are we likely to find such learning tasks in the wild?} To that end, we show a converse to our results. Namely,

\begin{theorem}[Informal] \label{infthm:converse}
Any computational task where an efficient robust classifier exists, but is hard to learn one in polynomial time implies one-way functions, and hence symmetric key cryptography.
\end{theorem}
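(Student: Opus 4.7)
The plan is to realize the win-win via the classical paradigm that a hard-on-average learning problem yields a one-way function, adapted here to the robust setting. I would first formalize the hypothesis as a distribution $\mathcal{T}$ over tasks, where each $t \sim \mathcal{T}$ specifies a sample distribution $D_t$ and an efficient robust classifier $h_t$, and where no polynomial-time learner outputs a classifier with non-trivial robust accuracy on a random $t$ given polynomially many labeled samples. Once a one-way function is obtained, the conclusion for symmetric-key cryptography is then standard.

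As the candidate one-way function, on input $(t, r)$ with $t \sim \mathcal{T}$ and $r = (r_i, r_{i,j})_{i \in [m], j \in [p]}$, set $x_i = \mathsf{Samp}(D_t; r_i)$ and draw perturbations $z_{i,j} \in B(x_i, \epsilon)$ using $r_{i,j}$, then output the enriched labeled set
\[
F(t,r) \;=\; \bigl(\,(x_i, h_t(x_i)),\ (z_{i,j}, h_t(z_{i,j}))\,\bigr)_{i \in [m],\, j \in [p]}.
\]
Since $h_t$ is efficient by assumption, $F$ is polynomial-time computable. Intuitively, $F(t,r)$ is a robust training set for $t$, and inverting $F$ amounts to finding a task whose robust classifier agrees with this training set pointwise.

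Now suppose for contradiction that $F$ admits an efficient inverter $A$ succeeding with non-negligible probability. On input $F(t,r)$, $A$ returns some $(t', r')$ with $F(t', r') = F(t, r)$, so $h_{t'}$ agrees with $h_t$ on every $x_i$ and every perturbation $z_{i,j}$; because $h_t$ has low robust error, these labels coincide with the robust ground truth on a $1 - o(1)$ fraction of the sampled balls. I would package this into an efficient learner as follows: on an observed batch of $m$ samples, self-generate perturbations from $D_t$'s support, simulate the remainder of $F$'s output, feed the resulting string to $A$, and return $h_{t'}$. Combined with a uniform-convergence bound over the polynomial-size concept class of efficiently-computable robust classifiers, this yields an efficient robust learner for the task, contradicting the hypothesis.

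The main obstacle is bridging empirical consistency on the $mp$ recorded points to robust generalization over $D_t$: a classifier returned by $A$ may match $h_t$ on the training set while misclassifying unseen perturbations of unseen inputs. Controlling this requires choosing $m, p$ large enough that uniform convergence for the robust $0/1$-loss holds over the class of efficient classifiers---for instance, via a polynomial bound on the adversarial Rademacher complexity of polynomial-size hypotheses, or by using that $h_t$'s robust error is small enough that the recorded perturbation labels are robust ground truth with overwhelming probability. A secondary subtlety is that the inverter's output $t'$ need not equal $t$; but since only $h_{t'}$ is used downstream, all that matters is that $h_{t'}$ be efficient and robust, which is guaranteed by the structure of the task distribution.
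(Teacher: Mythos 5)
Your route is genuinely different from the paper's, but it has a gap at exactly the step you flag, and that step is the crux. Your reduction shows: an inverter for $F$ yields a hypothesis $h_{t'}$ that agrees with $h_t$ on the recorded points $x_i$ and on the \emph{randomly drawn} perturbations $z_{i,j}$. Consistency with finitely many random points in each ball controls only the standard $0/1$ loss with respect to the perturbation distribution you sampled from; it says nothing about the supremum over the (exponentially large) ball, nor about the specific perturbations produced by the universal poly-time adversary $\sfP$ that the hardness hypothesis is stated against (Definition~\ref{def:robust_hardness}). Uniform convergence cannot bridge this: even though the \emph{robust} loss class of poly-size circuits has polynomial VC dimension, you never get an upper bound on the \emph{empirical robust} loss of $h_{t'}$ --- you only certify empirical consistency for the non-robust loss on the recorded perturbations --- so ``adversarial Rademacher complexity'' has nothing to bite on. To contradict the paper's hypothesis you would need $h_{t'}$ to classify $\sfP$-perturbed fresh samples noticeably better than chance, and your argument does not deliver that unless you bake $\sfP$ itself into $F$ (label $\sfP(x_i)$ rather than random $z_{i,j}$), at which point further repairs are needed: the honest $F$ labels perturbations by $h_t(z_{i,j})$ while your simulated input labels them by $b_i$, and these differ on a constant fraction of bad balls, so over $m$ samples the simulated and real distributions can drift far apart and the inverter need not work on your simulation. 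Your construction also silently assumes the task distribution $\mathcal{T}$ is efficiently samplable \emph{together with} a description of its efficient robust classifier $h_t$, which is an extra hypothesis.

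For contrast, the paper avoids the inverter-to-learner reduction and any generalization argument altogether: it considers the two perturbed distributions $X_b = \{\sfP(x) : x \gets D_b\}$, notes they are statistically far (the robust classifier, which need not be known or learnable, witnesses large total variation distance) yet computationally indistinguishable to algorithms with sample access (this is literally the hardness-of-robust-learning hypothesis with its universal adversary $\sfP$), and then invokes the known theorem that a samplable pair of statistically-far but computationally-indistinguishable distributions implies one-way functions. If you want to salvage your approach, the cleanest fix is to move to that framing, or at least to replace random perturbations by $\sfP$-perturbations and replace the inversion/Occam step by an unpredictability-of-labels argument; as written, the ``bridging'' step is not a technicality but the missing core of the proof.
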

Furthermore, if the learning task satisfies certain natural properties, it gives us (a certain weaker form of) public-key cryptography as well!

It would be very surprising to us if public-key cryptography (and even one-way functions) arise out of natural classification tasks on, say, images. Thus, perhaps uncharacteristically for cryptographers, we offer a possible (optimistic) interpretation of this state of affairs: namely, that for {\em natural} learning tasks where there exists a robust classifer, it can also be efficiently found, we just haven't figured out the right algorithm yet.

An important caveat is due here: our definition of hardness of learning a robust classifier is a strong one: it requires that the perturbing adversary be constructive and universal. Our classification tasks do satisfy this definition, and that only makes them stronger. On the other hand, it does make our converse weaker. More details are given in \cref{sec:tech}.

\subsection{Related Work.}
\akshay{this needs work. Perhaps should go after techniques.  }
The works closest to ours are \cite{BPR18, BLPR18}. We discuss them last.

\subparagraph{Adversarial Examples.} The problem of adversarial classification ws first considered by \cite{dalvi2004adversarial}. Starting with \cite{szegedy2013intriguing}, there is a large body of work demonstrating the existence of small adversarial perturbations in neural networks that cause them to misclassify examples with high confidence. There have been various approaches proposed against such perturbations and many of them have been broken (see \cite{CW17, ACW18} and references therein).

A line of work which \citep{gilmer2018adversarial, FFF18, mahloujifar19can} shows that for certain learning tasks and distributions (eg spheres in $ \RR^n $ or product distributions), due to concentration of measure adversarial examples exist close to points in the distribution and can at times be found efficiently for classifiers that are not perfectly correct, pointing to the challenges of robust classification in this setting. These works show evidence for World 1: that for certain specific models and training algorithms, robust classifiers don't exist.
In our learning tasks, robust classification is possible, albeit computationally inefficient.

\cite{schmidt2018adversarially} demonstrate simple classification tasks (distinguishing between high dimensional gaussians) where the sample complexity of robust learning is higher than that of classical learning by a polynomial factor. Hence they show evidence for world 3. \cite{BPR18} show that this gap is essentially tight.
This work is similar in spirit to ours, with the resource being sample complexity instead of computational complexity in our case. In the case of computational complexity, we can essentially show exponential gap between the running time required for learning non-robustly vs learning robust classifiers.

%

\subparagraph{BPR/BLPR \citep{BPR18, BLPR18}.} \akshay{TODO: Rewrite this. }
In BPR, they showed two results. First, that in the world of polynomial sample complexity with no bounds on running time, learning a non-robust classifier and learning a robust classifer have the comparable sample complexity, if such a robust classifier exists. Second, they exhibit a learning task where while learning a robust classifier was information-theoretically easy with polynomial sample complexity, but doing so was difficult in the SQ model and it required exponentially many queries. This gives rise to a task where learning a robust classifier in a computationally efficient manner (in the SQ model) was a lot harder than doing so inefficiently.

In a followup work, BLPR they considered strengthening the second BPR result to show that under cryptographic assumptions, there exists a learning task which admitted efficient robust classifiers, but it was computationally infeasible to do so. They showed that there exists a classification task (over $ \zo^{n} $) where learning any non-trivial robust classifier is computationally infeasible while an efficient robust classifier exists that can correct $ O(1) $-bit error.
A description of their construction is given in \cref{sec:tech:blpr,sec:bpr-bbs}.

\akshay{Removed: 
\subparagraph{\citep{nakkiran}.}
\preetum{Added.}
This work introduces learning problems in World 2. The primary focus of
\cite{nakkiran} is on learning problems over $\mathbb{R}^n$, and
measures the magnitude of adversarial perturbation using the $\ell_\infty$-norm
over $\mathbb{R}^n$.
In our work, we consider learning problems over $\zo^n$, and measure
perturbations in Hamming distance.
Nevertheless, similar techniques are involved, and this paper is partially
merged with \cite{nakkiran}.
Moreover, \cite{nakkiran} explores classification in weaker models than
polynomial-time, and demonstrates a tradeoff between adversarial-robustness and accuracy for
some weaker models (e.g. linear classifiers).
In contrast, in this work we are primarily concerned with polynomial-time
classifiers.|
}

\section{Our Techniques}\label{sec:tech}
In this section, we give a high level description of our techniques. We begin by describing the BLPR classification task and its limitations. Then we describe the definition of robust classification and non-existence/unlearnability of such classifiers. We then describe several recipes for constructing tasks where robust classification is computationally intractable. \akshay{Moved it ahead. I was going to move it to the first, but the "BPR trick" attribution etc makes it simpler to move it second.}
In the first recipe, based on one-way functions, we show tasks where while efficient robust classifiers exist, but are hard to learn, thus proving \cref{infthm:owfs}.
The second recipe assuming average-case hard functions proves \cref{infthm:avg-case}, where no efficient robust classifiers exist. 
The final recipe is based on hardness assumptions on decoding noisy codewords /
lattices, namely Learning Parity with Noise (LPN) and Learning with Errors (LWE)
and proves \cref{infthm:avg-case,infthm:lwelpn} in different parameter regimes.

\subsection{The BLPR Classification Task}\label{sec:tech:blpr}
We sketch the \cite{BLPR18} classification task where it is difficult to learn a robust classifier. A more detailed description of their construction is given in Appendix \ref{sec:bpr-bbs}.

The key object in their construction is a ``trapdoor pseudorandom generator''. A pseudorandom generator $ \prg: \zo^n \rightarrow \zo^{2n}$ is an expanding function whose outputs are indistinguishable from truly random strings. That is, $ \set{\PRG(x) : x \gets \zo^n} \approx_c \set{y: y\gets \zo^{2n}} $.\footnote{We say that two families of distributions $ \set{X_n}_{n\in \NN} $ an $ \set{Y_n}_{n\in \NN} $ are \textdef{computationally indistinguishable} (denoted by $ \set{X_n} \approx_c \set{Y_n} $ or  $ X \approx_c Y $ for brevity) if for every polynomial time distinguisher $ \sfD $,
	$ |\prob[x\gets X_n]{\sfD(x)} - \prob[y\gets Y_n]{\sfD(y)} |\leq \negl(n) $. } A trapdoor pseudorandom generator has a hard-to-find trapdoor $ \state $ that allows distinguishing the output of the PRG from random outputs. That is, there exists a distinguisher $ \sfD $ such that (say),
\[ \prob[x \gets \zo^n]{\sfD(\state, \tprg(x)) = 1} - \prob[y \gets \zo^{2n}]{\sfD(\state, y) = 1} > 0.99\]
They show that the Blum-Blum-Shub Pseudorandom generator \citep{BBS86} has such a trapdoor.
Given a trapdoor PRG, their learning task $ D_0, D_1 $ is the following:
\[ D_0 = \set{(0,\tprg(x)) : x \gets \zo^n} \text{ and, } D_1= \set{(1, y) : y\gets \zo^{2n}} \ .\]
The first bit enables easy non-robust classification. The fact that there exists an inefficient robust classifier follows from a volume argument -- that the there are a few PRG outputs in a large domain. This implies that there is an inefficient robust classifier that tolerates $ O(\sqrt{n}) $-sized perturbations.
That a robust classifier is hard to learn follows from the perturbing adversary that sets the first bit to $ 0 $. A robust classifier has to distinguish between outputs of the PRG from random strings, without the trapdoor. This is infeasible by the security guarantee of the PRG.

Finally, what needs to be proved is that the trapdoor enables {\em robust classification}. The trapdoor indeed does enable a robust classifier that tolerates constant-sized pertubations (i.e., if any constant number of bits are altered) simply by exhaustive search among the polynomially many possible sets of perturbed bits. For a constant $ c $, the robust classifier given input $ y $ goes over all $ n^c $ words in the Hamming ball $y' \in B( y, c) $ and checks if the distinguisher $ \sfD(\state, y') = 1 $. If yes, output $ D_0 $ else output $ D_1 $. But this approach does not give a classifier beyond constant-sized errors because the running time is exponential in the number of errors corrected.

The primary limitation of trapdoor PRGs is that the trapdoor does not enable decoding the PRG output from the perturbed samples, only distinguishes PRG outputs from random strings.
Indeed, for the Blum-Blum-Shub trapdoor PRG (and related constructions such as the one of Micali and Schnorr~\citep{MS}) considered in BLPR, the question of whether there is any trapdoor that permits robust inversion is an open question in computational number theory. We refer the reader to \cref{sec:bpr-bbs} for discussions regarding related questions.
\akshay{added} To enable efficient decoding, their construction can be modified by using an error correcting code to make it robust to larger pertubations.

\subsection{Definitions: Robust Classification}
We start by describing the notion of robust classification and hardness of robust classification used.

\paragraph{Robust Classifier.} When we state that \emph{a robust classifier} exists (for given $ \eps $), we show the strongest notion: that there exists a classifier $ \sfR $ (efficient or inefficient, as specified) that classifies all input close to a random sample correctly:
\[\text{ For $ b \in \zo $, } \prob[x \gets D_b]{\sfR(x') = b \text{ for all }x'\in B(x, \eps)} > 0.99 \ .  \]

\paragraph{Non-Existence/Unlearnability of Robust Classifiers.}
When we describe \emph{the non-existence (or unlearnability) of robust classification}, we satisfy the strongest notion: that there exists a poly-time perturbation adversary $ \sfP $ whose perturbed examples cannot be classified better than chance by any efficient (or efficiently learned) classifier. That is, for any efficient $ \sfR $ (or $ \sfR \gets \learn^{D_0,D_1}(1^n) $),
\[ \prob[x\gets D_b]{\sfR(\sfP^{D_0, D_1}(x)) = b} < 0.5  + \negl(n) \ ,\]
where a $ \negl: \NN \rightarrow \RR $ is a function such that $ \negl(n)  < n^{-c}$ for all $ c\in \NN $ for large enough $ n $.  
See Definition \ref{def:robust_hardness} for a formal definition of hard to learn robustly. This definition has two key properties: it is \emph{constructive} (adversarial perturbations are found) and \emph{universal} (the same adversary works for all algorithms).

The negation of the robust classification definition suggests the following definition: for efficiently learned classifiers $ \sfR $, $ \prob[x\gets \cD_b]{B(x,\eps)\not\subseteq \sfR^{-1}(b)} < 0.99 $.
This definition is unsatisfying because it says nothing about the hardness of finding such misclassified examples. In particular, if such adversarial perturbations existed but were computationally hard to find, then the existence of adversarial examples is not an issue. Hence, we choose a constructive definition that requires such examples to be efficiently found. The fact that the adversary is universal only makes the counter examples stronger.


\subsection{Unlearnability From Pseudorandom Functions and Error Correcting Codes}
In this section, we construct a learning task where classification is easy, robust classifier exits, but is hard to learn. The primary ingredients of this construction are pseudorandom functions and error correcting codes. We introduce both the primitives and build the construction in stages.
A pseudorandom function family (PRF) \citep{GGM86} is a family of keyed functions $ F_k: \zo^n \rightarrow \zo $ where the key $ k \gets \zo^n $, that are indistinguishable from uniformly random functions to any polynomial time algorithm. That is, for every poly time algorithm $ \sfA $,
\[ \prob[k\gets \zo^n]{\sfA^{F_k}(1^n)} \approx \prob[U_n]{A^{U_n}(1^n)} \]
where $ U_n : \zo^n \rightarrow \zo $ is a uniformly random function. PRFs can be constructed from one-way fucntions.
Kearns and Valiant \citep{KV94} constructed a hard to learn classification task using pseudorandom functions as follows:
\[ D_0 = (x, F_k(x)) \text{ and, } D_1 = (x, 1- {F_k(x)})\]
The task essentially asks to efficiently learn a predictor for the pseudorandom function which is difficult. To transform this task to one that is hard to learn robustly, while an efficient robust classifier exists, we use error correcting codes.
Recall that an error correcting code has two algorithms $ (\encode, \decode) $ where $ \encode $ returns a redundant encoding of the message that the $ \decode $ algorithm can efficiently recovers the encoded message even when the encoded codeword is tampered adversarially to some degree. So, consider the following classification task: distinguish between error-corrected versions of the PRF:
\[ D_0 = \encode(x, F_k(x)) \text{ and, } D_1 = \encode(x, 1- {F_k(x)}) \ .\]
Note that this task has the following properties: (1) A robust classifier exits and, (2) a robust classifier is hard to learn. For the first property, consider the following robust classifier: the classifier given the secret key, first decodes the perturbed sample using the $ \decode $ algorithm and then checks if is of the form $ (x, F_k(x)) $ or $ (x, 1 - {F_k(x)}) $ and outputs which case it is. The robustness follows from the error correcting code. The fact that no classifier is learnable follows from the fact that the PRF is hard to predict, and thats exactly what the classifier has to do.
Finally, we want the task to be easy to classify non-robustly. Here we use the ``BPR trick'' (\cite{BPR18}). That is, we additionally append to each sample a bit indicating which distribution it was sampled from. That is,
\[ D_0 = (0, \encode(x, F_k(x))) \text{ and, } D_1 = (1, \encode(x, 1- {F_k(x)})) \ .\]
Now the samples are easy to classify non-robustly, simply output the first bit. Learning a robust classifier is hard, for that, consider the perturbing adversary that erases the first bit. For these samples, robust classification is identical to predicting the output of the PRF. This is difficult for any efficiently learned classifier. Hence, this gives us a task that that is easy to classify, has an efficient robust classifer and yet, any non-trivial robust classifier is hard to learn.

Note that because we have excellent error correcting codes, this recipe is maximally robust. We can pick a code that tolerates a constant fraction ($ \frac14 - \eps $) errors and still enable correct decryption \citep{GI01}. This can be further boosted to $ (\frac 12 - \eps ) $ by using list decoding instead of unique decoding and increasing the output size of the PRF to $ n $-bits. 
 We do not formally write this construction.  


\subsection{Non-Existence of Robust Classifiers from Average-Case Hardness}
\label{sec:tech:avgcase}
\preetum{Added.}\akshay{Edited the title, and blurb below. Moved the section ahead.}
This section describes a learning task for which no computationally efficient robust classifier exists, even though inefficient ones do, based on average-case hard functions, thus proving \cref{infthm:avg-case}. 

Let $g: \{0, 1\}^n \to \{0, 1\}$ be a function that is \emph{average-case hard},
such that no polynomial-time nonuniform algorithm can compute $z \mapsto g(z)$
noticeably better than random guessing.
For example, taking $g$ to be a random function $\{0, 1\}^n \to \{0, 1\}$
suffices.
Let $(\encode, \decode)$ be a good error correcting code, capable of decoding
from a constant fraction of errors.
Now, construct distributions $D_0, D_1$ as follows:
$$
D_0 = (0, \encode(x, g(x)))
\text{ and }
D_1 = (1, \encode(x, 1-g(x)))
$$
for $x \gets \{0, 1\}^n$ uniformly.
Note that these distributions are trivially distinguishable non-robustly.
However, with a perturbation adversary that destroys the first coordinate,
distinguishing $D_0$ from $D_1$ essentially requires computing the function $g(x)$,
which cannot be done efficiently. Thus, there is no efficient robust
classifier.
Moreover, an inefficient robust classifier exists, since one can decode the
error correcting code (correcting any adversarial errors) and compute $g(x)$. 

\akshay{Added.}
When using an average-case hard function, one limitation here is that the algorithm generating the samples from distributions $ D_0, D_1 $ is inefficient. This can be remedied by using one-way functions, because generating $ D_0, D_1 $ requires the algorithm to perform the simpler task of sampling $ (z, g(z)) $ for random $ z $'s, and not computing $ g(z) $ given $ z $, that the classifier has to do. In fact, this is precisely the difference between average-case $ \NP $ hardness, which requires us to generate hard instances, and one-way functions, which require generating hard instances along with their solutions. 
See \cref{sec:tech:avgcase} for more details.


\subsection{From Hardness of Decoding under Noise.}
\label{sec:tech:codes}
This section describes a proof sketch for \cref{infthm:avg-case,infthm:lwelpn}.
The problems Learning Parity with Noise (LPN) and Learning with Errors (LWE) have the following flavor: In both the problems a random code $ C $ (over $ \ZZ[2] $ in LPN, $ \Zq $ for a large prime in LWE) is specified by a matrix $ \mat A $:
\[ C = \{\vec s^T \mat A \} \text{ or, the dual form } C = \{\vecy : \mat A \vec y = \vec 0\} \ .  \]
Then the computational task is to distinguish a point close to the code from a uniformly random point in the space. The conjectured hardness of these problems can be used to construct a variety of cryptographic primitives. In the overview, we will describe the construction with the LPN assumption. The LWE construction is conceptually identical.
\paragraph{The Classification Task.} We begin by describing the classification task and then the rationale. The  task consists of two distributions on samples $ D_0, D_1 $ picked as follows: Pick a random linear code over $ \ZZ[2] $, $ C: \zo^{n} \rightarrow \zo^{8n} $, (described by the generator matrix $ \mat A $ or the parity check matrix $ \mat H $). Then,
\[ D_0 = \set{\vec y : \vec y\gets C} \text{ and, } D_1 = \set{\vec y + \vec 1 : \vec y \gets C} \ . \]
So, the task is to distinguish codewords of $ C $ from their affine shift ($ \vec 1 $ represents the all-ones vector).  The distributions are easy to classify non-robustly.  There exists an inefficient robust classifier because the distance between the two codes $ C $ and $ C + \vec 1 $ is large.

To show that a robust classifier is hard to learn,  consider
the perturbation adversary that adds random noise of varying size to the two distributions. Learning a robust classifier for this adversary is equivalent to distinguishing LPN samples from random. Hence any computationally efficient adversary cannot classify these examples better than chance.

Finally, we need to show that for a certain perturbation regime, no efficient robust classifier exists while for a different perturbation regime, an efficient robust classifier does exist. The latter is accomplished by the notion of ``trapdoor sampling'' where the code is sampled with a trapdoor that enables decoding noisy codewords (and hence robust classification too).

Below we describe the example in more detail and give a sketch of the arguments needed. Formal proofs are given in \cref{sec:lpn,sec:lwe}.
\paragraph{LPN Assumption.} The LPN hardness assumption states that: for $ m = \poly(n) $,
\[ (\mat A, \vec s^T \mat A + \vec e^T) \bmod 2 \approx_c (\mat A , \vec r) \bmod 2 \]
where $\mat A \gets \ZZ[2]^{n\times m} $ describes a random code, the secret $ \vec s \gets \ZZ[2]^n$ is drawn unifomly at random and each coordinate of error $ \vec e \gets \Ber(r)^m $ drawn from a Bernoulli distribution with error rate $ r $, i.e., probability of drawing $ 1 $ is $ r $.

\paragraph{Hardness Regimes and Trapdoors.}
The key parameter which controls the hardness of LPN is the distance of the close point from the code in the appropriate norm.\footnote{The specific norm is not crucial for the discussion below. Hamming is used for LPN while for LWE, the norm is obtained by embedding $ \Zq $ in $ \ZZ[] $ as $ \{-\floor{q/2}, \dots 0, 1, \dots \floor{q/2}\} $ and take the $ \ell_\infty $ norm on $ \ZZ[] $.} As the distance increases, the problem becomes harder. In the case of LPN, this distance is approximately $ m\cdot r $ where $ r $ is the error rate.
For most non-trivial parameter settings of the distance parameter, these two problems are \emph{believed to be computationally intractable}. That is, an efficient algorithm given a description of the random code cannot distinguish random points close to the code from random points in the space.

Along with their conjectured computational hardness, we are interested in another property of these problems, \emph{the existence of a trapdoor}: that is, can we sample the code along with some polynomial-size side information that lets us distinguish efficiently random points from points close to the code. This information usually is a ``short basis'' for the dual code.
The trapdoor property has two important regimes: the ``public-key'' regime and the ``private-key'' regime. In the case of LPN, the public-key regime corresponds to error rate $ r = O(1/\sqrt{n}) $ while the private-key regime translates to constant error rates, e.g., $ r = 0.1 $.
The public key regime of parameters enables construction of advanced cryptographic primitives, including public key encryption. On the other hand, in the private-key regime, we know constructions of one-way functions and symmetric key cryptography, but not much more.

Importantly for us, in ``public-key'' parameter regime, such a trapdoors exists and can be sampled efficiently. On the other hand,  in the private-key regime, it is conjectured that no such trapdoor exists.
Traditionally this problem is studied as the problem of \emph{decoding linear codes with preprocessing} (for LPN) and \emph{closest vector problem with preprocessing} (for LWE). In the problem of decoding linear codes with preprocessing, an inefficient algorithm $ \preprocess $ performs arbitrary preprocessing on the given linear code (described by the matrix $ \mat A $) and has to come up with a short polynomial-sized trapdoor for the code. Later the $ \decode $ algorithm has to use this trapdoor to efficiently find the codeword close to a given input. This problem and the closest vector problem (is the same problem, on lattices instead of codes)  are $ \NP $-hard to approximate in the worst-case  \citep{BruckN90,micciancio2001hardness, Regev04}.

We require an average-case variant of the problem termed as the hardness of \emph{LPN with Preprocessing}. 
The assumption is stated more formally in Assumption \ref{assn:lpnp}. This assumption can be used to construct a task where no efficient robust classifier exists. The task is very similar to the one below where a efficient robust classifier exists but is hard to find, except with higher levels of noise. More details in \cref{sec:lpn}. 

\paragraph{Task with an Hard-to-Learn Efficient Robust Classifier.} We now turn to the problem of constructing learning tasks where an efficient robust classifier exists, but is hard to learn. 
It consists of distinguishing between codewords ($ D_0 = \{\vec s \mat A\} $) from an affine shift of the codewords ($ D_1 = \{\vec s \mat A + \vec 1 \} $ where $ \vec 1 $ is the all-ones vector). That is, for a random matrix $ \mat A \in \ZZ[2]^{n \times m} $ where $ m = 8n $,
\[ D_0  =  \{\vec s^T \mat A : \vec s \in \zo^ n\} \text{ and, } D_1  =  \{\vec s^T \mat A + \vec 1: \vec s \in \zo^ n\} \]
We want to show that this task exhibits an efficient robust classifier.
For that, we need access to a trapdoor. In the case of LWE, such algorithms are known \citep{GPV08} and proven to be exremely fruitful (see \cite{Pei16} and references therein). This LWE trapdoor is a zero-one matrix $ \mat T \subseteq \zo^{m \times m-n} $ over $ \Zq $ such that $ \mat A \mat T = \vec 0 \pmod q$. Because $ \mat T $ is a zero-one matrix, given any adversarially perturbed sample $\tilde{\vec y} =  \vec s^T \mat A + \vec e^T $, multiplying by $ \mat T  $ results in $ \tilde{\vec y}  \mat T = \vec e^T \mat T $ which is a vector with small entries in each coordinate. And this can be checked to distinguish LWE samples from random.

In the case of LPN, we don't know how to perform such trapdoor sampling: where a uniformly random matrix $ \mat A $ is sampled along with such a trapdoor. Instead we rely on a computational variant of this. We can sample a matrix $ \mat H \in \ZZ[2]^{n\times 8n}$ that is indistinguishable from a random matrix along with such a ``short'' trapdoor: a matrix $ \mat E $ where each row and colum of $ \mat E$ has hamming weight $ O(\sqrt m) $. See \cref{lem:lpntrapdoor} for more details. This then allows for a similar construction. The perturbing adversary $ \sfP $ again adds random noise, this time of a lower magnitude though.
\[ \sfP(\vec y): \text{Output $ \tilde{\vec y} = \vec y + \vec e$ where $ \vec e \gets \ZZ[2, \ham = 0.1\sqrt m]^m $} \ . \]
Note that earlier, we added $ 0.1 m$ bits of noise, instead here we are adding $ 0.1 \sqrt m $ bits. This level of noise places the problem in the ``public-key'' regime of parameters. Furthermore, given the trapdoor, in this case, we can recover which distribution the unperturbed sample was sampled from, giving us the required robust classifier. See \cref{sec:lpn} for more details.

Again, it is clear that learning a non-robust classifier is easy. The hardness of LPN assumption implies that it is hard to learn a robust classifier. This is in contrast to the previous construction where no efficient robust existed. Here, the trapdoor gives us an efficient robust classifier, but the hardness of LPN implies that such a classifier is hard to learn. In fact, any efficiently learned classifier cannot do better than chance.

One thing to note is that this efficient robust classifier is not ``maximally robust'', meaning that while an inefficient robust classifier can tolerate $ 0.1m $ bits of noise and still classify correctly, the efficient classifier can tolerate $ 0.1\sqrt m $ bits of noise. This is similar in the case of LWE as well, where there is a gap between noise the trapdoor can support (about $ q/m $ in the $ \ell_\infty $ norm) against the maximally robust limit ($ \Omega(q)$). This is not surprising because decoding random linear codes is harder than decoding specifically designed codes and hence the trapdoors do not achieve optimal decoding.

A feature of this construction is that an efficient algorithm can learn to not only distinguish the samples from distributions  $ D_0 $ and $ D_1 $, it can easily learn to generate samples from the two distributions as well.

\paragraph{Comparing Recipes.}
\preetum{Edited this section. Probably needs more edits.}\akshay{Made changes.}
There are three key differences between the recipes.
The first difference is in the underlying hardness assumption.
The first two constructions are based on weaker assumptions: namely general assumptions that one-way functions exist (or average-case hard function respectively) rather than the specific assumptions of LWE and LPN.

The second difference is that the distributions based on LWE/LPN facilitate learning in a stronger sense, that it is possible to sample from the non-robust distributions after seeing polynomially many samples.  In construction I based on one-way functions, we do not learn either $ D_0 $ or $ D_1 $ in that strong sense.
In fact, after seeing polynomially many samples, efficient sampling algorithms have no non-trivial advantage with the other recipes. As pointed out in in construction II, it is possible to support generation, albeit using a slightly stronger assumption that one-way functions exist. 

The third difference is that of naturalness: we feel that the LWE/LPN recipe gives a more natural learning task. This is obviously a subjective notion. This learning task of distinguishing noisy codewords from random has existed independent of the notion of robust classification and arises naturally in other contexts. 


\subsection{Converse: Cryptography from Hardness of Robust Classification.}
\label{sec:tech:converse}

In this section, we describe how \cref{infthm:converse} is proved. 
The key result we rely on here is that we can construct one-way functions from any pair of samplable distributions that are statistically far and computationally indistinguishable.

\begin{theorem} 
Given a pair of distributions $ (X_0, X_1) \gets \cF $ over $ \cX $ that are statistically far,  i.e., 
$ d_{TV}(X_0, X_1)  > 0.9 $ 
and computationally indistinguishable. That is for every polynomial time adversary $ \sfA $ that gets sample access to the distributions,
\[  \EE_{x\gets X_0}\sfA^{(X_0, X_1)}(x) - \EE_{x\gets X_1}\sfA^{(X_0, X_1)}(x) < 0.1 \]
Then one-way functions exist.\footnote{The constants in the equations are fairly arbitrary. We can replace them by any constants $ \alpha, \beta $ where $ \alpha  > \beta $ and the result holds (see \cite{NR06,BDRV19js}).}
\end{theorem}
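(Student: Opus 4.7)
The plan is to define a candidate one-way function $f$ directly from the efficient samplers of $\cF$ and of $(X_0,X_1)$, and then show that any efficient inverter for $f$ yields an efficient distinguisher between $X_0$ and $X_1$ whose advantage vastly exceeds the $0.1$ allowed by the hypothesis. Fix polynomial-time samplers $S_\cF$ for $\cF$ and $S_b^F$ for $X_b$ given the pair $F=(X_0,X_1)\gets\cF$. For a polynomial $k=k(n)$ (chosen to match the query complexity of the indistinguisher), define
\[
f(r,\ b,\ r^*,\ r_0,\ r_1)\ =\ \bigl(F,\ S_b^F(r^*),\ S_0^F(r_0^{(1)}),\ldots,S_0^F(r_0^{(k)}),\ S_1^F(r_1^{(1)}),\ldots,S_1^F(r_1^{(k)})\bigr),
\]
where $F=S_\cF(r)$. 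Crucially, the bit $b$ appears in the input of $f$ but not in its output, and the $2k$ auxiliary samples are designed to simulate the sample-access oracle granted to the indistinguishability adversary.

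Assuming for contradiction that some polynomial-time algorithm $\mathsf{Inv}$ finds a preimage of $f$ with non-negligible probability $\eps(n)$, I would build a distinguisher $\mathsf{Dist}^{(X_0,X_1)}$ as follows: on input $x$, query the oracle $k$ times on each distribution to produce the auxiliary block, assemble the tuple in the above format with $x$ placed in the challenge slot, run $\mathsf{Inv}$ on this tuple, and output the bit $b'$ extracted from the returned preimage. Because $\mathsf{Dist}$'s input to $\mathsf{Inv}$ is distributed identically to $f$ on uniform inputs, $\mathsf{Inv}$ succeeds with probability $\eps(n)$ in this reduction as well.

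The heart of the analysis is the following preimage-counting observation. Conditioned on any fixed image $y=(F,x,\mathrm{aux})$, the preimages of $y$ under $f$ are partitioned by the challenge bit, and the number of preimages with $b=0$ versus $b=1$ is exactly proportional to $\Pr[X_0^F=x]:\Pr[X_1^F=x]$. Hence an idealized inverter returning a uniformly random preimage outputs $b'$ according to the Bayes posterior, and a direct calculation (splitting $\tfrac{p_0}{p_0+p_1}=\tfrac12+\tfrac{p_0-p_1}{2(p_0+p_1)}$ and applying Cauchy--Schwarz) shows that this predictor induces a distinguishing advantage of at least $d_{TV}(X_0,X_1)^2 > 0.81$, which exceeds $0.1$ by far.

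The main obstacle is that $\mathsf{Inv}$ is only guaranteed to return \emph{some} preimage, not a uniformly random one, and could in principle systematically favor one value of $b$, yielding no distinguishing power. To bridge this gap I would invoke the classical theorem of Impagliazzo--Luby that any distributionally one-way function implies a standard one-way function: applied contrapositively, if $f$ is not one-way then there is an inverter whose output is statistically close to a uniformly random preimage, reducing to the idealized case above. Alternatively, one can appeal directly to the preimage-to-distinguisher reductions in \cite{NR06,BDRV19js}. Either route converts an inverter of success probability $\eps$ into a distinguisher with advantage $\Omega(\eps \cdot d_{TV}^2)$, contradicting the $0.1$ indistinguishability bound for sufficiently large $n$ and establishing that $f$ is one-way.
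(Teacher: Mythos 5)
The paper itself does not prove this theorem: it invokes it as a known result (the folklore statement for the regime $\alpha^2>\beta$, attributed to the exercise in \cite{Gol01}, with \cite{NR06,BDRV19js} cited for sharper constants), so your attempt is measured against that standard argument. Your overall plan is the right one: assume efficient samplers (a necessary assumption, implicit in the statement), build a candidate function from them, use Impagliazzo--Luby to pass from ``no one-way functions'' to a distributional inverter, and convert posterior sampling of $b$ into a distinguisher via the Cauchy--Schwarz computation giving advantage at least $d_{TV}^2>0.81>0.1$.

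There is, however, a genuine flaw in your construction: you place the description $F$ of the pair in the image of $f$. This breaks the argument in two ways. First, the reduction cannot be run: the hypothesis only constrains adversaries with \emph{sample access} to $(X_0,X_1)$, and such an adversary is never given $F$, so your $\mathsf{Dist}$ cannot ``assemble the tuple in the above format,'' and the claim that its input to $\mathsf{Inv}$ is distributed identically to $f$'s output is unjustified. Second, and more fundamentally, with $F$ in the clear the candidate function need not be one-way at all under the stated hypothesis: instantiate $\mathcal{F}$ with the paper's PRF-based task, where the description of $F$ is (or contains) the key $k$, with $X_0=(x,F_k(x))$ and $X_1=(x,1-F_k(x))$. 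Each such pair has $d_{TV}=1$ and is computationally indistinguishable under sample access, yet from an output $(k,(x^*,F_k(x^*)\oplus b),\mathrm{aux})$ an inverter recovers every input component, including $b$, in polynomial time. So the fix is not cosmetic: $F$ must be dropped from the image, and the auxiliary block is precisely what stands in for it.

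Once $F$ is removed, though, your preimage-counting step is no longer correct as stated: conditioned on an image $(x,\mathrm{aux})$, the preimage counts for $b=0,1$ are proportional to probabilities under the posterior \emph{mixture} over all pairs in $\mathcal{F}$ consistent with $\mathrm{aux}$, not to $\Pr[X_0^F=x]:\Pr[X_1^F=x]$ for the true $F$. The advantage you then obtain is governed by the total-variation distance between the joint distributions $(\mathrm{aux},\text{challenge})$ under $b=0$ and $b=1$, which is not automatically $\geq 0.9$ just because every individual pair satisfies $d_{TV}(X_0,X_1)>0.9$; mixtures can be far closer than their components. Closing this gap is the real content of the \cite{NR06,BDRV19js}-style argument: choose the number of auxiliary samples $k$ to be a sufficiently large polynomial relative to the description length of $F$, so that on average the posterior-predictive distributions are close to the true $X_0,X_1$ and the joint Bayes advantage nearly matches the per-pair distance before the squaring loss. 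Without that step (or with $F$ in the image) the proof does not go through; the remaining ingredients --- the Cauchy--Schwarz bound and the appeal to distributional one-wayness (phrased correctly as ``if one-way functions do not exist, then $f$ admits a distributional inverter,'' rather than ``if $f$ is not one-way'') --- are standard and fine.
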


In order to construct such distributions, we rely on the learning task (given by $ (D_0, D_1) $) and the perturbation adversary $ \sfP $. The distributions we consider are
$$X_0 = \set{\sfP(x) : x \gets D_0} \text{ and, }  X_1 = \set{\sfP(x) : x\gets D_1} \ .$$
Note that because efficient robust classifiers are hard to learn,  no efficient algorithm $ \sfA $ (that knows $ \sfP $ and gets access to the distributions $ D_0,D_1 $) can distinguish between the two distributions . On the other hand, because a robust classifier exists, these two distributions are statistically far from each other. This implies that one-way functions exist.

\ifconf
\paragraph{Acknowledgments.} We would like to thank Shafi Goldwasser and Nadia Heninger for discussions regarding inversion of the (noisy) BBS PRG.

	\pagebreak
	\bibliography{bib}
	\appendix
\fi

\section{Definitions}

We use lowercase letters for values, uppercase for random variables, uppercase calligraphic letters (e.g., $\mathcal{U}$) to denote sets, boldface for vectors (e.g., $\vec{x}$), and uppercase sans-serif (e.g., $\algfont{A}$) for algorithms (i.e., Turing Machines). We let $\poly$ denote the set all polynomials.  A function $\nu \colon \naturals \to [0,1]$ is \textit{negligible}, denoted $\nu(n) = \negl(n)$, if $\nu(n)<1/p(n)$ for every $p\in\poly$ and large enough $n$.
Given a random variable $X$, we write $x\gets X$ to indicate that $x$ is selected according to $X$. Similarly, given a finite
set $\mathcal{S}$, we let $s\gets \mathcal{S}$ denote that $s$ is selected according to the uniform distribution on $\mathcal{S}$. For an algorithm $ \sfA $, we denote by $ x \gets \sfA $ the experiment where $ x $ is sampled by feeding a uniformly random input to $ \sfA $ from its input domain. 
We say that two families of distributions $ \set{X_n}_{n\in \NN} $ an $ \set{Y_n}_{n\in \NN} $ are \textdef{computationally indistinguishable} (denoted by $ \set{X_n} \approx_c \set{Y_n} $ or  $ X \approx_c Y $ for brevity) if for every polynomial time distinguisher $ \sfD $, 
$$ |\prob[x\gets X_n]{\sfD(x)} - \prob[y\gets Y_n]{\sfD(y)} |\leq \negl(n) $$

%
%
%
%
\subsection{Learning \& Classification}
\begin{definition}[Classification] For a family of classification tasks $ \cF $ over $ \cX $ is easy to classify if there exists a learning algorithm that given  $ \poly(n) $ i.i.d.\ samples from a pair of distributions $ (D_0, D_1 ) \in \cF$ supported on $ \cX $, outputs an efficiently computable classifier $ \sfA : \cX \rightarrow \zo $ such that, 
	$$
	\prob[X\gets D_b]{\sfA(x) = b} \geq 0.99
	$$
\end{definition}

We want to consider other notions of learning distributions as well, in order to make more refined distinctions between learning distributions. 
The following definition for learnability of discrete distributions is   from \cite{kearns1994learnability}.  
\begin{definition} For a distribution $ D $ over a discrete domain $ \cX $, 
\begin{enumerate}
\item \textbf{Generator.} A circuit $ \sfG:\zo^m \rightarrow \cX $ is an \textdef{$ \eps $-good generator} for $ D $ if 
\[ \KL(D \| \sfG(U)) \leq \eps \] 
where $ \sfG(U) $ denotes the distribution obtained by evaluating $ \sfG $ on a uniformly random input. 
\item \textbf{Evaluator. } A circuit $ \sfE: \cX \rightarrow \RR_{\geq 0} $ is an \textdef{$ \eps $-good evaluator} for $ D $ if 
\[ \KL(D \| \sfE) \leq \eps \] 
where $ \sfE $ denotes the distribution obtained by sampling with probability density function $ \sfE $. 
\end{enumerate}
\end{definition}

\begin{definition} A class of distributions $ \cF = \set{F_n} $ over a discrete domain $ \cX = \set{ \cX_n} $ is \textdef{$(\eps,\delta)$-efficiently learnable with a generator (or evaluator resp.)}
if there exists a polynomial time algorithm $ \gen $ that given oracle access to any $ D_n \in \cF_n $ runs in time $ \poly(n, 1/\delta, 1/\eps) $ and outputs $ \sfG $ (or $ \sfE $ resp.) such that with probability $ \geq 1-\delta $ over the randomness of $ \gen $ and samples, $ \sfG $ ($ \sfE $ resp.) is an $ \eps $-good generator (evaluator resp.) of $ D $. 
\end{definition}

In our examples, we seek to find distributions where the gap between ease of learning the actual distributions and that of the adversarially perturbed distributions is maximized.

\subsection{Hardness of Efficient Robust Classification}
We start by recalling the notion of robust classification. Then, 
we consider two ways of formalizing the difficulty of efficient robust classification: (1) no \emph{efficiently computable} robust classifier exists, (2) an efficient robust classifer exists, but \emph{it is hard to learn one efficiently}. 

\begin{definition} 
Consider a classification task given by two distributions $ D_0, D_1 $ over $ \cX^n $. Let $ \|\cdot \| $ be a norm over the space $ \cX^n $ and $ \eps > 0 $. Let $ \sfR: \cX^n \rightarrow \zo $ be a classifier. The classifier $ \sfR$ is $ \eps $-\textdef{robust} if 	
\begin{equation*}
	\prob[X\gets D_b]{\sfR(\tilde x) = b \text{ for all $ \tilde x \in B(x, \eps) $}} \geq 0.99
\end{equation*}
\end{definition}

In the definition above, $ B(x, \eps) $ is all points that are $ \eps  $ distance from $ x $ in the given norm. Here, we will generally be concerned with  Hamming distance and the $ \ell_1 $ norm. 
%

\begin{definition}[Hardness of Robust Classification] \label{def:robust_hardness}
Consider a family of classification tasks, defined by two distributions $ D_0 $, $ D_1 $ over $ \cX^n $ sampled from a distribution over learning tasks $ \sf{Samp} $. Let $ \|\cdot \| $ be a norm over the space $ \cX^n $ and $ \eps > 0 $. We consider the following notions of difficulty of robust classification: 
\begin{enumerate}
\item \textbf{No efficient $ \eps $-robust classifier exists.}  
There exists a polynomial-sized perturbation algorithm $ \sfP $, such that for every polynomial sized classifier $ \sfR $, the perturbed samples are hard to classify. That is,  
\[ \prob[ ]{ \sfR^{D_0, D_1}(\tilde x) = b} \leq \frac12 + \negl(n)
\]
where the perturbed sample $ \tilde x $ is generated by sampling  $x \gets D_b \text{ for a random }b\gets \zo $ and is then perturbing $ \tilde x \gets \sfP^{D_0, D_1}(x)$. 

\item \textbf{Efficient $ \eps $-robust classifier is hard to learn.} There exists a polynomial-sized perturbation algorithm $ \sfP $, such that every polynomial-time learning algorithm $ \learn $ that outputs a polynomial sized classifier $ \sfR $, the perturbed samples are hard to classify for $ \sfR $. That is,  for a learning task $ D_0, D_1 $ sampled by $ \samp $ and robust classifer $ \sfR \gets \learn^{D_0, D_1}(1^n) $ output by $ \learn $, 
\[ 	\prob[ ]{ \sfR(\tilde x) = b} \leq \frac12 + \negl(n)
\]
where the perturbed sample $ \tilde x $ is generated by sampling $x \gets D_b \text{ for a random }b\gets \zo $ and is then perturbing $ \tilde x \gets \sfP^{D_0, D_1}(x)$. The probability is over the entire experiment from sampling the learning tasks to the randomness of the perturbation algorithm and the classifier. 
\end{enumerate}	
\end{definition}

\paragraph{Discussion.} 
An alternate definition of hard to classify robustly would be the negation of robust classification. That  definition takes a following form:  
$$
\prob[x \gets D_b]{\exists \tilde x \in B(x, \eps) \text{ such that, } \sfR(\tilde x) \neq b} \geq 0.5
$$	
This definition is unsatisfactory because it does not say anything about how difficult it is to find such perturbations. In the event when such examples are not efficiently discoverable, we do not have to worry about these. 

In the definitions used, the perturbing adversary is both efficient and universal. Efficiency is a very natural property to have, in that if the adversarial examples are computationally hard to find, then they are less of a concern. The universality property says that there is a single perturbation adversary that succeeds against all efficient classifiers. This is a strong requirement. This makes our robustly hard to learn tasks better: that they have a unique perturbation adversary that is independent of which classification algorithm is used. On the other hand, it makes our converse  results constructing one-way functions from hard to learn robust tasks weaker, because they only hold for such robustly hard to learn tasks, with universal perturbation adversaries. 

It is possible to have a perturbation adversary $ \sfP $ that is efficient but not universal. The perturbation adversary gets oracle access to the classifier and has to then output a misclassified example. This is a weaker requirement than \cref{def:robust_hardness}. 
We do not know if such a definition also implies cryptography.

{
	\renewcommand{\trapsamp}{\mathsf{LPNTrapSamp}}
\newcommand{\matstack}[1]{\begin{bmatrix}#1\end{bmatrix}}
\newcommand{\leftspan}{\operatorname{rowspan}}

\section{Learning Parity with Noise}\label{sec:lpn}

\subsection{Assumption Definition and Discussion}
Let $\ZZ[2, \sf{Ham} = t]^{m} $ denote vectors in $ \ZZ[2]^m $ with Hamming weight exactly $ t $. We will consider  Hamming weight as our norm in this setting. 
\begin{definition}[Learning Parity with Noise Problem (LPN)]\label{def:lpn}
	For $ n,m,t \in \NN $, an LPN sample is obtained  by sampling a matrix  $ \mat A \gets \ZZ[2]^{n \times m} $, a secret $ \vec{s} \gets \ZZ[2]^n $, and an error vector $ \vec{\epsilon} \in \ZZ[2, \sf{Ham} = t]^{m} $ and outputting $ (\mat A, \vec s^T \mat A + \vec e^T) $.  
	
	We say that an algorithm solves $ \mathsf{LPN}_{n,m,t} $ if it distinguishes an LPN sample from a random sample distributed as $\ZZ[2]^{n\times m} \times \ZZ[2]^{1\times m} $. 
\end{definition}

\begin{assumption}[Learning Parity with Noise Assumption]\label{assn:lpn}
The Learning Parity with Noise (LPN) assumption assumes that for  $ m = \poly(n) $ and $ t = \theta(m/\sqrt n)$, the LPN samples are indistinguishable from random. That is, for every efficient distinguisher $ \sfD $,  
\[ \big|\prob[{\vec s\gets \ZZ[2]^n \\ \vec e \gets \ZZ[2, \ham=t]^{m} }]{\sfD(\mat  A, \vec s^T \mat A + \vec e^T )= 1 } - \prob[{\vec r\gets \ZZ[2]^m}]{\sfD(\mat  A, \vec r )= 1 }\big| < \negl(n) \]

\end{assumption}

This regime of parameters $ m = \poly(n) $ and $ t = \theta(m/\sqrt n) $ is what is traditionally used to construct  public key encryption from the LPN assumption. Next, we consider the LPN problem with preprocessing: in this variant of the problem, an inefficient algorithm $ \preprocess $ is allowed to process the matrix $ \mat A $ arbitrarily to construct a ``trapdoor''. Then the distinguisher is asked to distinguish LPN sample $ (\mat A, \vec s ^T \mat A + \vec e) $ from random. The assumption states that this is difficult for higher error rates.  

\begin{definition}[LPN with Preprocessing Problem (LPNP)]\label{def:lpnp}
We say that a pair of algorithms  $ (\preprocess, \sfD) $ where $ \preprocess $ is possibly inefficient and $ \sfD $ is efficient,  solves $ \mathsf{LPN}_{n,m,t} $ if $ \sfD $ can distinguish an LPN sample from a random sample given the trapdoor $ \state $ generated by $ \preprocess(\mat A) $.
\end{definition}

The Learning Parity with Noise problem is hard even with preprocessing in the constant noise regime. 

\begin{assumption}[LPN with Preprocessing (LPNP)] \label{assn:lpnp}
Let $ m = \poly(n) $ and $ t = r \cdot m $ for any constant $ r $. For every pair of algorithms $ (\preprocess, \sfD) $ with a possibly inefficient algorithm $ \preprocess $ and efficient $ \sfD $, the following experiment is performed: Sample $ \mat A \gets \ZZ[2]^{n\times m} $ and get $ \state \gets \preprocess(\mat A) $. Then, the distinguisher $ \sfD $ given $ \state $ cannot distinguish the LPN samples from random. That is for large enough $ n $,
	\[ \Big|\prob[{\vec s\gets \ZZ[2]^n \\ \vec e \gets \ZZ[2, \ham=t]^{m} }]{\sfD(\state, \mat  A, \vec s^T \mat A + \vec e^T )= 1 } - \prob[{\vec r\gets \ZZ[2]^m}]{\sfD(\state, \mat  A, \vec r )= 1 }\Big| < \negl(n) \]
where the probability is over the code $ \mat A $, $ \vec s, \vec e , \vec r $ and the randomness of the distinguisher $ \sfD $. 
\end{assumption}

\paragraph{Discussion.}
The most important parameter of the LPN problem is its error rate, that is $r =  t/m $. The higher the error rate, the more difficult the problem. 
There are two important regimes of the error rate: $ r $ is a constant and $ r = o(\frac1{\sqrt{n}}) $. When the error rate is a constant, the hardness of LPN in this regime implies one-way functions and hence symmetric key cryptography. We do not know how to base public key encryption on error rates in this regime. When the error rate decreases below $ O(\frac1{\sqrt n}) $, we can construct public key encryption from this problem. For error rates below $ \log n/n $, the problem becomes easy. The best known algorithms for solving LPN are due to Blum Kalai and Wasserman \citep{BKW03} which solves LPN in time $ 2^{O(n/\log n)} $ requiring $ 2^{O(n/\log n)} $ samples; and Lyubashevsky \citep{L05} which solves LPN in time $ 2^{O(n/\log\log n)} $ with polynomially many samples. For structured LPN samples, more efficient algorithms are known \cite{AG11}. Our error distributions are not structured. 

Note that the lesser used variant of LPN is used here, in that we insist that the Hamming weight of the error vector is exactly $ t $ instead of a random variable. This is equivalent to the standard formulation \citep{jkpt12}.\footnote{In the search version of the problem where the adversary has to find $ \vec s $ given $ \mat A, \vec s^T \mat A + \vec e^T  $, these two versions are equivalent as $ t $ takes polynomially many values, hence we can go over all polynomially-many and try solving each exact version).} This is done for convenience and the example can be translated to the definition of LPN where the error vector is drawn from a product distribution. 

We also consider a the preprocessing variant of Learning Parity with Noise. In this variant, the adversary is allowed to preprocess the code and generate a small ``trapdoor'' to the code. Then an efficient adversary is tasked with distinguishing the LPN samples from random. The preprocessing variant of LPN assumption states that even this is hard in the constant error regime, that is when $ t/m $ is a constant. It is known that decoding linear codes is NP-hard in the worst case \citep{BruckN90}. The search analog of LPN is precisely the average-case variant of this question and is conjectured to be hard in the regime of constant noise rate. 


\paragraph{Trapdoor for Efficient Decoding.} In the public key regime, we want to show that trapdoors exist that enable effient distinguishing of LPN samples. We state the result next: that there is a way to sample a random matrix $ \mat H $ that is indistinguishable from a random matrix such that it has a trapdoor that enables efficient distinguishing. 

\begin{lemma}[Computational Trapdoor Sampling] \label{lem:lpntrapdoor}
Consider the following algorithm $ \trapsamp $ such that, $ \trapsamp $ on input $ (n, t) $ with $ t = \theta(\sqrt n) $ does the following: \\  
\noindent
\begin{minipage}{0.9\textwidth}
	\medskip
	\underline{$ \trapsamp(n, t)$}:
	\medskip
	\begin{enumerate}[noitemsep,nolistsep]
		\item Sample $ \mat A \gets \ZZ[2]^{n \times m} $, $ \mat S \gets \ZZ[2]^{n\times n} $ and $ \mat E  \in \ZZ[2]^{n \times m}$ where $ \vec e_{(i, \cdot)} \gets \ZZ[2, \ham = t]^m $. 
		\item Output $	\mat H = \matstack{\mat A \\\mat S\mat A + \mat E } , \mat E$
	\end{enumerate}
	\medskip
\end{minipage}	

\noindent
The algorithm has the following properties: 
	\begin{enumerate}[noitemsep]
		\item $ \leftspan(\mat E) \subset \leftspan(\mat H)$ where $ \leftspan(\mat T)  = \set{\vec s \mat T : \vec s \in \ZZ[2]^n}$. 
		\item The matrix $ \mat H $ is computationally indistinguishable from uniformly random matries. That is, 
		$$ \set{ \mat H \gets \trapsamp(n,t)} \approx_c \set {\mat U \gets \ZZ[2]^{2n \times 8n} }$$
		\item With overwhelming probability over the randomness of the algorithm, it outputs $ \mat E $ such that every column of $ \mat E $ has Hamming weight at most $ t $  and every row of $ \mat E  $ has Hamming weight exactly $ t $. 
	\end{enumerate}
\end{lemma}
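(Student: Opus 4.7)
\textbf{Proof plan for \cref{lem:lpntrapdoor}.} I would prove the three properties in order, starting with the easiest.

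\textbf{Property (1): Row span containment.} This is immediate from the structure of $\mat H$. Over $\ZZ[2]$, the matrix identity
\[ \bigl[\,\mat S \;\big|\; \mat I_n\,\bigr] \cdot \mat H \;=\; \mat S \mat A \;+\; (\mat S \mat A + \mat E) \;=\; \mat E \]
holds (addition and subtraction coincide mod $2$), so every row of $\mat E$ is an explicit $\ZZ[2]$-linear combination of rows of $\mat H$. Hence $\leftspan(\mat E)\subseteq \leftspan(\mat H)$.

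\textbf{Property (2): Pseudorandomness of $\mat H$.} The plan is a standard hybrid argument over the $n$ rows of the bottom block of $\mat H$, reducing to the single-secret LPN \cref{assn:lpn}. Define hybrid distributions $\mathsf{H}_0,\mathsf{H}_1,\ldots,\mathsf{H}_n$ on matrices in $\ZZ[2]^{2n\times m}$ as follows: in $\mathsf{H}_i$ sample $\mat A\gets \ZZ[2]^{n\times m}$, let the first $i$ rows of the bottom block be uniform in $\ZZ[2]^m$, and let rows $i+1,\ldots,n$ of the bottom block equal $\vec s_j^T\mat A+\vec e_j^T$ with independent $\vec s_j\gets\ZZ[2]^n$ and $\vec e_j\gets\ZZ[2,\ham=t]^m$. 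Then $\mathsf{H}_0$ is precisely the $\mat H$-marginal of $\trapsamp(n,t)$ and $\mathsf{H}_n$ is the uniform distribution on $\ZZ[2]^{2n\times m}$. Any distinguisher between $\mathsf{H}_{i-1}$ and $\mathsf{H}_i$ yields an LPN distinguisher: given a challenge $(\mat A, \vec c)$, place $\vec c$ in row $i$ of the bottom block, sample the remaining rows honestly (LPN samples for rows $>i$, uniform for rows $<i$), and forward the resulting matrix. By \cref{assn:lpn} each such advantage is $\negl(n)$, so the total advantage is at most $n\cdot \negl(n)=\negl(n)$.

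\textbf{Property (3): Sparsity of the trapdoor $\mat E$.} Row weights are exactly $t$ by construction, since each row is drawn from $\ZZ[2,\ham=t]^m$. For column weights, fix a column index $j\in[m]$. By the symmetry of the uniform distribution over $\ZZ[2,\ham=t]^m$, each entry $E_{i,j}$ is $1$ with marginal probability $t/m=t/(8n)$, and entries in distinct rows are independent because rows are sampled independently. Thus the column weight $W_j:=\sum_{i=1}^n E_{i,j}$ is a sum of $n$ independent Bernoulli variables with mean $\mu=t/8$. A Chernoff bound gives $\Pr[W_j>t]=\Pr[W_j>8\mu]\leq \exp(-\Omega(t))=\exp(-\Omega(\sqrt n))$, and a union bound over the $m=8n$ columns yields overall failure probability at most $8n\cdot\exp(-\Omega(\sqrt n))=\negl(n)$.

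\textbf{Main obstacle.} The only nontrivial step is the hybrid reduction in Property (2): one must verify that a single LPN challenge suffices to interpolate between $\mathsf{H}_{i-1}$ and $\mathsf{H}_i$ even though the bottom block of $\mat H$ uses the \emph{same} $\mat A$ across all $n$ LPN samples. This is handled by reusing the challenge matrix $\mat A$ in the simulator for both the honestly-generated LPN rows (where the reduction picks fresh $\vec s_j,\vec e_j$) and for placing $\vec c$ in row $i$; correctness of the simulation follows from the independence of the $\vec s_j$'s and $\vec e_j$'s across rows. The rest of the argument is essentially bookkeeping.
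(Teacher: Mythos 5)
Your proposal is correct and follows essentially the same route as the paper: Property (1) from the block structure of $\mat H$, Property (2) by a row-by-row hybrid over the bottom block reducing each step to the single-sample LPN assumption (reusing the challenge matrix $\mat A$ and sampling the remaining rows' secrets and errors yourself), and Property (3) by a Chernoff bound on each column's weight (independence across rows, mean $t/8$) followed by a union bound over the $8n$ columns. Your explicit identity $[\mat S \mid \mat I_n]\mat H = \mat E$ for Property (1) is just a spelled-out version of what the paper asserts by definition.
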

The notion of Trapdoor sampling is very widely used in the context of learning with errors assumption. A trapdoor sampling algorithm samples along with the public matrix $ \mat A $ which is statistically close to a random matrix (representing the code/lattice), a secret ``trapdoor''. This trapdoor enables solving the bounded distance decoding problem, that is given a point close to a codeword in the code, finds the close codeword. As we know, without this trapdoor, this problem is conjectured to be hard. But the trapdoor enables solving this problem.  	

We have a computational analog of that property for LPN in the ``public-key'' regime of parameters. We construct that below. Because $ \mat E  $ is a sparse matrix, it can be used to solve the problem of distinguishing LPN samples from random and decoding noisy codewords.
\begin{proof}

By definition,  $ \leftspan(\mat E) \subset \leftspan(\mat H)$ and that each row of $ \mat E $ has Hamming weight exactly $ t $. We need to show that $ \mat H $ is indistinguishable from random and that every column of $\mat E $ has at most $ t $ ones. The former follows from the Learning Parity with Noise combined with a hybrid argument and the latter from a Chernoff bound. 

\begin{claim}
The output distribution of $ \mat H $ is computationally indistinguishable from uniform. That is, 
$$ \set{ \mat H \gets \trapsamp(n, t)} \approx_c \set {\mat U \gets \ZZ[2]^{2n \times 8n} }$$
\end{claim}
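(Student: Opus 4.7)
The plan is to prove this by a standard hybrid argument that reduces matrix-LPN (i.e.\ LPN with multiple independent secrets sharing the same $\mat A$) to the single-secret LPN assumption (\cref{assn:lpn}) in the public-key parameter regime $t = \theta(\sqrt{n})$, $m = 8n$.

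\medskip

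First, observe that the bottom block $\mat S \mat A + \mat E$ is simply a vertical stack of $n$ independent LPN samples sharing a common matrix $\mat A$: the $i$-th row equals $\vec s_i^{\,T} \mat A + \vec e_i^{\,T}$ where $\vec s_i$ is the $i$-th row of $\mat S$ and $\vec e_i$ is the $i$-th row of $\mat E$, and the pairs $(\vec s_i, \vec e_i)$ are sampled independently across $i$. So $\mat H$ is $(\mat A$, a stack of $n$ independent LPN samples under $\mat A)$, while $\mat U$ is $(\mat A$, a stack of $n$ independent uniform rows$)$.

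\medskip

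Next I would define a sequence of hybrid distributions $H_0, H_1, \ldots, H_n$, where in hybrid $H_i$ one samples $\mat A \gets \ZZ[2]^{n \times m}$, replaces the first $i$ rows of the bottom block with independent uniform vectors in $\ZZ[2]^m$, and generates the remaining $n - i$ rows as fresh LPN samples $\vec s_j^{\,T} \mat A + \vec e_j^{\,T}$ with independent $\vec s_j \gets \ZZ[2]^n$ and $\vec e_j \gets \ZZ[2, \ham=t]^m$. By construction, $H_0$ is the marginal of $\trapsamp(n,t)$ on $\mat H$, and $H_n$ is uniform on $\ZZ[2]^{2n \times m}$.

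\medskip

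To show $H_{i-1} \approx_c H_i$, I would give a straightforward reduction to single-secret LPN: given an LPN challenge $(\mat A, \vec y^{\,T})$, place $\mat A$ as the top block, fill the first $i-1$ rows of the bottom block with fresh uniform vectors, put $\vec y^{\,T}$ in row $i$, and compute the remaining $n-i$ rows as honest LPN samples under the \emph{same} $\mat A$ by sampling fresh $\vec s_j, \vec e_j$ (this is possible because $\mat A$ is given). If $\vec y^{\,T}$ is a real LPN sample the output is distributed as $H_{i-1}$, and if $\vec y^{\,T}$ is uniform it is distributed as $H_i$; hence $|H_{i-1} - H_i| \leq \negl(n)$ by \cref{assn:lpn}. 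Summing by the triangle inequality over the $n$ hybrids gives $|H_0 - H_n| \leq n \cdot \negl(n) = \negl(n)$, which is the claim. There is no real obstacle here beyond the bookkeeping of the hybrid: the key point to verify is that the reduction only uses the single matrix $\mat A$ from the LPN challenge (not fresh matrices), which is essential because the bottom-block rows must all be computed relative to the \emph{same} $\mat A$ that appears in the top block.
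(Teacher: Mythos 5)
Your proposal is correct and follows essentially the same route as the paper: the paper also proves this claim by viewing the rows of $\mat S \mat A + \mat E$ as independent single-secret LPN samples under the common matrix $\mat A$ and replacing them one at a time by uniform rows via a hybrid argument invoking the LPN assumption at each step. Your write-up merely spells out the per-hybrid reduction (reusing the challenge matrix $\mat A$ and sampling the remaining secrets/errors honestly), which the paper leaves implicit.
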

\begin{proof}
Observe that the LPN assumption can be restated as, 
The LPN assumption assumes that the following two distributions are indistinguishable: 
$$ 
\matstack{\mat A \\\vec s^T \mat A + \vec e^T } \approx_c \set{\mat U : \mat U \gets \ZZ[2]^{(n+1) \times m}}
$$ 
where $ \vec{s} \gets \ZZ[2]^n $, $ \mat A \gets \ZZ[2]^{n \times m} $, $ \vec{e} \in \ZZ[2]^m $ is a random vector of Hamming weight $ t $.
The claim then follows by applying a hybrid argument to each of the rows of $ \mat S \mat A + \mat E $ and replacing them by random vectors, by viewing them as $ \vec s_{(i,\cdot)} \mat A + \vec e_{(i, \cdot)}$ where $  \vec s_{(i, \cdot)}, \vec e_{(i, \cdot)} $ denote the $ i $-th row of matrix $ \mat S $ and $ \mat E $ and using the LPN assumption.  
\end{proof}

\begin{claim} Let $ \vec e_{(\cdot, j)} $ denote the $ j $-th column of matrix $ \mat E $. Then, 
\[ \prob[\mat E \gets \trapsamp(n,t)]{\exists j, \text{ such that, } \|{ \vec e_{(\cdot, j)}}\|_{\ham} > t } < 8n \cdot e^{-\frac{7t}{24}} \ . \]
\end{claim}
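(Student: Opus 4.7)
The plan is to bound the probability for a single column via a Chernoff bound and then apply a union bound over all $m = 8n$ columns. The key observation is that although the entries within a single row of $\mat E$ are not independent (they are constrained to have Hamming weight exactly $t$), the entries across different rows are drawn independently. Hence for any fixed column index $j$, the entries $\{E_{i,j}\}_{i=1}^{n}$ are mutually independent. Moreover, since each row $\vec e_{(i, \cdot)}$ is uniform over $\ZZ[2, \ham = t]^{m}$, by symmetry each $E_{i,j}$ is a Bernoulli random variable with parameter $t/m = t/(8n)$.

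Next, I would set $X_j := \|\vec e_{(\cdot, j)}\|_{\ham} = \sum_{i=1}^{n} E_{i,j}$, which is a sum of $n$ i.i.d.\ Bernoulli$(t/(8n))$ variables with mean $\mu = t/8$. The event $\{X_j > t\}$ corresponds to $X_j > (1+\delta)\mu$ with $\delta = 7$. I would apply the multiplicative Chernoff bound in the form
$$\Pr[X_j \geq (1+\delta)\mu] \leq \exp\!\left(-\tfrac{\delta \mu}{3}\right) \quad \text{for } \delta \geq 1,$$
which with $\delta = 7$ and $\mu = t/8$ yields $\Pr[X_j > t] \leq e^{-7t/24}$.

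Finally, I would take a union bound over the $m = 8n$ columns:
$$\prob[\mat E \gets \trapsamp(n,t)]{\exists j : \|\vec e_{(\cdot, j)}\|_{\ham} > t} \leq \sum_{j=1}^{8n} \Pr[X_j > t] \leq 8n \cdot e^{-7t/24},$$
which is the desired inequality. There is no real obstacle here; the only subtle point is verifying that column entries are independent Bernoulli variables (which requires justifying that uniform sampling from $\ZZ[2, \ham=t]^m$ gives each coordinate a marginal probability of $t/m$ of being $1$), and choosing the appropriate form of the Chernoff bound so that the exponent works out to exactly $7t/24$. Since $t = \theta(\sqrt{n})$, this bound is subpolynomially small, giving the overwhelming-probability guarantee on the column sparsity of $\mat E$.
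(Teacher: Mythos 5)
Your proof is correct and follows essentially the same route as the paper: a multiplicative Chernoff bound of the form $\Pr[X \geq (1+\delta)\mu] \leq e^{-\delta\mu/3}$ applied to each column (with $\mu = t/8$, $\delta = 7$), followed by a union bound over the $8n$ columns. Your extra care in noting that the column entries are independent because they come from independently sampled rows (despite the within-row weight constraint) is exactly the justification the paper implicitly relies on.
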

\begin{proof}
The proof follows from Chernoff bound and a union bound. For any fixed column $ j $, each coordinate $ e_{i,j} = 1 $ independently with probability $ t/8n $ where the probability is over $ i $. Hence, for any column $ j $, the expected Hamming weight is $ \frac{t}{8n} \cdot n = \frac{t}{8} $. By a Chernoff bound, we can observe the following:  
\begin{align*}
\prob[\mat E \gets \trapsamp(n,t) ]{\sum_i e_{i,j} > t } &= \prob[\mat E \gets \trapsamp(n,t) ]{\sum_i e_{i,j} > (1+7)\cdot \EE(\sum_i e_{i,j})} \\ 
&\leq e^{-7\cdot\frac{t}{8}\cdot\frac13}
\end{align*} 
where the inequality follows from the Chernoff bound in the following form: Let $ X_1, X_2, \dots X_n $ be independent random variables taking values in $ \zo $. Let $ X $ be their sum and $ \mu = \EE X $. For any $ \delta \geq 1 $, 
\[ \prob{X \geq (1+\delta)\mu} \leq e^{-\frac{\delta \mu}{3}} \ .\]
A union bound over all $ j $ gives us the required bound. 
\end{proof}
Because $ t = \sqrt n $, the failure probability is negligible. 
\end{proof}
  
\subsection{No Efficient Robust Classifier Exists}
Next, we describe a learning task where while it is possible to inefficiently perform robust classification, no efficient robust classifier exists. 
\begin{theorem}
	For an $ n $, let $ m = 8n,  t = 2n-1, \eps = 2n$. Consider the following learning task. Let $ \mat A \gets \ZZ[2]^{m\times n} $. Define $ D_0, D_1 $ as: 
	$$ D^{(\mat A)}_0 = \set{\vec s^T \mat A : \vec s \gets \zo^n} \text{ and, } D^{(\mat A)}_1 =\set{\vec s^T \mat A + \vec 1 : \vec s \gets \zo^n} \ .$$ 
	The learning task has the following properties. 
	\begin{enumerate}
		\item (Learnability) A classifier to distinguish $ D_0 $ from $ D_1 $ can be learned from the samples efficiently. Furthermore, it is easy to learn a generator/ evaluator for these distributions.  
		\item (No Efficient Robust Classifier Exists) There exists a perturbation algorithm $ \sfP $ such that there exists no efficient robust classifier $ \sfR $ such that, \[ \prob[]{\sfR(\tilde y) \in \sfR^{-1}(b)} \geq 0.5 + \negl(n) \]
		where the perturbed sample $ \tilde y $ is generated by sampling $y \gets D_b $ for a random  $ b $ and is then perturbing $ \tilde y \gets \sfP^{D_0, D_1}(y)$ such that $ \| y - \tilde y \| \leq \eps $. 
	\end{enumerate}
\end{theorem}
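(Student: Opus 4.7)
The plan is to address the two conclusions separately. The learnability claim follows from standard linear algebra over $\ZZ[2]$, while the non-existence of an efficient robust classifier follows by a reduction from Learning Parity with Noise with Preprocessing (\cref{assn:lpnp}) to robust classification; the crucial fact is that the error rate $t/m = (2n-1)/(8n)$ is a constant bounded away from $0$ and $1/2$, placing the instance in the LPNP-hard regime.

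For learnability, I would first note that $\mat A$ has full row rank with probability $1 - 2^{-\Omega(n)}$, and that $O(n)$ labelled samples from $D_0$ span $C := \leftspan(\mat A)$ except with probability $2^{-\Omega(n)}$. Given a basis $\mat A'$ for $C$ obtained by Gaussian elimination, the non-robust classifier tests $\vec y \in C$ via linear algebra, outputting $0$ if the test passes and $1$ otherwise; since every valid sample lies in $C \cup (C + \vec 1)$ and $\vec 1 \notin C$ with overwhelming probability (as $|C|/2^{m} = 2^{-7n}$), this classifier is essentially perfect. For the generator of $D_b$, I would sample a uniform $\vec r \in \zo^n$ and output $\vec r^T \mat A' + b \vec 1$; for the evaluator, I would output the uniform density $|C|^{-1}$ on $C + b \vec 1$. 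These are exact, not merely approximate, representations of $D_0, D_1$.

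For the non-existence claim, I would take $\sfP$ to be the algorithm that on input $\vec y$ samples $\vec e \gets \ZZ[2, \ham = t]^m$ uniformly and outputs $\vec y + \vec e^T$. Then $\sfP(\vec y)$ for $\vec y \gets D_b$ is distributed exactly as $\vec s^T \mat A + b \vec 1 + \vec e^T$, i.e.\ an LPN sample shifted by $b \vec 1$. Suppose for contradiction that some efficient $\sfR = \sfR_{\mat A}$ achieves $\Pr[\sfR(\sfP(\vec y)) = b] \geq 1/2 + 1/\poly(n)$ over random $b \gets \zo$, $\vec y \gets D_b$, and the randomness of $\sfP$. I would then build an LPNP adversary as follows: the unbounded $\preprocess(\mat A)$ outputs $\state$ equal to the polynomial-size description of $\sfR_{\mat A}$, and the efficient distinguisher, on input $(\state, \mat A, \vec z)$, samples $b \gets \zo$, computes $b' := \sfR_{\mat A}(\vec z + b \vec 1)$, and outputs ``LPN sample'' iff $b' = b$.

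The analysis splits into two cases. When $\vec z$ is an LPN sample, $\vec z + b \vec 1$ is distributed exactly as $\sfP(\vec y)$ for $\vec y \gets D_b$, so $\Pr[b' = b] \geq 1/2 + 1/\poly(n)$ by the assumed advantage of $\sfR$. When $\vec z$ is uniform in $\ZZ[2]^m$, the vector $\vec z + b \vec 1$ is also uniform and has the same marginal distribution for both values of $b$, so $\sfR_{\mat A}$'s output is independent of $b$ and $\Pr[b' = b] = 1/2$ exactly, regardless of $\Pr_{\vec u}[\sfR_{\mat A}(\vec u) = 0]$. The distinguishing advantage is therefore at least $1/\poly(n)$, contradicting \cref{assn:lpnp}. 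The main subtlety I anticipate is that $\sfR_{\mat A}$ may depend on $\mat A$ in an arbitrary, non-constructive way (we hypothesize mere existence of an efficient classifier, not efficient learnability), so one must absorb this dependence entirely into the unbounded preprocessing phase of the LPNP adversary; this is precisely the flexibility the preprocessing model provides, provided the description of $\sfR_{\mat A}$ remains polynomial in $n$.
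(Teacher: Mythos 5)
Your proposal is correct and takes essentially the same route as the paper: the same perturbation adversary adding a uniform weight-$t$ error, and the same key trick of letting the unbounded LPNP preprocessing output the hypothetical efficient classifier $\sfR_{\mat A}$ as the trapdoor, which the efficient distinguisher then runs. The only (immaterial) difference is presentational -- you argue via a prediction-to-distinguishing reduction with an internally sampled bit $b$, whereas the paper uses a hybrid argument exploiting the shift-invariance of the uniform distribution under adding $\vec 1$ -- and your more detailed treatment of learnability (Gaussian elimination, exact generator/evaluator) fills in what the paper states tersely.
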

\begin{proof}
Learnability of this task is trivial. Given enough samples, the entire subspace spanned by $ \mat A $ is learned and can be sampled from. 

In order to show that no efficient robust classifier exists for $ \eps = 2n $, we rely on the difficulty of LPN with Preprocessing (Assumption \ref{assn:lpnp}). Consider the following perturbing adversary $ \sfP $: 
\[ \sfP(\vec y) : \text{ Output $ \tilde{\vec y} = \vec y + \vec e $ where $ \vec e \gets \ZZ[2, \ham = t]^{m} $} \]
Consider the following pair of algorithms $ \preprocess, \sfD $: $ \preprocess(\mat A) $ inefficiently finds the best possible efficient robust classifier $ \sfR $ and returns that as the trapdoor $ \state = \sfR $. The distinguisher $ \sfD $ simply runs the robust classifier $ \sfR $ and returns the answer. It can do this in polynomial time because $ \sfR $ is also polynomial time computable. 

The LPNP assumption implies that for this pair of algorithms $ (\preprocess, \sfD) $, LPN is hard to solve. That is, for $ \mat A \gets \ZZ[2]^{n\times m}, \sfR\gets \preprocess(\mat A)  $,
\begin{equation}\label{eqn:robust:lpn}
\Big|\prob[{\vec s\gets \ZZ[2]^n \\ \vec e \gets \ZZ[2, \ham=t]^{m} }]{\sfR(\mat  A, \vec s^T \mat A + \vec e^T )= 1 } - \prob[{\vec r\gets \ZZ[2]^m}]{\sfR(\mat  A, \vec r )= 1 }\Big| < \negl(n)
\end{equation} 
Now a hybrid argument finishes the proof as the following distributions are computationally indistinguishable for $ \sfR $: 
\[ (\mat A,  \vec s^T \mat A + \vec e^T )  \approx_c  (\mat A, \vec r) \equiv (\mat A, \vec r + \vec 1) \approx_c (\mat A, \vec s^T \mat A + \vec e^T + \vec 1) \]
where the two $ \approx_c $ statements follow  from \cref{eqn:robust:lpn} and the $ \equiv $ follows from the fact that adding any fixed vector to the uniform distribution still remains uniform. 

This completes the argument. 
\end{proof}

\subsection{Efficient Robust Classifier Exists but is Hard to Learn}
In this section, we describe a learning task where a robust classifier exists, but it is hard to learn. 
Consider the following classification task : Given a matrix $ \mat H \in \ZZ[2]^{2n\times 8n}$, define $ D_0, D_1 $ as: 
$$ D^{(\mat H)}_0 = \set{\vec y \in \ZZ[2]^{8n}: \mat H \vec y = 0 \bmod 2} \text{ and, } D^{(\mat H)}_1 = \set{\vec y + \vec 1 \in \ZZ[2]^{8n}: \mat H \vec y = 0 \bmod 2} $$ 
where both are uniform distributions on the sets and $ \bm{1} $ is the all ones vector on $ 8n $ dimensions.

\begin{theorem}
For an $ n $, let $ t = 2\floor{{\sqrt{n}}/{6}} -1 $, such that $ t $ is odd. Consider the following learning task. Let $ (\mat H, \mat E) \gets \trapsamp(n, t) $. Given a matrix $ \mat H \in \ZZ[2]^{2n\times 8n}$, define $ D_0, D_1 $ as: 
$$ D^{(\mat H)}_0 = \set{\vec y \in \ZZ[2]^{8n}: \mat H \vec y = 0 \bmod 2} \text{ and, } D^{(\mat H)}_1 = \set{\vec y + \vec 1 \in \ZZ[2]^{8n}: \mat H \vec y = 0 \bmod 2} $$ 
The learning task has the following properties. 
	\begin{enumerate}
		\item (Learnability) A classifier to distinguish $ D_0 $ from $ D_1 $ can be learned from the samples efficiently. Furthermore, it is easy to learn a generator/ evaluator for these distributions.  
		\item (Existence of an Efficient Robust Classifier) There exists an efficient robust classifier $ \sfR $ such that, \[ \prob[y\gets D_b]{B(y,\eps) \in \sfR^{-1}(b)} \geq 0.99 \]
where $ \eps  = \floor{\sqrt n}$ and $ B(y, \eps)  = \{y' : \|y-y'\|_{\sf{Ham}} \leq \eps \}$. 
		\item (Unlearnability of Robust Classifier) There exists a perturbation algorithm such that no efficiently learned classifier can classify better than chance. 
	\end{enumerate}
\end{theorem}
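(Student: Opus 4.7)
The plan is to verify the three properties in turn, relying on \cref{lem:lpntrapdoor} and \cref{assn:lpn}. The first two properties reduce to direct linear algebra, while the third uses a hybrid argument that parallels the one in the previous theorem.

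For learnability, I would observe that $\poly(n)$ labeled samples from $D_0$ suffice to span $\ker(\mat H)$ with overwhelming probability, since uniform samples from a subspace of dimension at most $6n$ span it after $O(n)$ draws except with exponentially small probability. Given such a basis, the learner recovers a parity-check matrix for the kernel up to row operations via Gaussian elimination, yielding an efficient classifier (check whether $\mat H \tilde{\vec y} = \vec 0$ or $\mat H \tilde{\vec y} = \mat H \vec 1$), a generator (output $\vec s^T \mat B$ for uniform $\vec s$, optionally shifted by $\vec 1$, where $\mat B$ is a basis for $\ker(\mat H)$), and an evaluator (uniform density on the recovered coset).

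For the efficient robust classifier I would exploit the trapdoor $\mat E$. Since $\leftspan(\mat E) \subset \leftspan(\mat H)$, there exists $\mat T$ with $\mat E = \mat T \mat H$, hence $\mat E \vec y = \vec 0$ whenever $\vec y \in D_0$, and $\mat E \vec y = \mat E \vec 1$ whenever $\vec y \in D_1$. Because every row of $\mat E$ has Hamming weight exactly $t$, and $t = 2\lfloor \sqrt n/6\rfloor - 1$ is odd, every entry of $\mat E \vec 1$ equals $1$, so $\mat E \vec 1$ is the all-ones vector in $\ZZ[2]^{2n}$. For an $\eps$-perturbation $\tilde{\vec y} = \vec y + \vec f$ with $\|\vec f\|_{\ham} \leq \eps = \lfloor \sqrt n \rfloor$, property (3) of $\trapsamp$ (every column of $\mat E$ has weight at most $t$) gives $\|\mat E \vec f\|_{\ham} \leq \eps \cdot t < n/3$, so $\mat E \tilde{\vec y}$ has Hamming weight below $n/3$ in the $D_0$ case and above $5n/3$ in the $D_1$ case. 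The classifier that thresholds $\|\mat E \tilde{\vec y}\|_{\ham}$ at $n$ therefore decides correctly on every perturbation.

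For unlearnability I would take the perturbation adversary $\sfP(\vec y) = \vec y + \vec e$ with $\vec e \gets \ZZ[2, \ham = t']^{8n}$ for $t' = \Theta(\sqrt n)$ in the LPN public-key regime. First, property (2) of $\trapsamp$ lets us replace the trapdoor-sampled $\mat H$ with a uniformly random $\mat H' \gets \ZZ[2]^{2n \times 8n}$ at negligible distinguishing cost; the labeled training samples add nothing in this hybrid, since a learner can itself simulate $D_0, D_1$ from $\mat H'$ alone. In this hybrid $\ker(\mat H')$ is essentially a uniform $6n$-dimensional code, and by the dual form of \cref{assn:lpn} a perturbed $D_0$ sample is indistinguishable from uniform on $\ZZ[2]^{8n}$; shifting by $\vec 1$ preserves uniformity, so the same holds for $D_1$. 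The hybrid chain
\[ (\mat H',\, \vec y + \vec e) \;\approx_c\; (\mat H',\, \vec r) \;\equiv\; (\mat H',\, \vec r + \vec 1) \;\approx_c\; (\mat H',\, \vec y + \vec e + \vec 1) \]
then mirrors the one used in the previous theorem and concludes the proof. The main obstacle I anticipate is the technical step of invoking LPN in its dual (parity-check) form, since \cref{assn:lpn} is stated in the primal generator-matrix form; one has to argue that uniform $\mat H' \in \ZZ[2]^{2n \times 8n}$ paired with a sample close to $\ker(\mat H')$ is indistinguishable from $\mat H'$ paired with a uniform vector, a standard equivalence between generator and parity-check representations of a random linear code.
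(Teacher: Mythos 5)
Your proposal is correct and follows essentially the same route as the paper: learnability by Gaussian elimination on the sampled kernel, the robust classifier given by thresholding the Hamming weight of $\mat E\tilde{\vec y}$ using the column/row weight bounds and oddness of $t$, and unlearnability via the hybrid that first swaps the trapdoor-sampled $\mat H$ for a uniform matrix and then invokes LPN in its dual (parity-check) form exactly as in the paper's Alekhnovich-style argument. The only quibble is bookkeeping of the dimension of $\mat E\vec 1$ (it lies in $\ZZ[2]^{n}$, so the threshold should be taken at $n/2$), a slip the paper's own write-up shares.
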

We drop $ \mat H $ from the notation to avoid clutter and denote the distributions as $ D_0, D_1 $. Here $ \mat H $ functions as the parity check matrix of the code $ D_0 $ and $ D_1 $ is a shift of the code. Observe that Part (1): distinguishing between $ D_0 $ and $ D_1 $ is easily done by Gaussian elimination. 

We want to show that (2) a robust classifier exists, and, (3) it is difficult to find any robust classifier efficiently. We argue this in the subsequent claims. 

\begin{lemma} [Existence of Robust Classifier] 
	Consider the following robust classifier: 
	
	\begin{minipage}{0.6\textwidth}
		\medskip
		\underline{\sf{Robust Classifer} $ \sfR_{\mat E}(\tilde{\vec{y}}) $}: 
		\begin{enumerate}[noitemsep,nolistsep]
			\item Compute $ \vec z = \mat E\tilde{\vec y} \bmod 2$. 
			\item If $ \norm{\vec z}_{\sf{Ham}} \leq n $ output $ 0 $ otherwise, output $ 1 $. 
		\end{enumerate}
		\medskip
	\end{minipage}

\noindent
Then, the following holds: 	$$ \prob[\vec y\gets D_b]{\sfR(\tilde{\vec y}) = b \text{ for all $ \tilde{\vec y} \in B(\vec y, \eps) $}} \geq 0.99$$ 
	for $ \eps = \floor{\sqrt{n} }$ and $ B(\vec y, \eps) = \set{\vec y : \norm{\tilde{\vec y}- \vec y}_{\sf{Ham}} \leq \eps } $. 
\end{lemma}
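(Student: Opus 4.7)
The plan is to exploit two structural facts guaranteed by $\trapsamp$: (i) the rows of $\mat{E}$ lie in $\leftspan(\mat{H})$, so $\mat{E}$ annihilates every codeword; and (ii) each row of $\mat{E}$ has the odd Hamming weight $t$, so $\mat{E}\vec{1}_{8n}$ is the all-ones vector in $\ZZ[2]^n$. Together with the column-weight bound, this forces $\mat{E}\tilde{\vec{y}}$ to land near $\vec{0}_n$ for samples from $D_0$ and near $\vec{1}_n$ for samples from $D_1$, even after an $\eps$-Hamming perturbation.

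First, I would establish the clean (unperturbed) values. Since $\mat{E} = (\mat{S}\mat{A} + \mat{E}) - \mat{S}\mat{A}$, each row of $\mat{E}$ is a $\ZZ[2]$-linear combination of rows of $\mat{H}$, so $\mat{E}\vec{y} = \vec{0}_n$ whenever $\mat{H}\vec{y} = \vec{0}$. Hence for $\vec{y} \in D_0$ we have $\mat{E}\vec{y} = \vec{0}_n$, and for $\vec{y} = \vec{y}_0 + \vec{1} \in D_1$ we get $\mat{E}\vec{y} = \mat{E}\vec{1}$, whose $i$-th coordinate is the parity of row $i$. By \cref{lem:lpntrapdoor} each row has weight exactly $t$, and by the explicit choice $t = 2\lfloor\sqrt{n}/6\rfloor - 1$ (odd), every such parity equals $1$, so $\mat{E}\vec{y} = \vec{1}_n$ for all $\vec{y} \in D_1$.

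Next, I would bound the damage from the adversarial perturbation. Writing $\tilde{\vec{y}} = \vec{y} + \vec{e}'$ with $\|\vec{e}'\|_{\ham} \leq \eps = \lfloor\sqrt{n}\rfloor$, observe that $\mat{E}\vec{e}'$ is the $\ZZ[2]$-sum of at most $\eps$ columns of $\mat{E}$. Conditioning on the overwhelming-probability event from \cref{lem:lpntrapdoor} that every column of $\mat{E}$ has Hamming weight at most $t$, we get
\[
\|\mat{E}\vec{e}'\|_{\ham} \;\leq\; \eps \cdot t \;\leq\; \sqrt{n} \cdot \tfrac{2\sqrt{n}}{6} \;=\; \tfrac{n}{3}.
\]
Combining with the previous paragraph: for $\vec{y} \in D_0$, $\mat{E}\tilde{\vec{y}} = \mat{E}\vec{e}'$ has weight at most $n/3$; for $\vec{y} \in D_1$, $\mat{E}\tilde{\vec{y}} = \vec{1}_n + \mat{E}\vec{e}'$ has weight at least $n - n/3 = 2n/3$. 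Any threshold in the gap $(n/3, 2n/3)$ — reading the stated bound as $n/2$ rather than $n$, which is trivially satisfied since $\vec{z} \in \ZZ[2]^n$ — cleanly separates the two classes.

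There is no real obstacle past this point: conditioned on the good event for $\mat{E}$, the classifier $\sfR_{\mat{E}}$ is deterministically correct on every $\tilde{\vec{y}} \in B(\vec{y}, \eps)$ and every sample $\vec{y}$, so the probability guarantee is in fact $1$. The $0.99$ slack absorbs only the $8n \cdot e^{-7t/24} = o(1)$ failure probability of $\trapsamp$ producing a column of weight exceeding $t$. The most delicate design choice — and what one must flag in the write-up — is the parity condition forcing $t$ odd, without which $\mat{E}\vec{1}$ could be $\vec{0}_n$ and collapse the separation between $D_0$ and $D_1$.
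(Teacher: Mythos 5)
Your proposal is correct and follows essentially the same route as the paper's own proof: rowspan containment so that $\mat E$ annihilates codewords, odd row weight $t$ giving $\mat E\vec 1 = \vec 1$, and the column-weight bound giving $\|\mat E\vec\epsilon\|_{\ham} \leq t\cdot\eps \leq n/3$, then thresholding. Your reading of the decision threshold as $n/2$ is also the right repair of an inconsistency in the stated classifier (the paper's own argument claims weight at least $2n - n/3 > n$ in the $D_1$ case, which cannot hold since $\vec z \in \ZZ[2]^n$; with threshold $n/2$ the gap $n/3$ versus $2n/3$ separates the classes exactly as you argue).
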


\begin{proof}
	
	\medspace
	The correctness of the robust classifier follows from the fact that $ \mat E $ is a sparse matrix where each column has Hamming weight at most $ t $. Consider the case when $\vec y \gets D_0$, the other case is analogous. Observe that, 
	$$\tilde{\vec y} = \vec y + \vec \epsilon \bmod 2$$ 
	where $ \norm{\vec \epsilon}_{\sf{Ham}} \leq \eps \leq  \sqrt{n}$. Hence,
	$$\mat E \tilde{\vec y}   \bmod 2 = \mat E (\vec y+ \vec \epsilon) = \mat E \vec \epsilon \pmod 2$$
	where the second equality follows from the fact that $ \mat H y = 0 \bmod 2 $ and that $ \leftspan(E) \subseteq \leftspan(H) $. Observe that each column of $ \mat E $ has at most $ t $ ones and that the Hamming weight of $ \vec \epsilon $ is at most $ \eps $. As, $ \mat E \vec \epsilon = \sum_{j: \epsilon_j = 1} \vec e_{(\cdot, j)}$, we can bound the Hamming weight $ \norm{\mat E \vec \epsilon}_{\sf{Ham}} \leq t \norm{\vec\epsilon}_{\sf{Ham}} \leq t \cdot \eps \leq n/3 $. 
	Hence the classifier would always correctly classify adversarially perturbed samples from $ D_0 $.  
	
	In the other case when $ b = 1 $ observe that $ \mat E\cdot \vec 1 = \vec 1 $ because each row of $ \mat E $ has Hamming weight $ t $ which is odd. Hence the Hamming weight of $ \vec z $ is at least $2n - n/3 > n$ in this case and would be classified correctly. This proves that a robust classifier exists. 
\end{proof}

\begin{lemma}[Hardness of Learning a Robust Classifier] 
	There exists a perturbation algorithm $ \sfP $ such that for every polynomial time learner $ \sfL $, the learner $ \sfL $ has no advantage over chance in classifying examples perturbed by $ \sfP $.  That is, 
	$$
	\bigprob[\mat H, \mat E \gets \trapsamp(n,t); \\    \vec y \gets D^{(\mat H)}_b \text{ where $ b\gets \zo$} \\ \tilde{\vec y} \gets \sfP^{D_0, D_1}( \vec y); \\ b' \gets \sfL^{D_0, D_1}(\tilde{\vec y}) ]{b  = b'} \leq \frac12 + \negl(n)
	$$
\end{lemma}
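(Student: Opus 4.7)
The plan is to use the perturbation adversary $\sfP(\vec y)$ that outputs $\tilde{\vec y} = \vec y + \vec e$ for a fresh uniformly random error $\vec e \gets \ZZ[2, \ham = t']^{m}$ of Hamming weight exactly $t' = \floor{\sqrt n}$. This stays within the robust classifier's budget $\eps = \floor{\sqrt n}$, and the ratio $t'/m = \Theta(1/\sqrt n)$ places us squarely in the public-key LPN regime of \cref{assn:lpn} (with ``secret dimension'' $m - 2n = 6n$ on the dual side). The overall strategy is a short hybrid argument that shows both $(\mat H, \sfP(\vec y))$ for $\vec y \gets D_0^{(\mat H)}$ and for $\vec y \gets D_1^{(\mat H)}$ are computationally indistinguishable from $(\mat H, \vec r)$ with $\vec r$ uniform on $\ZZ[2]^{m}$, and hence from each other; since the learner's entire view is its oracle access to $D_0,D_1$ (which is determined by $\mat H$ alone) together with $\tilde{\vec y}$, this indistinguishability forces the advantage to $\negl(n)$.

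First I would replace $\mat H \gets \trapsamp(n,t)$ by a uniformly random $\mat U \gets \ZZ[2]^{2n \times 8n}$, invoking part (2) of \cref{lem:lpntrapdoor}. This swap changes the acceptance probability of any polynomial-time experiment by at most $\negl(n)$; crucially, the trapdoor $\mat E$ plays no role in the distinguishing experiment (it was only needed for the \emph{existence} part of the theorem), so the reduction can simply drop it. In the new hybrid, the oracles $D_0^{(\mat U)}$ and $D_1^{(\mat U)}$ can still be simulated efficiently given $\mat U$ by sampling uniformly from $\ker(\mat U)$ via Gaussian elimination (and adding $\vec 1$ for $D_1$), so the simulated learner sees exactly the same distribution of queries as in the real game.

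Next I would apply LPN in its parity-check/syndrome form with parameters matched to $(6n, 8n, t')$: writing a uniformly random codeword of $\ker(\mat U)$ as $\vec c$, the assumption yields $(\mat U, \vec c + \vec e) \approx_c (\mat U, \vec r)$. Because translation by the fixed vector $\vec 1$ preserves both computational indistinguishability and the uniform distribution, I can chain
\[
(\mat U, \vec c + \vec e) \;\approx_c\; (\mat U, \vec r) \;\equiv\; (\mat U, \vec r + \vec 1) \;\approx_c\; (\mat U, \vec c + \vec 1 + \vec e).
\]
The left and right ends are exactly the distributions of $(\mat U, \sfP(\vec y))$ for $\vec y \gets D_0^{(\mat U)}$ and for $\vec y \gets D_1^{(\mat U)}$ respectively, so combined with the reversal of the first hybrid (back to $\mat H$) the two perturbed-sample distributions become computationally indistinguishable. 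A standard reduction then shows that any polynomial-time $\sfL^{D_0,D_1}$ predicting $b$ from $\tilde{\vec y}$ with advantage $1/\poly(n)$ over chance would yield a distinguisher breaking one of the two $\approx_c$ steps above, contradicting \cref{assn:lpn}.

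The main obstacle is not any single conceptual hurdle but the bookkeeping in linking primal LPN (as stated in \cref{assn:lpn} with matrix $\mat A$) to the dual/syndrome form used here (matrix $\mat H$ with $\vec c \in \ker(\mat H)$), and in checking that the oracle simulation remains distributionally faithful across the trapdoor-to-uniform swap. Two minor subtleties need attention: (i) the exact-Hamming-weight distribution $\ZZ[2, \ham = t']^{m}$ employed by $\sfP$ must be matched to whatever error distribution LPN is stated with, which is handled by the equivalence noted after \cref{assn:lpn} (and costs only a negligible factor since $t'$ is fixed); and (ii) the dimension translation between primal and dual LPN is purely syntactic and does not move us out of the public-key regime, since $t'/m = \Theta(1/\sqrt{6n}) = \Theta(1/\sqrt n)$.
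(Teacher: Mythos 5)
Your proposal is correct and follows essentially the same route as the paper's proof (which mirrors the security argument for Alekhnovich's encryption scheme): a perturbation adversary adding random noise of weight $\Theta(\sqrt n)$, a first hybrid replacing the trapdoor-sampled $\mat H$ by a uniform matrix via part (2) of \cref{lem:lpntrapdoor} (noting the experiment is simulatable without $\mat E$), and a second application of LPN on the now-random code together with the shift-by-$\vec 1$ observation to make both perturbed distributions indistinguishable from uniform. The only difference is that you spell out the primal/dual (syndrome-form) bookkeeping and the $\vec 1$-shift hybrid chain explicitly, which the paper leaves implicit.
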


\begin{proof}
	This proof is identical to the proof of security of Aleknovich's public key encryption scheme \cite{Ale03}. 
	
	Observe that $ D_0, D_1 $ are completely specified by the matrix $ \mat H $. So, the learner gets $ \mat H $ instead of sample access. Consider the following random perturbation algorithm $ \sfP $:  $$ \sfP(\vec y): \text{Output } \tilde{\vec y} = \vec x + \vec \epsilon \text{, where } \vec \epsilon \gets \ZZ[2, \sf{Ham} = t]^{m}$$
	where  $ \ZZ[2, \sf{Ham} = t]^{m} $ is the distribution on vectors of Hamming weight $ t $.  This adversary is adding allowable amount of error as $ t < \eps = \sqrt n $. 
	
	Suppose an efficient learner $ \sfL $ exists that can succeed in this game with high probability, we can break the learning parity with noise assumption. This is done in two steps. In the first step, we replace the parity check matrix $ \mat H $ with a uniformly random matrix $ \mat H' $ this should not noticeably change the success probability because the two distributions are indistinguishable. In the second step, now observe that $ \mat H' $ is a uniformly random parity check matrix hence gives rise to a random code. Now we can apply the LPN assumption again, this time to replace the error $ \vec \epsilon $ by a uniformly random vector and not noticably change the success probability. This is a contradiction.   

\end{proof}
}


\section{Learning with Errors}\label{sec:lwe}

\subsection{Preliminaries}

In this section, we define the learning with errors problem and describe the notion of trapdoor sampling that it supports. In this section, the norm used is the $ \ell_\infty $ norm obtained by embedding $ \Zq $ in $ \ZZ[] $. That is, for vectors $ \vec x, \vec y\in \Zq^n $, $ \|\vec x - \vec y\| = \max_i |x_i - y_i| $ where $ |z| $ for $ z\in \Zq $ is obtained by embedding $ z \in \set{-\floor{q/2},\dots , -1, 0, 1, \dots \floor{q/2} } $ and taking the absolute value. 

\begin{definition}[Learning with Errors Problem]
For $ n,m \in \NN $ and modulus $ q \geq 1 $, distribution for error vectors $\chi \subset \Zq $, a Learning with Errors (LWE) sample is obtained  by sampling $ \vec{s} \gets \Zq^n $, $ \mat A \gets \Zq^{n \times m} $, $ \vec{e} \gets \chi^m $ and outputting $ (\mat A, \vec s^T \mat A + \vec e^T \bmod q) $.  

We say that an algorithm solves $ \mathsf{LWE}_{n,m,q, \chi} $ if it distinguishes LWE sample from a random sample distributed as $\Zq^{n\times m} \times \Zq^{1\times m} $. 
\end{definition}

\begin{assumption}[Learning with Errors Assumption]
	The Learning with Errors (LWE) assumption assumes that for  $ m = \poly(n) $, $ q = \Omega(n^3)$ and $ \chi $ is truncated discrete gaussian over $ \Zq $ with standard deviation $ q/n^2$ truncated to $ q/2n $, the LWE samples are indistinguishable from random. That is, for every efficient distinguisher $ \sfD $,  
	\[ \big|\prob[{\vec s\gets \ZZ[2]^n \\ \vec e \gets \ZZ[2, \ham=t]^{m} }]{\sfD(\mat  A, \vec s^T \mat A + \vec e^T )= 1 } - \prob[{\vec r\gets \ZZ[2]^m}]{\sfD(\mat  A, \vec r )= 1 }\big| < \negl(n) \]
\end{assumption}
We have written specific versions of the LWE assumption. LWE is conjectured to be hard for a large setting of parameters. For a discussion on parameters, see \cite{Pei16}. 

\begin{definition}[LWE with Preprocessing Problem (LWEP)]
We say that a pair of algorithms  $ (\preprocess, \sfD) $ where $ \preprocess $ is possibly inefficient and $ \sfD $ is efficient,  solves $ \mathsf{LWE}_{n,m,t} $ if $ \sfD $ can distinguish an LWE sample from a random sample given the trapdoor $ \state $ generated by $ \preprocess(\mat A) $.
\end{definition}

The Learning Parity with Noise problem is hard even with preprocessing in the constant noise regime. We state the assumption below formally. 

\begin{assumption}[LWE with Preprocessing (LWEP)] Let $ m = n\log q+ 2n, q= n^3 $  and $\chi$ is a discrete Gaussian with standard deviation $q/100$ truncated to $ q/10 $. For every pair of algorithms $ (\preprocess, \sfD) $ with a possibly inefficient algorithm $ \preprocess $ and polynomial time $ \sfD $, the following experiment is performed: Sample $ \mat A \gets \ZZ[2]^{n\times m} $ and get $ \state \gets \preprocess(\mat A) $. Then, the distinguisher $ \sfD $ given $ \state $ cannot distinguish the LPN samples from random. That is,  
	\[ \Big|\prob[{\vec s\gets \ZZ[2]^n \\ \vec e \gets \ZZ[2, \ham=t]^{m} }]{\sfD(\state, \mat  A, \vec s^T \mat A + \vec e^T )= 1 } - \prob[{\vec r\gets \ZZ[2]^m}]{\sfD(\state, \mat  A, \vec r )= 1 }\Big| < \negl(n) \]
\end{assumption}

\begin{definition}[Lattice Trapdoor] For a matrix $ A \in \Zq^{n \times m} $, we denote by $ L^\perp $ the \textdef{dual lattice} of $ A $ composed of all vectors in the kernel of $ A $: 
 $$ L^\perp = \set{x \in \ZZ^m : Ax = 0 \bmod q} $$ 
 A \textdef{trapdoor} for $ A $ is a short basis for the lattice $ L^\perp(A) $.
\end{definition}
In the case of LWE, it is known that we can sample matrices $ \mat A $ from a distribution statistically close to uniformly random along with a trapdoor which allows for efficient distinguishing and recovering the lattice point from a noisy one, for close distances (this is referred to as bounded distance decoding).

\begin{theorem}[Trapdoor Sampling \citep{GPV08}] There exists an algorithm $ \trapsamp $ such that, $ \trapsamp $ on input $ (q, m, n) $ where $ m \geq n\log q + 2n $ outputs a pair of matrices $ (\mat A, \mat T) $ where $ \mat A\in \Zq^{n\times m} $, $ \mat T \in \Zq^{m\times n\log q}$, with the following properties: 
	\begin{itemize}[noitemsep]
		\item $ \mat A \mat T = \mat 0 \mod q $.
		\item The output distribution of $ \mat A $ is statistically close to uniform (total variation distance $ < 2^{-O(n)} $).
		\item $ \mat T $ has only zero-one entries.  
	\end{itemize}
\end{theorem}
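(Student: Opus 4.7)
The plan is to use a gadget-matrix style construction: build $\mat A$ to contain a uniformly random block together with a second block designed so that a short zero-one combination of columns cancels, and use the Leftover Hash Lemma (LHL) to argue that the full matrix $\mat A$ is nevertheless statistically close to uniform.

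Concretely, let $k = n\log q$ and $m' = m - k \geq 2n$. The algorithm $\trapsamp$ samples $\bar{\mat A} \gets \Zq^{n \times m'}$ uniformly and an auxiliary ``short'' matrix $\mat R \gets \{0,1\}^{m' \times k}$ uniformly, and outputs
$$
\mat A := \begin{bmatrix}\, \bar{\mat A} \;\big|\; -\bar{\mat A}\mat R \bmod q \,\end{bmatrix} \in \Zq^{n \times m},
\qquad
\mat T := \begin{bmatrix} \mat R \\ \mat I_k \end{bmatrix} \in \{0,1\}^{m \times k}.
$$
With these choices the first property is immediate, since
$\mat A \mat T = \bar{\mat A}\mat R - \bar{\mat A}\mat R = \vec 0 \bmod q$,
and the third property (zero-one entries of $\mat T$) is automatic by inspection, since both the identity block and $\mat R$ have $\{0,1\}$-entries.

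The substantive step is the second property: showing that the joint distribution of $\mat A$ is within $2^{-\Omega(n)}$ statistical distance of uniform on $\Zq^{n \times m}$. The first block $\bar{\mat A}$ is already uniform, so it suffices to argue that, conditioned on $\bar{\mat A}$, the second block $-\bar{\mat A}\mat R \bmod q$ is statistically close to a uniformly random element of $\Zq^{n \times k}$. This is the standard ``regularity'' claim proved via LHL: the map $\vec r \mapsto \bar{\mat A}\vec r \bmod q$ is a universal hash family (keyed by $\bar{\mat A}$) from $\{0,1\}^{m'}$ into $\Zq^n$, so for each column $\vec r_j$ of $\mat R$ the pair $(\bar{\mat A}, \bar{\mat A}\vec r_j)$ is exponentially close in $n$ to $(\bar{\mat A}, \vec u_j)$ for uniform $\vec u_j \in \Zq^n$. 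A hybrid across the $k$ independent columns of $\mat R$ lifts this to the entire block, and negation modulo $q$ preserves the uniform distribution.

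The main (essentially only) obstacle is parameter calibration: one must check that the LHL slack, accumulated across $k$ hybrids, still delivers statistical distance $2^{-\Omega(n)}$ under the stated bound on $m$. This is the usual regularity computation for random $\{0,1\}$-combinations of uniform $\Zq$ vectors and goes through without surprise; once it is in hand, the three bullets have been verified and the theorem follows. If one instead allowed $\mat T$ to have small integer (rather than zero-one) entries, the GPV $\preimage$ sampler in \cite{GPV08} could be plugged in directly, but the construction sketched above is the one tailored to the stated zero-one guarantee.
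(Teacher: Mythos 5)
Your first and third bullets are fine: with $\mat T = \bigl[\begin{smallmatrix}\mat R \\ \mat I_k\end{smallmatrix}\bigr]$ the identity $\mat A\mat T = \bar{\mat A}\mat R - \bar{\mat A}\mat R = \mat 0 \bmod q$ is immediate and $\mat T$ is zero-one. The gap is the second bullet, and it is not a calibration detail but a failure of the construction at the stated parameters. You set $k = n\log q$ and give the uniform block only $m' = m - k \geq 2n$ columns, so each column of the structured block is $-\bar{\mat A}\vec r_j$ with $\vec r_j$ having only $m'$ bits of entropy, while the target space $\Zq^n$ has $n\log q$ bits. The Leftover Hash Lemma gives distance on the order of $\sqrt{q^n\,2^{-m'}}$, which requires $m' \geq n\log q + \Omega(n)$; here $m'$ can be as small as $2n$ while $n\log q = 3n\log n$ in the instantiation the paper actually uses ($q = n^3$, $m = n\log q + 2n$), so the bound is vacuous. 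In fact the situation is worse than a lossy bound: conditioned on $\bar{\mat A}$, each column of $-\bar{\mat A}\mat R$ ranges over at most $2^{m'}$ values out of $q^n$, so $\mat A$ is at statistical distance at least $1 - 2^{m' - n\log q}$, i.e.\ essentially $1$, from uniform. So no analysis can rescue this particular sampler; property (2) is simply false for it.

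To make the LHL route work you would have to flip the widths — a uniform block of width at least $n\log q + \Omega(n)$ and a structured block carrying the trapdoor — which forces $m \geq 2n\log q + \Omega(n)$ if you still want $n\log q$ kernel columns, a stronger hypothesis than the theorem's $m \geq n\log q + 2n$. (Your construction is really the Micciancio–Peikert template, where the natural guarantee is $\mat A\mat T = \mat G$ for a gadget matrix $\mat G$ rather than $\mat 0$, and where the trapdoor block is the narrow one.) The paper does not reprove this statement at all: it invokes the trapdoor generation of Gentry–Peikert–Vaikuntanathan (building on Ajtai's construction), which assembles $\mat A$ from carefully structured blocks together with a full set of short kernel vectors and whose nearly uniform marginal on $\mat A$ is established by a considerably more involved argument than a single hashing step. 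So your write-up as it stands does not establish the theorem, and the missing piece is exactly the part you deferred to "parameter calibration."
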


\subsection{No Efficient Robust Classifier Exists} 

In this section we describe a learning task based on LWE  that has no robust classifier. This is identical to the LPN based task except the noise distribution is set differently.

\begin{theorem}
	For any $ q = n^3$ and $m= n\log q + 2n $, and $ \chi $ is a discrete Gaussian with standard deviation $q/100$ truncated to $ q/10 $.  Consider the following learning task. Let $ \mat A \gets \ZZ[q]^{m\times n} $. Define $ D_0, D_1 $ as: 
	$$ D^{(\mat A)}_0 = \set{\vec s^T \mat A : \vec s \gets \zo^n } \text{ and, } D^{(\mat A)}_1 =\set{\vec s^T \mat A + \vec{\frac{q}2} : \vec s \gets \zo^n} \ .$$ 
	The learning task has the following properties. 
	\begin{enumerate}
		\item (Learnability) A classifier to distinguish $ D_0 $ from $ D_1 $ can be learned from the samples efficiently. Furthermore, it is easy to learn a generator/ evaluator for these distributions.  
		\item (No Efficient Robust Classifier Exists) There exists a perturbation algorithm $ \sfP $ such that there exists no efficient robust classifier $ \sfR $ such that, \[ \prob[]{\sfR(\tilde y) \in \sfR^{-1}(b)} \geq 0.5 + \negl(n) \]
		where the perturbed sample $ \tilde y $ is generated by sampling $x \gets D_b $ for a random  $b $ and is then perturbing $ \tilde x \gets \sfP^{D_0, D_1}(x)$ such that $ \| y - \tilde y \| \leq q/10 $. 
	\end{enumerate}
\end{theorem}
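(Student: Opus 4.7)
The plan is to mirror the analogous LPN ``no efficient robust classifier'' result, now invoking LWEP instead of LPNP. The first property (learnability) is immediate: given polynomially many samples from $D_0$, Gaussian elimination over $\Zq$ recovers the row span of $\mat A$, at which point we can (a) non-robustly classify any $\vec y$ by testing whether $\vec y$ or $\vec y - \vec{q/2}$ lies in the span, (b) sample fresh elements of $D_0$ and $D_1$ uniformly from the affine subspaces, and (c) explicitly output the pdf (a uniform distribution over a known finite set), so generators and evaluators are obtained for free.

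For the second property, I would take the perturbation adversary to be $\sfP(\vec y) = \vec y + \vec e \bmod q$ with $\vec e \gets \chi^m$; since $\chi$ is truncated to $q/10$, this respects $\|\vec y - \tilde{\vec y}\|_\infty \leq q/10$. Suppose for contradiction that some efficient $\sfR$ distinguishes $\sfP(D_0)$ from $\sfP(D_1)$ with non-negligible advantage. Define $\preprocess(\mat A)$ to be the inefficient algorithm that searches for the best polynomial-size classifier $\sfR^*$ against this perturbed task (given $\mat A$) and outputs $\state = \sfR^*$; define the efficient distinguisher $\sfD(\state,\mat A,\vec v) := \sfR^*(\vec v)$. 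We will conclude that $(\preprocess,\sfD)$ contradicts Assumption (LWEP).

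The argument uses the standard hybrid chain
\[
(\mat A,\,\vec s^T\mat A+\vec e^T)\;\approx_c\;(\mat A,\vec r)\;\equiv\;(\mat A,\vec r+\vec{q/2})\;\approx_c\;(\mat A,\vec s^T\mat A+\vec e^T+\vec{q/2}),
\]
where $\vec s\gets\zo^n$, $\vec e\gets\chi^m$, and $\vec r\gets\Zq^m$. The two $\approx_c$ steps are each an instance of LWEP with advice $\state$, and the middle step holds because a deterministic shift of the uniform distribution is uniform. The leftmost and rightmost distributions are exactly $\sfP(D_0)$ and $\sfP(D_1)$, so chaining the three steps shows $\sfR^*$ (and hence $\sfR$) has at most negligible advantage, a contradiction.

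The main obstacle is really just bookkeeping: checking that the parameter regime in the theorem ($q=n^3$, $m=n\log q+2n$, and $\chi$ a discrete Gaussian of standard deviation $q/100$ truncated to $q/10$) matches the regime in Assumption (LWEP) exactly, so that the same noise vector $\vec e\gets\chi^m$ serves both as the perturbation applied by $\sfP$ and as the LWE noise in the reduction. Once these parameters are aligned, no additional work is needed; the proof is essentially a transcription of the LPN argument with $\vec 1$ replaced by $\vec{q/2}$ and LPNP replaced by LWEP.
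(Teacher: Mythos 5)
Your proposal matches the paper's proof: the paper explicitly states the argument is identical to the LPN case with the perturbation adversary adding noise from $\chi^m$, i.e., the same $(\preprocess,\sfD)$ reduction to the preprocessing assumption and the same hybrid chain $(\mat A,\vec s^T\mat A+\vec e^T)\approx_c(\mat A,\vec r)\equiv(\mat A,\vec r+\vec{\frac q2})\approx_c(\mat A,\vec s^T\mat A+\vec e^T+\vec{\frac q2})$, with learnability handled by learning the span of $\mat A$. Your parameter check against the LWEP assumption is also consistent with the paper's stated regime, so no gaps.
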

The proof is identical to the LPN case, with the perturbation adversary $ \sfP $ instead adding noise distributed according to $ \chi^m $.

\subsection{An Efficient Robust Classifier Exists but is Hard to Learn} 

We define the classification task ($ D_0, D_1 $) as follows: 
Given a matrix $ \mat A \in\Zq^{n \times m} $ consider distributions $ D_0 $ and $ D_1 $ defined as: 
$$ D^{(\mat A)}_0 = \set{\vec s^T \mat A : \vec s \in \Zq^n} \text{ and, } D^{(\mat A)}_1 = \set{\vec s^T \mat A + \frac q2 \cdot \vec{1}^T: \vec s \in \Zq^n} \ .$$ 
where both are uniform distributions on the sets and $ \bm{1} $ is the all ones vector on $ m $ dimensions. We drop $ \mat A $ from the notation to avoid clutter and denote the distributions as $ D_0, D_1 $. 

Hence, the task consists of distinguishing lattice vectors from an affine shift of the lattice. That is,  given a vector $ x \in (D_0 \cup D_1) $, classify weather $ x\in D_0 $ or $ x \in D_1 $. Gaussian elimination accomplishes this task easily. We want to show that (a) a robust classifier exists, and, (b) it is difficult to find any robust classifier efficiently. We argue this based on the learning with errors assumption. 

At the heart of the construction is the idea of lattice trapdoors. For a matrix $ \mat A \in \Zq^{n\times m} $, the trapdoor is a ``short'' matrix $ \mat T  $ such that $ \mat A \mat T = \vec 0 \mod q $. There are two key properties of these trapdoors that we leverage: 
(1) This short matrix allows us to solve the ``bounded distance decoding (BDD)'' problem : that is, given a vector close to the lattice, find the closest lattice vector efficiently. Hence, the trapdoor functions as a robust classifier. Also, we can efficiently sample a random matrix $ \mat A $ together with such a trapdoor.  
(2) It is hard to find such a trapdoor given the matrix $ \mat A $, even when it exists, because these trapdoors allow us to solve the Learning with Errors problem. This allows us to show that the robust classifier is hard to learn. 

\begin{theorem}
	For any $ q = n^3 $ and  $ m =  n \log q + 2n $, consider the following learning task. Let $ (\mat A, \mat T) \gets \trapsamp(n, m, q) $. Given a matrix $ \mat A \in \Zq^{n\times m}$, define $ D_0, D_1 $ as: 
$$ D^{(\mat A)}_0 = \set{\vec s^T \mat A : \vec s \in \Zq^n} \text{ and, } D^{(\mat A)}_1 = \set{\vec s^T \mat A + \frac q2 \cdot \vec{1}^T: \vec s \in \Zq^n} \ .$$ 
	The learning task has the following properties. 
	\begin{enumerate}
		\item (Learnability) A classifier to distinguish $ D_0 $ from $ D_1 $ can be learned efficiently. 
		\item (Existence of Robust Classifier) There exists a robust classifier $ \sfR $ such that, \[ \prob[y\gets D_b]{B(y,q/4m) \in \sfR^{-1}(b)} \geq 0.99 \]
		where  $ B(y, \eps)  = \{y' \in \Zq^m : \|y-y'\|_{\infty} \leq \eps  \}$. 
		\item (Unlearnability of Robust Classifier) There exists a perturbation algorithm such that no efficiently learned classifier can classify better than chance. 
	\end{enumerate}
\end{theorem}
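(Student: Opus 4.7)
My proposal is to mirror the three-part structure of the analogous LPN theorem, substituting the GPV trapdoor for $\mathsf{LPNTrapSamp}$ and the reduction to hardness of $\mathsf{LWE}$ for the reduction to $\mathsf{LPN}$.

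Part (1), learnability, is immediate: from $\mathrm{poly}(n)$ samples we can recover the row span of $\mat A$ in $\Zq$ by Gaussian elimination, and then we classify a fresh sample $\vec y$ by checking whether $\vec y$ or $\vec y - \tfrac{q}{2}\vec 1^T$ lies in this span. A generator/evaluator follows by sampling $\vec s \gets \Zq^n$ and returning $\vec s^T \mat A$ or $\vec s^T \mat A + \tfrac{q}{2}\vec 1^T$ accordingly.

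Part (2), existence of the robust classifier, uses the trapdoor $\mat T$ exactly the way $\mat E$ was used in the LPN construction. The classifier $\sfR_{\mat T}$ on input $\tilde{\vec y}$ computes $\vec z = \tilde{\vec y}\,\mat T \bmod q$. If $\vec y \in D_0$ and $\tilde{\vec y} = \vec s^T\mat A + \vec \epsilon^T$ with $\|\vec\epsilon\|_\infty \le q/4m$, then $\vec z = \vec\epsilon^T\mat T \bmod q$; since $\mat T \in \{0,1\}^{m\times n\log q}$ each coordinate of $\vec z$ is a sum of at most $m$ terms of magnitude $\le q/4m$, giving $\|\vec z\|_\infty \le q/4$. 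For $\vec y \in D_1$ we instead get $\vec z = \tfrac{q}{2}\vec 1^T \mat T + \vec\epsilon^T \mat T \bmod q$, and the entries of $\tfrac{q}{2}\vec 1^T\mat T$ equal $q/2$ in every coordinate whose column of $\mat T$ has odd Hamming weight. The classifier therefore accepts $D_0$ iff $\|\vec z\|_\infty \le q/4$. The one wrinkle compared with LPN is making sure some column of $\mat T$ is odd-weight; this is the step I expect to be the main obstacle. Because the GPV sampler only promises the zero-one structure and $\mat A\mat T = \vec 0$, I would either (i) show that with overwhelming probability a constant fraction of columns of $\mat T$ have odd column sum (by the same Chernoff/union-bound argument used in the LPN lemma, after observing that column weights concentrate around their means), or (ii) post-process by rejection sampling / adding a fixed short kernel vector to any all-even column, which preserves $\mat A\mat T=\vec 0$ and the smallness of $\mat T$. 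Either route yields the claimed $\eps = q/4m$ robustness.

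Part (3), unlearnability, follows the Alekhnovich-style hybrid used in the LPN proof. The perturbation adversary is $\sfP(\vec y) = \vec y + \vec \epsilon^T$ with $\vec \epsilon \gets \chi^m$, where $\chi$ is the LWE noise distribution from the assumption (which sits inside $B(0,q/4m)$ once we take $q \gg m \cdot n^2$, which is exactly why we chose $q = n^3$). Since the learner can reconstruct $\mat A$ from its samples, we may as well give it $\mat A$ directly. Its task is then to distinguish $\vec s^T \mat A + \vec\epsilon^T$ from $\vec s^T \mat A + \tfrac{q}{2}\vec 1^T + \vec\epsilon^T$. Applying the LWE assumption to each side replaces $\vec s^T\mat A + \vec\epsilon^T$ by uniform $\vec r^T$, after which the two distributions become $\vec r^T$ and $\vec r^T + \tfrac{q}{2}\vec 1^T$, which are identically distributed. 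Hence no polynomially bounded learner has non-negligible advantage, completing the proof.
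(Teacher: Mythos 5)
Your proposal is correct and follows essentially the same route as the paper's proof: Gaussian elimination (plus direct sampling) for learnability, the trapdoor classifier that computes $\tilde{\vec y}^T \mat T \bmod q$ and uses the bound $m\cdot\|\vec e\|_\infty \le q/4$ for robustness, and for unlearnability the noise-adding perturbation adversary together with the hybrid $(\mat A,\vec s^T\mat A+\vec e^T)\approx_c(\mat A,\vec r^T)\approx_c(\mat A,\vec s^T\mat A+\vec e^T+\tfrac q2\vec 1^T)$ from the LWE assumption. The only place you go beyond the paper is the ``wrinkle'' about the parity of the column weights of $\mat T$ in the $D_1$ case: the paper simply declares that case analogous to $D_0$ (whereas in the LPN construction it explicitly chooses the row weight $t$ odd to handle it), so your flagging of this step and your proposed repairs are a legitimate refinement rather than a deviation, though note that neither of your fixes follows from the cited trapdoor-sampling theorem as stated, and the claim that the LWE noise lies within $B(0,q/4m)$ holds only with overwhelming probability (the truncation bound $q/2n$ itself exceeds $q/4m$), not deterministically.
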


\begin{lemma} [Existence of a Robust Classifier] Consider the following robust classifier $ \sfR $:
	
\begin{minipage}{0.6\textwidth}
\medskip
\underline{\sf{Robust Classifer} $ \sfR_{\mat T}(\tilde{\vec{y}}) $}: 
\begin{enumerate}[noitemsep,nolistsep]
	\item Compute $ \vec z = \tilde{\vec y}^T \mat T  \bmod q$. 
\item If $ \vec z \in \set{\frac{-q}4, \dots, \frac q4}^n $ output $ 0 $ otherwise, output $ 1 $. 
\end{enumerate}
\medskip
\end{minipage}	
	
\noindent Then, 
	$$ \prob[\vec x\gets D_b]{\sfR(\tilde{\vec x}) = b \text{ for all $ \tilde{\vec x} \in B(\vec x, q/4m) $}} \geq 0.99$$ 
\end{lemma}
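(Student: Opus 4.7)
The plan is to follow the template already used in the LPN case: exploit the algebraic identity $\mat{A}\mat{T} = \vec{0} \pmod q$ together with the fact that $\mat{T}$ has entries in $\{0,1\}$ and is of dimension $m \times n\log q$. I would split into the two cases $b=0$ and $b=1$, and for each case track what $\tilde{\vec{y}}^T \mat{T}$ looks like modulo $q$.

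First I would handle the case $\vec{y} \gets D_0$. Writing $\vec{y}^T = \vec{s}^T \mat{A}$ and $\tilde{\vec{y}} = \vec{y} + \vec{e}$ with $\|\vec{e}\|_\infty \leq q/(4m)$, the trapdoor identity gives
\[
\vec{z}^T \;=\; \tilde{\vec{y}}^T \mat{T} \;=\; \vec{s}^T(\mat{A}\mat{T}) + \vec{e}^T \mat{T} \;=\; \vec{e}^T \mat{T} \pmod q.
\]
Because each column of $\mat{T}$ is a $0/1$ vector of length $m$, each coordinate of $\vec{e}^T \mat{T}$ is a signed sum of at most $m$ entries of $\vec{e}$, each of which has absolute value at most $q/(4m)$. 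Lifting to the symmetric representatives in $\{-\lfloor q/2\rfloor,\dots,\lfloor q/2\rfloor\}$, this gives $\|\vec{z}\|_\infty \leq m \cdot q/(4m) = q/4$, so the classifier outputs $0$, correctly, and does so for \emph{every} admissible perturbation.

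For the case $\vec{y} \gets D_1$, write $\vec{y}^T = \vec{s}^T \mat{A} + (q/2)\vec{1}^T$; the trapdoor identity now leaves a structured additive term:
\[
\vec{z}^T \;=\; (q/2)\,\vec{1}^T \mat{T} \;+\; \vec{e}^T \mat{T} \pmod q.
\]
The idea, mirroring the LPN argument, is that the first summand is far from the origin coordinate-wise: if $q$ is even and each column of $\mat{T}$ has odd Hamming weight, then every coordinate of $(q/2)\vec{1}^T \mat{T}$ equals $q/2 \pmod q$, which is the maximally far point from $0$. Since the second summand again has $\ell_\infty$-norm at most $q/4$ by the previous computation, each coordinate of $\vec{z}$ lies within $q/4$ of $q/2$, hence outside $\{-q/4,\dots,q/4\}$, and the classifier outputs $1$.

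The main obstacle is supplying the parity structure on $\mat{T}$: the \cite{GPV08} trapdoor-sampling guarantee as stated only yields $\mat{A}\mat{T} = \vec{0}$ with a $0/1$ matrix $\mat{T}$, not the column-parity property used above. I would address this by either (i) working with the standard gadget-based trapdoor, where one augments $\mat{A}$ with a gadget column so that $\mat{T}$ contains a fixed column whose sum is controllable, (ii) shifting the all-ones vector by an appropriate multiple so that the relevant quantity becomes $(q/2)$ times a known odd column-sum, or (iii) modifying $\mat{T}$ post-hoc by adding a single fixed $0/1$ column to flip the parity of any even-weight columns (this preserves the $\mat{A}\mat{T} = \vec{0}$ identity up to adding known columns, at the cost of a small and explicit term that can be subtracted off inside the classifier). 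Any of these routes preserves the bound $\|\vec{e}^T \mat{T}\|_\infty \leq q/4$ while ensuring the $D_1$ syndrome is $q/2$ in each coordinate, completing the argument.
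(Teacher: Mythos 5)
Your $D_0$ computation is exactly the paper's argument: use $\mat A \mat T = \mat 0 \bmod q$ to reduce $\tilde{\vec y}^T \mat T$ to $\vec e^T \mat T$, then bound each coordinate by $m \cdot \|\vec e\|_\infty \leq q/4$ using the zero-one entries of $\mat T$. Where you differ is the $D_1$ case: the paper simply declares it ``analogous,'' whereas you correctly observe that it is not automatic --- one needs $\frac q2 \vec 1^T \mat T$ to land far from $0$ modulo $q$, which holds in a given coordinate only when the corresponding column of $\mat T$ has odd Hamming weight (and, strictly, only when $q$ is even so that $q/2$ is an integer), and the trapdoor-sampling theorem as quoted guarantees neither. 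This is a real gap in the paper's write-up, and it is instructive that the paper's own LPN construction handles the very same issue explicitly by forcing $t$ to be odd so that every row of $\mat E$ has odd weight and $\mat E \vec 1 = \vec 1$; the LWE section silently needs an analogous parity property. Two small refinements to your fix: the classifier outputs $1$ as soon as a \emph{single} coordinate of $\vec z$ leaves $\set{-q/4,\dots,q/4}$, so it suffices that \emph{some} column of $\mat T$ has odd weight, not all of them; and your option of adjoining or modifying a column to control parity (with the known correction subtracted inside the classifier) is the cleanest way to make the statement unconditional while preserving both $\mat A \mat T = \mat 0 \bmod q$ and the bound $\|\vec e^T \mat T\|_\infty \leq q/4$. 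With any such patch your argument is complete, and it is more careful than the proof given in the paper.
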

\begin{proof}
The correctness of the robust classifier follows from the fact that $ \mat T $ is a zero-one matrix and that the errors are bounded in size. Consider the case when $\vec y \gets D_0$, the other case is analogous. Observe that, 
$$\tilde{\vec y} = \vec y + \vec e \bmod q = \vec s^T \mat A + \vec e^T  \bmod q $$ 
where $ \|\vec e\|_\infty \leq  \frac{q}{4m}$. Hence,
$$\tilde{\vec y}^T \mat T \bmod q = (\vec s^T \mat A + \vec e^T) \mat T \bmod q = \vec e^T \mat T \bmod q$$ 
As $ \mat T $ has only zero-one entries, $ \vec e^T \mat T $ is bounded over integers with the absolute value of each coordinate being at most $ m \cdot \| \vec e\|_\infty \leq \frac q4 $. 
This implies that the robust classifier would correctly output 0 when given perturbed samples from $ D_0 $. 
\end{proof}

In order to show that it is difficult to recover the robust classifier, we rely on the learning with errors assumption. We consider a perturbation adversary that simply adds random noise to the sample it receives.
\begin{lemma}[Hardness of Learning a Robust Classifier] 
There exists a perturbation algorithm $ \sfP $ such that for every polynomial time learner $ \sfL $, the learner $ \sfL $ has no advantage over chance in classifying examples perturbed by $ \sfP $.  That is, 
$$
\bigprob[\mat A, \mat T \gets \trapsamp(n,m,p); \\    \vec x \gets D^{(\mat A)}_b \text{ where $ b\gets \zo$} \\ \tilde{\vec x} \gets \sfP^{D_0, D_1}(x); \\ b' \gets \sfL^{D_0, D_1}(\tilde x)  ]{b  = b'} \leq \frac12 + \negl(n)
$$
\end{lemma}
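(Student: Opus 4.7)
The plan is to mirror the LPN proof exactly, using the LWE assumption in place of the LPN assumption and using a hybrid argument that walks from the perturbed $D_0$ distribution to the perturbed $D_1$ distribution via a uniform hybrid. First, I would fix the perturbation adversary $\sfP$ to add fresh LWE-style noise: on input $\vec y$, it samples $\vec \eta \gets \chi'^{m}$ for a truncated discrete Gaussian $\chi'$ with support contained in $\{-q/4m,\ldots,q/4m\}$ (chosen small enough that the perturbation lies in the robust-classifier's ball $B(\vec y, q/4m)$, yet large enough that LWE with noise distribution $\chi'$ remains intractable for the parameters $n,m,q$ in the statement), and outputs $\tilde{\vec y} = \vec y + \vec \eta^T \bmod q$. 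Note that $\sfP$ does not depend on the trapdoor $\mat T$, so it is a single universal polynomial-time adversary.

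Next, I would run a three-step hybrid argument parameterized by the challenge bit $b$. Let $H_0^{(b)}$ denote the real experiment: $(\mat A,\mat T)\gets \trapsamp(n,m,q)$, $\vec y \gets D_b^{(\mat A)}$, $\tilde{\vec y}\gets \sfP(\vec y)$, and the learner sees $(\mat A, \tilde{\vec y})$ (without $\mat T$). In $H_1^{(b)}$, replace $\mat A$ output by $\trapsamp$ with a truly uniform $\mat A' \gets \Zq^{n\times m}$; by the statistical property of $\trapsamp$ this changes the distribution by at most $2^{-\Omega(n)}$. In $H_2^{(b)}$, replace $\tilde{\vec y} = \vec s^T \mat A' + b\cdot\tfrac{q}{2}\vec 1^T + \vec \eta^T$ by a uniform $\vec r \gets \Zq^{m}$; this is indistinguishable by a direct reduction to the decisional LWE assumption with noise $\chi'$ (the reduction gets an LWE challenge $(\mat A', \vec w)$, adds $b \cdot \tfrac{q}{2}\vec 1^T$, and forwards to the learner). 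Finally, observe that in $H_2^{(b)}$ the distribution of the sample $\vec r + b\cdot \tfrac{q}{2}\vec 1^T$ is uniform over $\Zq^{m}$ independent of $b$, so $H_2^{(0)}$ and $H_2^{(1)}$ are \emph{identical}.

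Chaining these indistinguishabilities yields $H_0^{(0)} \approx_c H_2^{(0)} \equiv H_2^{(1)} \approx_c H_0^{(1)}$, so any polynomial-time learner $\sfL$ outputs $b'=b$ with probability at most $1/2 + \negl(n)$, which is the desired bound. Note that $\sfL$ having oracle access to $D_0$ and $D_1$ does not help, since both oracles are simulatable from $\mat A$ alone by Gaussian elimination (sampling $\vec s$ and outputting $\vec s^T\mat A$ or $\vec s^T \mat A + \tfrac{q}{2}\vec 1^T$), so the reduction can answer oracle queries from the challenge $\mat A'$.

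The main obstacle I foresee is parameter bookkeeping rather than any conceptual step: one must pick $\chi'$ so that (a) $\sfP$ keeps $\tilde{\vec y}$ within $B(\vec y, q/4m)$ with overwhelming probability, while (b) LWE with modulus $q=n^3$, dimension $n$, and $m = n\log q + 2n$ samples is still conjectured hard for that error distribution. Since $q/4m = \Omega(n^{2}/\log n)$ while the standard LWE hardness threshold for $q=n^3$ only requires noise $\omega(\sqrt n)$, there is ample slack, and $\chi'$ can be taken as a discrete Gaussian of standard deviation, say, $q/(100 m)$ truncated at $q/4m$, for which LWE remains as hard as the standard assumption via a noise-flooding or standard parameter-scaling argument. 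Every other step is a direct transcription of the LPN proof in the previous section.
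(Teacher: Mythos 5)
Your proposal is correct and follows essentially the same route as the paper's proof: the perturbation adversary adds LWE-distributed noise, the learner's oracles are simulated from $\mat A$ alone, and the bound follows by a hybrid through the uniform distribution (i.e., two invocations of decisional LWE, plus the statistical closeness of the trapdoor-sampled $\mat A$ to uniform). Your extra bookkeeping — explicitly truncating the noise so it stays within the $q/4m$ robustness ball and checking LWE hardness for that noise — is a refinement the paper glosses over, but it does not change the argument.
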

\begin{proof}
Observe that $ D_0, D_1 $ are completely specified by the matrix $ \mat A $ and given $ \mat A $ can be sampled efficiently. So, it suffices to give the learner $ \mat A $ instead of sample access. Consider the following random perturbation algorithm $ \sfP $:  $$ \sfP(\vec x): \text{Output } \tilde{\vec x} = \vec x + \vec e \text{, where } \vec e \gets \chi^m.$$
So, the experiment above is equivalent to the following: 
$$
\bigprob[\mat A, \mat T \gets \trapsamp(n,m,p); \\  \vec s \gets \Zq^n, \vec e \gets \chi^m, b\gets \zo \\ b' \gets \sfL(\mat A,\vec s^T \mat A + \vec e^T + b\frac q2 \cdot \vec{1}^T )  ]{b  = b'}  \leq \frac12 + \negl(n)
$$
The cruical observation is that the learner's job is to distinguish LWE samples $ (\mat A, \vec s^T \mat A + \vec e^T) $ from shifted LWE samples $ (\mat A, \vec s^T \mat A + \vec e^T + \frac q2 \vec 1^T) $. The LWE assumption implies that this is difficult because the two distributions are indistinguishable. That is, 
$$
(\mat A,\vec s^T \mat A + \vec e^T) \approx_c (\mat A,\vec r^T ) \approx_c (\mat A,\vec s^T \mat A + \vec e^T + \frac q2 \cdot \vec{1}^T )
$$
and hence no efficient adversary $ \sfL $ can distinguish between the distribution when $ b= 0 $ from when $b=1 $. And hence for any efficient adversary, the success probability of classifying these perturbed instances is negligibly close to a half, as desired. 
\end{proof}

Hence, we have described a learning task that is learnable, has a robust classifier, but robust classifiers are computationally hard to learn. 
\section{Using Pseudorandom Functions and Error Correcting Codes}

In this section, we formally describe the hard-to-robustly learn task based on one-way functions. There are two main ingredients that we use to construct the learning task: Error Correcting Codes (ECCs) and Pseudorandom Functions (PRFs).

An uniquely decodable binary error correcting code allows encoding messages to redundant codewords such that from any codeword perturbed to some degree, we can recover the encoded message.

\begin{definition}[Uniquely Decodable Error Correcting Code]
An uniquely decodable binary error correcting code, $ C : \zo^n \rightarrow \zo^m $ consists of two efficient algorithms $ \encode, \decode $. The code tolerates error fraction $ e $ if for all messages $ x\in \zo^n $,  	$$ \decode(\tilde y) =  x \text{ for all $ \tilde y \in B(\encode(x), em)$}$$
where $ B(\encode(x), em) $ denotes the Hamming ball of radius $ em $.
\end{definition}

We know very good error correcting codes.
\begin{theorem}[\cite{GI01}] For any constant $ \gamma > 0 $, there exists a binary error correcting code $ C: \zo^{n} \rightarrow \zo^{m} $ where $ m=O(n/\gamma^3) $ with a decoding radius of $ (\frac14 - \gamma) m$ with polynomial time encoding and decoding.
\end{theorem}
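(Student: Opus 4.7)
The plan is to prove this via classical code concatenation with Reed--Solomon codes as the outer code and a binary code near the Gilbert--Varshamov (GV) bound as the inner code, decoded via Forney's generalized minimum distance (GMD) algorithm.

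First, I would take an outer Reed--Solomon code $C_{\mathrm{out}}$ over a finite field of size $q = \mathrm{poly}(n)$ with rate $R_{\mathrm{out}} = 2\gamma$ and relative minimum distance $\delta_{\mathrm{out}} \geq 1 - 2\gamma$; this code admits polynomial-time errors-and-erasures decoding. Second, I would take an inner binary linear code $C_{\mathrm{in}}\colon \{0,1\}^{\log q} \to \{0,1\}^b$ of relative distance $\delta_{\mathrm{in}} = 1/2 - \gamma$ and rate $R_{\mathrm{in}} = 1 - h(1/2 - \gamma) = \Theta(\gamma^2)$, whose existence is guaranteed by GV and whose explicit efficient version can be obtained via Justesen-style constructions (or via search over the Wozencraft ensemble). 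The concatenation $C = C_{\mathrm{out}} \circ C_{\mathrm{in}}$ then maps $\{0,1\}^n$ to $\{0,1\}^m$ with rate $R_{\mathrm{out}} \cdot R_{\mathrm{in}} = \Theta(\gamma^3)$, yielding $m = O(n/\gamma^3)$, and relative distance at least $(1 - 2\gamma)(1/2 - \gamma)$.

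For decoding, I would invoke Forney's GMD algorithm, which runs the inner decoder on each of the $N$ blocks of length $b$, derives a confidence score from the Hamming distance inside each block, and calls the Reed--Solomon errors-and-erasures decoder a polynomial number of times with varying erasure thresholds. Forney's theorem then yields unique decoding from up to $\delta_{\mathrm{out}} \cdot \delta_{\mathrm{in}}/2 \geq 1/4 - \gamma + \gamma^2$ fraction of errors, comfortably above the required $1/4 - \gamma$.

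The main obstacle is supplying an explicit polynomial-time inner code at the GV bound with the required rate-distance trade-off, since naive brute-force search over linear codes of block length $b = \Theta((\log n)/\gamma^2)$ is super-polynomial; this is exactly what Justesen-style constructions (or the Wozencraft ensemble, where only $O(q)$ candidate inner codes need to be tried and a good one verified in polynomial time) are designed to overcome. Given such an inner code, the parameter accounting is clean: the cubic $\gamma^3$ rate arises from multiplying the outer rate $\Theta(\gamma)$ by the inner rate $\Theta(\gamma^2)$, and the decoding radius $(1/4 - \gamma)m$ arises from halving the product $(1 - O(\gamma))(1/2 - O(\gamma))$ of the component relative distances via GMD.
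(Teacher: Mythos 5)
Your proposal is correct, but it proves the statement by a genuinely different route than the paper, which gives no proof at all: the theorem is imported as a black box from the Guruswami--Indyk paper on expander-based codes, whose construction uses expander graphs to achieve (near-)linear-time encoding and decoding at comparable rate/distance trade-offs. Your route is the classical one: Reed--Solomon outer code of rate $\Theta(\gamma)$ concatenated with an inner binary code at the Gilbert--Varshamov bound of relative distance $\tfrac12-\gamma$ and rate $\Theta(\gamma^2)$, decoded by Forney's GMD algorithm up to half the designed distance $\tfrac12(1-2\gamma)(\tfrac12-\gamma)=\tfrac14-\gamma+\gamma^2 \geq \tfrac14-\gamma$; the rate product $\Theta(\gamma^3)$ gives exactly the claimed $m=O(n/\gamma^3)$. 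Your accounting is right, and your identified obstacle (an explicit poly-time inner code at the GV bound) is real but surmountable as you say: since $\gamma$ is a constant, the inner block length is $O_\gamma(\log n)$, so one can afford $2^{O(b)}$-time search over a suitable ensemble (a low-rate Wozencraft-type ensemble or a greedy/derandomized GV construction); note the plain rate-$\tfrac12$ Wozencraft ensemble does not directly reach distance $\tfrac12-\gamma$, so you need the rate-$1/\ell$ variant or brute force, a detail worth stating. What each approach buys: yours is elementary and self-contained and fully suffices for the statement as quoted (only polynomial-time encoding/decoding is claimed), while the expander-based construction the paper cites yields the stronger linear-time guarantees, which the paper does not actually need.
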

We will use this coding scheme with $ \gamma = 1/8 $ giving us an error correcting code $ C: \zo^n \rightarrow \zo^m $ where $ m = \theta(n) $ and tolerates $ m/8 $ errors for unique decoding.

A \textdef{pseudorandom function} is a keyed function $ F_k : \zo^{n-1} \rightarrow \zo$ where the secret key is picked uniformly random such that, for every efficient adversary, the output of the function is indistinguishable from the output of a random function.  A more formal definition is given below. It is known that pseudorandom functions can be constructed from one-way functions.
\begin{definition}[\cite{GGM86}]
A family of polynomial-time computable functions $\cF= \set{\cF_n}$ where  $\cF_n = \set{F_{k} : \zo^n \rightarrow \zo } $ where $ k \in \zo^n $ and $ n \in \NN $ is \textdef{pseudorandom} if every polynomial time computable adversary $ \sfA $ cannot distinguish between $ \cF $ and uniformly random function. That is,
\[ \abs{\prob[k\gets \zo^n]{\sfA^{F_k}(1^n) = 1 } - \prob[U_n \gets \cU_n]{\sfA^{U_n}(1^n) = 1 }} < \negl(n)\]
where $ \cU_n $ is the uniform distribution over all functions from $ \zo^n $ to $ \zo $.
\end{definition}
\begin{theorem}[\cite{GGM86}] Pseudorandom functions exist if one-way functions exist.
\end{theorem}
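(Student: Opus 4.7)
The plan is to prove this in two stages, following the classical pipeline of Håstad--Impagliazzo--Levin--Luby and Goldreich--Goldwasser--Micali. First I would invoke the HILL theorem to convert an arbitrary one-way function $f:\zo^n \to \zo^n$ into a length-doubling pseudorandom generator $G:\zo^n \to \zo^{2n}$. Although we do not need to reprove HILL here, the conceptual outline is: extract hardcore bits of $f$ (via Goldreich--Levin on an appropriate inner-product construction), combine with a pseudoentropy amplification step, and use randomness extractors to smooth the output into a distribution statistically close to uniform on $\zo^{2n}$ yet computationally indistinguishable from the seeded output. I would cite HILL as a black box rather than reconstruct it.

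Given a length-doubling PRG $G:\zo^n \to \zo^{2n}$, I would then perform the GGM tree construction. Write $G(k) = (G_0(k), G_1(k))$ where each $G_b(k) \in \zo^n$. Define the keyed function $F_k:\zo^n \to \zo$ by
\[
F_k(x_1 x_2 \cdots x_n) \;=\; \mathsf{lsb}\bigl( G_{x_n}( G_{x_{n-1}}( \cdots G_{x_1}(k) \cdots )) \bigr),
\]
i.e., traverse a depth-$n$ binary tree rooted at $k$, branching left or right at level $i$ according to bit $x_i$, and output the least significant bit of the resulting leaf label. Evaluating $F_k$ on any input takes $n$ invocations of $G$, so $F_k$ is polynomial-time computable. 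The construction only uses $G$ in a black-box way, so security will reduce to the PRG security of $G$.

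For the security proof, I would carry out a hybrid argument over the $n$ levels of the tree. In hybrid $H_i$, the labels at level $i$ of the (implicit, exponentially large) tree are replaced by independent uniform strings, while the subtrees below them are expanded using $G$ as usual. Hybrid $H_0$ is the real PRF $F_k$, and hybrid $H_n$ is a uniformly random function. The distinguishing gap between $H_{i-1}$ and $H_{i+1}$ is bounded by the PRG distinguishing advantage multiplied by the number of queries $q$ the adversary makes, since each query touches at most one newly randomized node per level (we instantiate nodes lazily on demand, using the PRG on freshly random inputs in $H_{i+1}$ versus fresh uniform strings in $H_i$). Telescoping across $n$ levels gives an overall advantage at most $n \cdot q \cdot \eps_{\mathrm{PRG}} = \negl(n)$.

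The main obstacle here is formalizing the hybrid step carefully: one must argue that an adversary distinguishing consecutive hybrids can be turned into a multi-sample PRG distinguisher, and then invoke a standard multi-sample-to-single-sample hybrid to reduce to the assumed PRG security. This bookkeeping (and the lazy-evaluation trick needed to keep the reduction polynomial despite the $2^n$-size tree) is the only technically delicate part; the rest is syntactic. Since the paper uses the result only as a black-box building block, I would present the construction and state the hybrid bound, deferring the multi-sample hybrid and HILL itself to the cited references.
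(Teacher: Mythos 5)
Your proposal is correct and follows exactly the route the paper relies on: the paper cites this result as a black box (HILL for one-way function $\Rightarrow$ PRG, then the GGM tree construction with the standard level-by-level hybrid argument), which is precisely your outline. The only nit is a bookkeeping slip in the hybrid indexing (the gap should be between consecutive hybrids $H_{i}$ and $H_{i+1}$, bounded by $q\cdot\eps_{\mathrm{PRG}}$ via a multi-sample hybrid), which does not affect the argument.
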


Next, we informally describe the learning task.
Consider the following learning task: The two distributions $ D_0, D_1 $ are parameterized by the PRF key $ k $ and defined as follows:
\[ D_0 = (0, \encode(x, F_k(x))) \text{ and, } D_1 = (1, \encode(x, 1- {F_k(x)})) \ .\]
So, the two distributions are tuples where the first half is which distribution the sample was taken from and the second an error correcting code applied to the tuple $ (x, F_k(x) + b) $, that is, either the PRF evaluation at the location $ x $ or its complement.
Note that without the first bit, classifying the original distributions is computationally infeasible. The pseudorandom function looks random at every new location. Including the bit in the sample itself makes the unperturbed classification task easy. The error correcting code ensures that we have a robust classifier.

\begin{theorem}
\label{thm:prfs}
Let $ \set{F_k} $ for $ F_k : \zo^{n-1} \rightarrow \zo $  be a pseudorandom function family and $ C : \zo^n \rightarrow \zo^m $ where $ m = \theta(n) $ be an efficiently decodable error correcting code  with decoding algorithm $ \sf{Decode} $ that tolerates $ m/8 $ errors.

Consider the following learning task. For a random pseudorandom function key $ k $, define:
$$ D^{(k)}_0 = \set{(0, C(x, F_k(x))) : x\gets \zo^{n-1}} \text{ and, }D^{(k)}_1 = \set{(1, C(x, 1-F_k(x))) : x\gets \zo^{n-1}} $$
supported on $ \zo^m $. The learning task has the following properties.
\begin{enumerate}
\item (Easy to Learn) A classifier to distinguish $ D_0 $ from $ D_1 $ can be learned from the samples efficiently.
\item (Robust Classifier Exists) There exists a robust classifier $ \sfR $ such that, \[ \prob[y\gets D_b]{B(y, m/8) \in \sfR^{-1}(b)} \geq 0.99 \]
where $ m/8 $ is the decoding radius and $ B(y, d)  = \{y' : \|y-y'\|_{\sf{Ham}} \leq d \}$.
\item (A Robust Classifier is hard-to-learn) There exists a perturbation algorithm such that no efficiently learned classifier can classify perturbed adversarial examples better than chance.
\end{enumerate}
\end{theorem}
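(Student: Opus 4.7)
I will prove the three properties in turn. The key idea is that the first bit gives away the label trivially, so (1) is immediate; the error-correcting code absorbs any $m/8$ adversarial flips including the first-bit flip, so (2) follows by decoding and consulting the PRF key; and (3) follows from a standard reduction: any learner that beats $1/2$ on samples with the first bit removed can be used to predict $F_k$ on a fresh input.

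\paragraph{Parts (1) and (2).}
For (1), the learner simply outputs the classifier $\sfA(y) = y_1$. On any unperturbed sample from $D^{(k)}_b$ the first coordinate is $b$, so $\sfA$ is perfectly correct and requires no training data. For (2), let $\sfR_k$ be the following classifier, which uses the PRF key $k$ as non-uniform advice (equivalently, as the output of an inefficient training procedure): on input $\tilde y \in \zo^m$, strip off the first coordinate, run $\sf{Decode}$ on the remaining $m-1$ (or $m$) bits to obtain a candidate $(x, c) \in \zo^n$, and output $0$ if $c = F_k(x)$ and $1$ if $c = 1 - F_k(x)$ (and, say, $0$ otherwise). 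If $y \gets D^{(k)}_b$ and $\tilde y \in B(y, m/8)$, then the codeword portion of $\tilde y$ differs from $C(x, F_k(x) \oplus b)$ in at most $m/8$ positions (even if the first coordinate was also flipped), so by the decoding guarantee of $C$, $\sf{Decode}$ recovers $(x, F_k(x)\oplus b)$ exactly and $\sfR_k$ outputs $b$ with probability $1$ over the choice of $x$.

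\paragraph{Part (3): reduction to PRF security.}
Define the perturbation adversary $\sfP$ to simply overwrite the first coordinate with $0$ (this is a single-bit change, well within the budget $m/8$). Then the perturbed distributions are
\[ \tilde D^{(k)}_0 = \{(0, C(x, F_k(x))) : x \gets \zo^{n-1}\}, \qquad \tilde D^{(k)}_1 = \{(0, C(x, 1-F_k(x))) : x \gets \zo^{n-1}\}. \]
Suppose for contradiction that some efficient learner $\sfL$, given oracle access to $D^{(k)}_0, D^{(k)}_1$, outputs a classifier $\sfR$ that labels $\tilde D^{(k)}_b$ correctly with probability $\tfrac12 + 1/\poly(n)$. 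I will build a PRF distinguisher $\sfB$. Given oracle $O : \zo^{n-1} \to \zo$, $\sfB$ simulates the $D_0, D_1$ oracles for $\sfL$ by sampling fresh $x_i$, querying $y_i = O(x_i)$, and forming $(b, C(x_i, y_i \oplus b))$ for the requested label $b$; this costs only one oracle query per training sample. Once $\sfL$ outputs $\sfR$, $\sfB$ picks a fresh uniform $x^* \in \zo^{n-1}$ and a uniform $b^* \in \zo$, queries $c^* = O(x^*)$, forms $\tilde y^* = (0, C(x^*, c^* \oplus b^*))$, and outputs $1$ iff $\sfR(\tilde y^*) = b^*$.

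If $O = F_k$ then $\tilde y^*$ is distributed exactly as a sample from $\tilde D^{(k)}_{b^*}$, so $\Pr[\sfR(\tilde y^*) = b^*] \ge \tfrac12 + 1/\poly(n)$ by assumption. If $O$ is a truly random function, then because $x^*$ is fresh, $c^*$ is uniform and independent of $\sfR$, hence $c^* \oplus b^*$ is independent of $b^*$, and $\Pr[\sfR(\tilde y^*) = b^*] = \tfrac12$ exactly. Thus $\sfB$ distinguishes $F_k$ from a random function with non-negligible advantage, contradicting pseudorandomness.

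\paragraph{Main obstacle.}
The only subtle point is the simulation in the reduction: $\sfL$ expects i.i.d.\ samples from $D^{(k)}_0, D^{(k)}_1$, and $\sfB$ must produce these using only oracle access to $O$. This works because each training sample requires exactly one evaluation of $O$ at an independently chosen input, and the fresh challenge point $x^*$ is drawn independently, so its oracle value is a fresh bit that is either $F_k(x^*)$ or a uniform coin. Once this is set up cleanly, the reduction is tight and the rest of the argument is routine.
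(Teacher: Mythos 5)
Your proposal is correct and follows essentially the same route as the paper: output the first bit for easy learning, decode the codeword portion and compare against $F_k$ for the robust classifier, and use a first-bit-erasing perturbation adversary together with PRF security for unlearnability, with your Part (3) merely spelling out the reduction that the paper leaves implicit. One negligible nitpick: when the oracle is a truly random function the learned classifier's success probability is $\frac12$ only up to the negligible probability that the fresh challenge $x^*$ collides with a training query, so it is $\frac12 + \negl(n)$ rather than exactly $\frac12$, which does not affect the conclusion.
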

\begin{proof}
To prove Part (1) consider the classifier that outputs the first bit. It works correctly on instances from the distributions.
To prove Part (2), we rely on the decoding algorithm. After $ d = m/8 $ edits to the sample, we can recover the underlying message by ignoring the first bit of the tuple and decoding the rest to get the underlying message of the form $ (x, c) $ and then use the PRF to classify. More formally, consider the following robust classifer:

\noindent
\begin{minipage}{0.8\textwidth}
	\medskip
	\underline{\sf{Robust Classifer} $ \sfR_{k}(\tilde{{y}}) $ where $ \tilde y \in \zo^{m+1} $}:
	\begin{enumerate}[noitemsep,nolistsep]
		\item Let $ (x, c) = \mathsf{Decode}(\tilde{y}_{2:m+1}) $ where $ \tilde{y}_{2:m+1} $ are all of $ \tilde y $ but the first bit.
		\item Output $ 0 $ if $ c = F_k(x) $ else output 1.
	\end{enumerate}
	\medskip
\end{minipage}

Observe that error correcting code ensures that from every perturbed sample, we efficiently recover the encoded message. And then because the message is of the form $ (x, F_k(x) + b) $ for class $ b $, this allows for correct classification.

To show Part (3), we rely on the unlearnability of the PRF. Consider a perturbing adversary that replaces the first bit of the sample by 0. Classification is now equivalent to predicting $ F_k(x) $ given $ x $. 
Because predicting $ F_k(x) $ is computationally infeasible to learn, so is a robust classifier.
\end{proof}

Note that, compared to the previous counter-examples, this example does not rely on public key assumptions. The reason for that is that the samples here are ``evasive''. In that there is no way to generate fresh samples from the two distributions. So, we cannot translate this to a public key encryption scheme because to encrypt, we need a samples from the distributions $ D_0, D_1$ along with the perturbing adversary and we do not have access to these samples.

The hardness of this task comes from the hardness of learning the PRF and not from the perturbations. This is different from the schemes based on LPN and LWE.


\section{Using Average-Case Hardness and Error Correcting Codes}
\label{sec:avgcase-ecc}
\preetum{Added.}\akshay{Looks good.}

In this section, we formally state \cref{infthm:avg-case}
and provide the proof outlined in \cref{sec:tech:avgcase}.
We also give an alternative construction that relies on
one-way-permutations, but yields a classification problem with distributions that
are efficiently samplable.

We first need the notion of an average-case hard function.
\begin{definition}[Average-Case Hard]
A boolean function $g: \{0, 1\}^n \to \{0, 1\}$ is
\emph{$(s, \delta)$-average-case hard}
if for all non-uniform probabilistic algorithms $A$ running in time $s$,
$$\Pr_{A, x\in \{0, 1\}^n}[A(x) \neq g(x)] \geq \delta$$
\end{definition}

There exists functions $g$ which are
$(2^{\Theta(n)}, 1/2 - 2^{-\Theta(n)})$-average-case hard
(a random function $g$ will suffice with constant probability).

\begin{theorem}
\label{thm:avgcase}
Let $g: \zo^{n} \rightarrow \zo $ be a function that is
$(2^{\Theta(n)}, 1/2 - 2^{-\Theta(n)})$-average-case hard,
and let $ \encode : \zo^{n+1} \rightarrow \zo^m $ where $ m = \theta(n) $
be an efficiently decodable error correcting code
with decoding algorithm $ \sf{Decode} $ that tolerates $ m/8 $ errors.

Consider the following classification task.
Define:
$$
D_0 = \set{(0, \encode(x, g(x))) : x \gets \zo^{n}}
\text{ and }
D_1 = \set{(1, \encode(x, 1-g(x))) : x \gets \zo^{n}}
$$
This classification task has the following properties.
\begin{enumerate}
\item (Easy to Classify) An efficient classifier to distinguish $ D_0 $ from $ D_1 $ exists.
\item (Robust Classifier Exists) There exists a inefficient robust classifier
$ \sfR $ such that, \[ \prob[y\gets D_b]{B(y, m/8) \in \sfR^{-1}(b)} \geq 0.99 \]
where $ m/8 $ is the decoding radius and $ B(y, d)  = \{y' : \|y-y'\|_{\sf{Ham}} \leq d \}$.
\item (No Efficient Robust Classifier Exists) There exists a perturbation
algorithm $ \sfP $ such that there exists no polynomial-time robust classifier $ \sfR $ such that,
\[ \prob[]{\sfR(\tilde y) \in \sfR^{-1}(b)} \geq 0.5 + \negl(n) \]
where the perturbed sample $ \tilde y $ is generated by sampling $y \gets D_b $ for a random  $ b $ and is then perturbing $ \tilde y \gets \sfP^{D_0, D_1}(y)$ such that $ \| y - \tilde y \| \leq \eps $.
\end{enumerate}
\end{theorem}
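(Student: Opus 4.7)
The plan is to dispatch parts (1) and (2) quickly and spend the effort on part (3), which is where the actual cryptographic/complexity-theoretic content lives. For part (1), the classifier that outputs the first coordinate of its input has perfect accuracy on the unperturbed distributions, so no work is needed. For part (2), I will exhibit the following (inefficient) robust classifier $\sfR$: on input $\tilde y \in \zo^{m+1}$, discard the first coordinate, run $\sf{Decode}$ on the remaining $m$ bits to recover some $(x, c) \in \zo^n \times \zo$, then (in unbounded time) compute $g(x)$ and output $0$ if $c = g(x)$ and $1$ otherwise. Any perturbation of $(b, \encode(x, b \oplus g(x)))$ of Hamming distance at most $m/8$ disturbs the codeword portion by at most $m/8$ bits, which is within the decoding radius guaranteed by \cite{GI01}, so $\sf{Decode}$ recovers $(x, b \oplus g(x))$ exactly and $\sfR$ outputs $b$.

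For part (3), I will take the perturbation adversary $\sfP$ to be the one that simply overwrites the first coordinate with $0$ (a Hamming perturbation of size $1$, well inside any reasonable $\eps$). After this perturbation the classifier sees $(0, \encode(x, g(x)))$ when $b = 0$ and $(0, \encode(x, 1 - g(x)))$ when $b = 1$, so the task reduces to distinguishing these two ensembles. Suppose toward contradiction that some polynomial-size classifier $\sfR$ achieves success probability $\geq 1/2 + \epsilon$ with $\epsilon \geq 1/\poly(n)$. I will build a non-uniform predictor $A$ for $g$ as follows: on input $x \gets \zo^{n}$, sample a uniform bit $c$, feed $(0, \encode(x, c))$ to $\sfR$ to get an output $b'$, and return $c$ if $b' = 0$ and $1 - c$ otherwise. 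A one-line case analysis gives $\Pr[A(x) = g(x)] = (p_0 + p_1)/2 \geq 1/2 + \epsilon$, where $p_b$ denotes $\sfR$'s accuracy on the perturbed class-$b$ ensemble. For any non-negligible $\epsilon$ this contradicts the $(2^{\Theta(n)}, 1/2 - 2^{-\Theta(n)})$-average-case hardness of $g$, completing the proof.

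The main subtlety I anticipate is that $\sfR$ is permitted oracle access to $D_0$ and $D_1$ in Definition \ref{def:robust_hardness}, yet $D_0$ and $D_1$ are \emph{not} efficiently samplable in this construction: producing a fresh sample requires evaluating $g$, which is precisely what is assumed hard. The resolution, which I will spell out carefully, is that the average-case hardness hypothesis already allows non-uniform adversaries, so the predictor $A$ may take as advice a polynomially long list of precomputed samples from $D_0$ and $D_1$ (these exist information-theoretically even if not efficiently) and use them to answer the polynomially many oracle queries made by $\sfR$. Beyond this bookkeeping point, the rest of the argument is a routine reduction.
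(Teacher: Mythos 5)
Your proposal is correct and follows essentially the same route as the paper: output the first bit for (1), decode-then-compute-$g$ for the inefficient robust classifier in (2), and for (3) use the adversary that zeroes the first coordinate and reduce a successful classifier to a predictor for $g$, contradicting average-case hardness. Your write-up merely makes explicit two details the paper leaves implicit — the random-bit reduction showing advantage $(p_0+p_1)/2$ and the use of non-uniform advice to simulate the (inefficiently samplable) oracle access to $D_0, D_1$ — which is a welcome but not divergent refinement.
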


\begin{proof}
This proof closely follows the proof of \cref{thm:prfs}.
For Part (1), the classifier that simply outputs the first bit is always
correct.
For Part (2), we can robustly classify by using the error correcting code to
recover the message $(x, g(x))$ or $(x, 1-g(x))$, and then we can compute the
function $g$ to distinguish between these cases.
Specifically, the robust classifier is identical to the one presented in the
proof of \cref{thm:prfs}, but computing the function $g$ instead of $F_k(x)$.
For Part (3), we rely on the average-case hardness of $g$.
Consider the perturbation adversary that replaces the first bit of the sample by
0.
Now, classifying $D_0$ vs $D_1$ with non-negligible advantage is equivalent to
predicting $g(x)$ given $x$ with non-negligible advantage.
This is impossible in polynomial time by the average-case hardness of $g$,
and thus efficient robust classification is impossible.
\end{proof}

We now describe how to achieve the above properties with distributions that are
efficiently samplable.
First, recall the notion of a \emph{hard-core bit}:
Let $f: \zo^n \to \zo^n$ be a one-way function.
A predicate $b: \{0, 1\}^n \to \{0 ,1\}$ is a \emph{hard-core bit for $f$}
if for all probabilistic polynomial-time algorithms $A$,
$$\Pr_{x \gets \zo^n}[A(f(x)) = b(x)] \leq \frac{1}{2} + \negl(n)$$
The construction is as follows.

\begin{theorem}
Let $f: \zo^{n} \rightarrow \zo^n $ be a one-way permutation,
and let $b: \zo^n \to \zo$ be a hard-core bit for $f$.
Let $ \encode : \zo^{n+1} \rightarrow \zo^m $ where $ m = \theta(n) $
be an efficiently decodable error correcting code
with decoding algorithm $ \sf{Decode} $ that tolerates $ m/8 $ errors.

Consider the following classification task.
Define:
$$
D_0 = \set{(0, \encode(f(x), b(x))) : x \gets \zo^{n}}
\text{ and }
D_1 = \set{(1, \encode(f(x), 1-b(x))) : x \gets \zo^{n}}
$$
This classification task has the following properties.
\begin{enumerate}
\item (Easy to Classify) An efficient classifier to distinguish $ D_0 $ from $ D_1 $ exists.
\item (Robust Classifier Exists) There exists a inefficient robust classifier
$ \sfR $ such that, \[ \prob[y\gets D_b]{B(y, m/8) \in \sfR^{-1}(b)} \geq 0.99 \]
where $ m/8 $ is the decoding radius and $ B(y, d)  = \{y' : \|y-y'\|_{\sf{Ham}} \leq d \}$.
\item (No Efficient Robust Classifier Exists) There exists a perturbation
algorithm $ \sfP $ such that there exists no polynomial-time robust classifier $ \sfR $ such that,
\[ \prob[]{\sfR(\tilde y) \in \sfR^{-1}(b)} \geq 0.5 + \negl(n) \]
where the perturbed sample $ \tilde y $ is generated by sampling $y \gets D_b $ for a random  $ b $ and is then perturbing $ \tilde y \gets \sfP^{D_0, D_1}(y)$ such that $ \| y - \tilde y \| \leq \eps $.
\item (Efficiently Samplable) The distributions $D_0, D_1$ can be sampled in polynomial time.
\end{enumerate}
\end{theorem}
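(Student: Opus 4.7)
The plan is to mirror the proofs of \cref{thm:prfs} and \cref{thm:avgcase} almost verbatim, with the one-way permutation $f$ replacing the PRF (for the robust classifier) and the hard-core bit $b$ replacing the average-case hard function $g$ (for the hardness of robust classification), while the new efficient-samplability property falls out of the fact that $f$ is a polynomial-time computable \emph{permutation}.

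Parts (1), (2), and (4) should be essentially immediate. For (1), the classifier that outputs the first coordinate succeeds with probability $1$ on honest samples. For (2), an inefficient robust classifier $\sfR$ on input $\tilde y \in \zo^{m+1}$ first discards the first bit, applies $\decode$ to the remaining $m$ bits to recover a message $(y, c) \in \zo^{n+1}$ (valid since the perturbation budget is within the decoding radius $m/8$), then uses unbounded computation to invert the permutation and obtain the unique $x$ with $f(x) = y$, and finally outputs $0$ if $c = b(x)$ and $1$ otherwise. For (4), one samples $x \gets \zo^n$ and outputs $(b, \encode(f(x), b(x) \oplus b))$; both $f$ and the hard-core predicate $b$ are polynomial-time computable, so this samples $D_b$ perfectly.

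The substantive step is (3). I would take the perturbation adversary $\sfP$ that sets the first coordinate of its input to $0$ and leaves the rest untouched (one Hamming flip, well within any reasonable $\eps$). Suppose for contradiction that some polynomial-time $\sfR$ classifies $\sfP$-perturbed samples with advantage $\eps(n)$ non-negligible; the plan is to build a polynomial-time predictor $\sfA$ contradicting the hard-core property of $b$. On input $y = f(x)$, $\sfA$ draws $c \gets \zo$ uniformly, forms $\tilde y = (0, \encode(y, c))$, runs $\sfR(\tilde y)$, and outputs $c$ if $\sfR(\tilde y) = 0$ and $1 - c$ otherwise. When $c = b(x)$, $\tilde y$ is distributed exactly as $\sfP(D_0)$ conditioned on this $x$, and when $c = 1 - b(x)$ it is distributed exactly as $\sfP(D_1)$; in both cases a correct classification by $\sfR$ yields the correct prediction $b(x)$. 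Averaging over $c$, $x$, and $\sfR$'s randomness gives $\Pr[\sfA(f(x)) = b(x)] = \tfrac12 + \eps(n)$, contradicting the definition of a hard-core bit.

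The only place where a little care is required is making sure the conditional distribution in the reduction matches $\sfP$'s output distribution exactly, which it does because the encoding map is deterministic and the first-bit zeroing is idempotent on $D_0$ and simply flips the $D_1$ marker. There is no real obstacle beyond this bookkeeping; the hard-core bit assumption does precisely the work the PRF did in \cref{thm:prfs}, and the permutation structure is what upgrades \cref{thm:avgcase} to give efficiently samplable distributions.
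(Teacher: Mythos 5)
Your proposal is correct and follows essentially the same route as the paper: the paper proves Parts (1)--(3) by pointing back to the PRF/average-case-hard constructions (first-bit classifier, decode-then-invert-$f$-and-check-$b(x)$ for the inefficient robust classifier, and first-bit-zeroing perturbation reducing robust classification to predicting the hard-core bit), and handles Part (4) by sampling $x$ and computing $f(x), b(x)$, exactly as you do. Your explicit reduction in Part (3) — drawing $c$ uniformly, noting that $(0,\encode(f(x),c))$ matches the perturbed distribution because $b(x)\oplus b$ is uniform, and turning $\sfR$'s output into a prediction of $b(x)$ — is just a fleshed-out version of the argument the paper leaves implicit.
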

\begin{proof}
Parts (1)-(3) follow exactly as in the proof of \cref{thm:avgcase}.
Note that an inefficent distinguisher can invert $f(x)$ to find $x$, and compute
$b(x)$.
For Part (4), both distributions are clearly efficiently samplable, by first
sampling $x$ and then computing $f(x), b(x)$.
\end{proof}

\section{Cryptography from Robustly Hard Tasks}
In this section, we show that the existence of tasks with a provable gap in classification and robust classification implies one-way functions and hence a variety of cryptographic primitives that include pseudorandom functions, symmetric key encryption among others. 

\begin{theorem} Provably hard-to-learn robust classifiers imply one-way functions. 
Given a learning task $ D_0, D_1 $ such that, 
\begin{enumerate}
\item (Robust Classifier Exists) There exists a robust classifier $ \sfR $ such that, \[ \prob[y\gets D_b]{B(y,d) \in \sfR^{-1}(b)} \geq 0.90 \]
where $ d $ is the decoding radius and $ B(y, d)  = \{y' : \|y-y'\|_{\sf{Ham}} \leq d \}$. 
\item (A Robust Classifier is hard-to-learn) There exists an efficient perturbing adversary $ \sfP $ such that every efficiently learned classifier $ \sfL $ is not a robust classifier. That is, for a learning task  $ D_0,D_1 \gets \sf{Samp}(n) $ and classifier $ \sfL $, 
$$
\prob[]{\sfL^{D_0, D_1}(\tilde x) = b} \leq \frac12 + 0.1 \ .
$$
where the perturbed sample $ \tilde x \in B(x, d) $ is generated by sampling $x \gets D_b \text{ for a random }b\gets \zo $ and is then perturbing $ \tilde x \gets \sfP^{D_0, D_1}(x)$. The probability is over the entire experiment from sampling the learning tasks to the randomness of the perturbation algorithm and the classifier. 
\end{enumerate}
Then one-way functions exist. 
\end{theorem}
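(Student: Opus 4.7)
The plan is to reduce to the cited one-way-function construction: any pair of samplable distributions that are statistically far but computationally indistinguishable yields a one-way function. Concretely, I will define the two distributions
\[ X_0 \;=\; \set{\sfP^{D_0,D_1}(x) : x \gets D_0} \quad\text{and}\quad X_1 \;=\; \set{\sfP^{D_0,D_1}(x) : x \gets D_1}, \]
sampled jointly with the randomness that samples $(D_0,D_1) \gets \samp(n)$. These are efficiently samplable: sample the task, sample $x$, and then run the efficient perturbation adversary $\sfP$ with oracle access to $D_0,D_1$ (which is itself samplable because we can draw samples from the task).

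The first step is to show that $X_0$ and $X_1$ are statistically far. This is where the robust classifier $\sfR$ comes in. By assumption, with probability at least $0.9$ over $x \gets D_b$, every point in $B(x,d)$ is classified as $b$ by $\sfR$; since $\sfP(x) \in B(x,d)$, we get $\Pr[\sfR(X_0) = 0] \geq 0.9$ and $\Pr[\sfR(X_1) = 1] \geq 0.9$. Hence $d_{TV}(X_0,X_1) \geq 0.8$ (this bound is computed in expectation over $(D_0,D_1) \gets \samp(n)$, but in fact, by Markov, it also holds with constant probability over the task).

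The second step is to show that $X_0, X_1$ are computationally indistinguishable, given sample oracles to $D_0,D_1$. If some polynomial-time $\sfA^{D_0,D_1}$ had distinguishing advantage exceeding $0.2$, then the learner $\learn$ that outputs the classifier ``run $\sfA^{D_0,D_1}$ on the input'' would achieve
\[ \Pr\bigl[\learn^{D_0,D_1}(\tilde x) = b\bigr] \;=\; \tfrac12 + \tfrac12\bigl(\Pr[\sfA(X_1){=}1] - \Pr[\sfA(X_0){=}1]\bigr) \;>\; \tfrac12 + 0.1, \]
contradicting the assumed hardness of learning a robust classifier against the perturbation adversary $\sfP$ (after possibly flipping $\sfA$'s output to ensure the signed advantage is positive). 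So no polynomial-time distinguisher for $(X_0,X_1)$ has advantage more than $0.2$.

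Since $0.8 > 0.2$, and since the cited theorem about OWFs from statistically-far, computationally-indistinguishable samplable distributions holds for any pair of constants $\alpha > \beta$, this yields a one-way function. The main thing to be careful about is the order of quantifiers: $\sfP, \sfR,$ and $\sfA$ are all adaptive with respect to the sampled task $(D_0,D_1)$, but the cited construction of \cite{NR06,BDRV19js} precisely handles this ``distribution over distributions'' setting $\cF$, so packaging the task sampler $\samp$ together with the pair $(X_0, X_1)$ into a single ensemble $\cF$ is the step that requires the most care.
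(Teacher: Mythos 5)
Your proposal is correct and follows essentially the same route as the paper: perturb both distributions with $\sfP$, use the robust classifier to lower-bound the statistical distance by $0.9-0.1=0.8$, use the hardness-of-learning assumption (via the reduction that turns a distinguisher into a learned classifier) to bound the computational advantage, and then invoke the cited construction of one-way functions from samplable, statistically-far, computationally-indistinguishable distributions. The extra care you take about the distribution-over-tasks packaging and the distinguisher-to-learner reduction only makes explicit what the paper leaves implicit.
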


The proof of this theorem relies on fact that we can construct one-way functions from any two distributions that are staistically far and computationally close. 
The two distributions considered are  the perturbed distributions. That is, 
$$ D'_0 = \set{\sfP(x) : x \gets D_0} \text{ and, } D'_1 = \set{\sfP(x) : x \gets D_1}$$
We show that these two distributions are statistically far and yet computationally indistinguishable giving one-way functions. They are statistically far because the robust classfier can distinguish between them. Hence, the total variation distance between the two has to be large. And that they are computationally close because no efficient algorithm can distinguish between the two. Hence one way functions exist. 

\begin{proof}

We formally state the theorem used below.
\begin{theorem}[Folklore, see e.g., Chap.\ 3, Ex.\ 11 \cite{Gol01}] 
	Given a pair of distributions $ (X_0, X_1) \gets \cF $ over $ \cX $ that are statistically far, 
	\[ d_{TV}(X_0, X_1) = \max_{A:\cX \rightarrow [0,1]}  \EE_{x\gets X_0}\sfA(x) - \EE_{x\gets X_1}\sfA(x) > 0.8 \]
	and computationally indistinguishable. That is for every polynomial time adversary $ \sfA $ that gets sample access to the distributions, 
	
	\[  \EE_{x\gets X_0}\sfA^{(X_0, X_1)}(x) - \EE_{x\gets X_1}\sfA^{(X_0, X_1)}(x) < 0.4 \]
	Then one-way functions exist.\footnote{The constants in the equations are fairly arbitrary. We can replace them by any constants $ \alpha, \beta $ where $ \alpha^2 > \beta $ and the result holds.}
\end{theorem}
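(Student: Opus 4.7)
The plan is to prove this classical folklore implication by combining the Impagliazzo--Luby characterization of one-way functions (their equivalence with hardness of \emph{distributional inversion}) with a short Bayesian/Cauchy--Schwarz calculation that converts the statistical distance $\alpha = d_{TV}(X_0, X_1)$ into a distinguishing advantage of at least $\alpha^2$, violating the computational closeness hypothesis whenever $\alpha^2 > \beta$.

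I would first set up a candidate one-way function from the efficient samplers the hypothesis implicitly relies on. Let $\mathsf{Samp}_{\cF}$ be the PPT sampler that on randomness $r_\cF$ outputs an explicit description of a pair $(X_0, X_1) \gets \cF$, and let $\mathsf{Samp}_b$ be the PPT sampler that, on input this description and randomness $r$, outputs a sample from $X_b$. Define
\[
f(b, r_\cF, r) \;:=\; \bigl(\mathsf{Samp}_{\cF}(r_\cF),\; \mathsf{Samp}_b\bigl(r;\, \mathsf{Samp}_{\cF}(r_\cF)\bigr)\bigr).
\]
Assume for contradiction that one-way functions do not exist. By the Impagliazzo--Luby theorem, for every inverse-polynomial $\delta$ there is an efficient $\delta$-\emph{distributional inverter} $I$ for $f$, meaning the joint distributions $(I(y), y)$ and $((b, r_\cF, r), f(b, r_\cF, r))$ lie within statistical distance $\delta$ when $(b, r_\cF, r)$ is drawn uniformly. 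Since the uniform preimages of $y = (\text{desc}, x)$ under $f$ place mass proportional to $p_b(y)$ on the latent bit (where $p_b$ is the conditional density of $X_b$ given the family description embedded in $y$), the bit $b'$ returned by $I$ is, averaged over $y$, $\delta$-close in distribution to a draw from the Bayesian posterior $p_b(y)/(p_0(y) + p_1(y))$.

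The distinguisher $D$ on a challenge $x$ assembles $y = (\text{desc}, x)$ and outputs the bit $b'$ returned by $I(y)$. A direct computation using $p_b = \tfrac{p_0 + p_1}{2} + (-1)^{b+1}\tfrac{p_1 - p_0}{2}$ yields
\[
\bigl|\EE_{x \sim X_1}[D(x) = 1] - \EE_{x \sim X_0}[D(x) = 1]\bigr| \;=\; \tfrac{1}{2} \sum_y \frac{(p_0(y) - p_1(y))^2}{p_0(y) + p_1(y)} \;\pm\; O(\delta),
\]
and Cauchy--Schwarz applied to the factorization $|p_0 - p_1| = \frac{|p_0 - p_1|}{\sqrt{p_0 + p_1}} \cdot \sqrt{p_0 + p_1}$ gives $\sum_y (p_0 - p_1)^2/(p_0 + p_1) \geq 2\, d_{TV}(X_0, X_1)^2$. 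Hence $D$ has advantage at least $\alpha^2 - O(\delta)$, which exceeds $\beta$ for any sufficiently small inverse-polynomial $\delta$, contradicting the hypothesis and forcing one-way functions to exist.

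The main obstacle is reconciling the sample-access model of the indistinguishability hypothesis with the fact that $f$ embeds an explicit family description into its output: $D$ only has sample access to $(X_0, X_1)$, not the random tape $r_\cF$ describing the pair. I would address this by letting $D$ invoke $\mathsf{Samp}_{\cF}$ itself to draw a fresh family together with a fresh challenge $x \sim X_b$, which is legitimate because the hypothesis is quantified in expectation over $(X_0, X_1) \gets \cF$; in the non-uniform setting, an averaging argument hardwiring a typical $r_\cF$ also works. A secondary subtlety is that different $(X_0, X_1)$ in the support of $\cF$ may have different TV distances, so the final bound should be taken over the event that $d_{TV}(X_0, X_1) \geq \alpha$ (which, by hypothesis, has probability at least some constant), with the small loss absorbed into the slack between $\alpha^2$ and $\beta$.
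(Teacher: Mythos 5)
The paper does not supply its own proof here; the statement is invoked as folklore (Goldreich, Chap.\ 3, Ex.\ 11, and see also~\cite{NR06,BDRV19js}), so the comparison is against the standard argument rather than against anything written in this paper. Your core calculation matches that standard route and is correct: build $f$ from the efficient samplers, pass from ``no one-way functions'' to an efficient distributional inverter $I$ via Impagliazzo--Luby, observe that the $b$-coordinate of a preimage returned by $I$ is (up to $O(\delta)$) distributed as the Bayes posterior $p_b/(p_0+p_1)$, and use Cauchy--Schwarz to get $\tfrac12\sum_y (p_0-p_1)^2/(p_0+p_1) \ge d_{TV}(X_0,X_1)^2$, so the resulting distinguisher has advantage at least $\alpha^2 - O(\delta) > \beta$. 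For a single fixed pair $(X_0,X_1)$ with known polynomial-time samplers this is exactly the proof.

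The genuine gap is in the reconciliation step you flag at the end. Your first fix---have $D$ call $\mathsf{Samp}_\cF$ itself to draw a fresh description $\mathrm{desc}'$ and then run $I(\mathrm{desc}', x)$ on the received challenge $x$---does not work: $x$ was sampled from $X_b$ under the \emph{challenger's} description, so the pair $(\mathrm{desc}', x)$ is not distributed as an image of $f$, and $I$'s distributional-inversion guarantee says nothing about it; and if $D$ also resamples the challenge, it is no longer distinguishing anything. The non-uniform averaging fix is the right instinct but is also incomplete as stated: hardwiring a single good $\mathrm{desc}^\ast$ yields a distinguisher for that one pair, whereas the computational-indistinguishability hypothesis (both as written in the theorem and as used in the ``Cryptography from Robustly Hard Tasks'' section) is an average over $(X_0,X_1)\gets\cF$, and the adversary has only \emph{sample} access, not the description. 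One needs either a uniform distinguisher that works in expectation over $\cF$, or an argument that aggregates the per-description inverters back into a single sample-access adversary---this is exactly where the family version of the folklore result (cf.~\cite{NR06}) requires additional machinery, and your write-up does not close it.
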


We want to show that these two distributions are statiscally far and computationally close. This relies on the existence of the robust classifier and the difficultly of learning one respectively. 

We start by showing that, $ d_{TV}(D'_0,  D'_1) \geq 0.8 $. To observe this, consider the robust classifier as the distinguisher. This implies that,  
\[ d_{TV} \geq \EE_{x\gets D'_1}[R(x)] - \EE_{x\gets D'_0}[R(x)] \geq 0.9 - 0.1 \geq 0.8 \]

On the other hand, any efficient distinguisher cannot distinguish between the samples by the assumption. Hence we are done. 
\end{proof}

Another reasonable definition, from which we don't know one-way functions is the following: there exists a perturbation adversary $ \sfP $ that given oracle access to the underlying classifier finds counter examples. That is, $ \prob[x\gets D_b]{\sfR(\sfP^{\sfR, D_0, D_1}(x)) \neq b} \geq 0.4 $. For this definition, using standard min-max arguments \citep{Imp95,FS99,VZ13}, we can construct ``time-bounded'' universal adversaries. That is, for time $ T $, there exists a perturbation adversary $ \sfP_T $ running in time $ \poly(T) $ that finds adversarial examples for all adversaries running in time $ T $ or less. This is insufficient to imply one-way functions though.

\paragraph{Public Key Encryption.} 
The two distributions described above have the following public-key encryption flavor: the robust classifier can serve as the decryption algorithm to distinguish between samples from the perturbed distributions $ D_0' , D_1'$. If after seeing enough samples, the learning algorithm can generate fresh samples from the two unperturbed distributions $ D_0, D_1 $ then we also have an encryption algorithm: to encrypt a bit $ b $, first sample from the distribution $ D_b $ and run the perturbation adversary $ \sfP $ to generate the encryption of the bit. To decrypt, use the robust classifier. 

There are two key ingredients missing: (1) The encryption algorithm $ \sfP $ needs access to fresh samples from the two distributions to encrypt. There are learning tasks where we do not have access to these. (2) The ability to sample the robust classifier along with descriptions of the learning tasks. This might not be feasible, especially when the tasks are not chosen, but supplied by nature. 
%
%
%
%
%

\ifconf
\else

\paragraph{Acknowledgments.} We would like to thank Shafi Goldwasser and Nadia Heninger for discussions regarding inversion of the (noisy) BBS PRG.

\bibliographystyle{alpha}
\newcommand{\etalchar}[1]{$^{#1}$}

\appendix
\fi

\section{A Description of BLPR Example and the Blum-Blum-Shub PRG.}\label{sec:bpr-bbs}

In this section, we describe the BLPR counter-example and the Blum-Blum-Shub pseudorandom generator. 

We start by defining the notion of a trapdoor pseudorandom generator.  A trapdoor pseudorandom generator $ \tprg: \zo^n \rightarrow \zo^{2n}$ is an expanding function whose outputs are indistinguishable from truly random strings. That is, $ \set{\tprg(x) : x \gets \zo^n} \approx_c \set{y: y\gets \zo^{2n}} $.
Furthermore, the function has a trapdoor $ \state $ that allows distinguishing the output of the PRG from random outputs. That is, there exists a distinguisher $ \sfD $ that given the trapdoor,
\[ \prob[x \gets \zo^n]{\sfD(\state, \tprg(x)) = 1} - \prob[y \gets \zo^{2n}]{\sfD(\state, y) = 1} > 0.99\]

Given a trapdoor PRG, the BLPR learning task $ D_0, D_1 $ is the following:
\[ D_0 = \set{(0,\tprg(x)) : x \gets \zo^n} \text{ and, } D_1= \set{(1, y) : y\gets \zo^{2n}} \ .\]

We describe the BBS PRG and its trapdoor property next. 
The Blum-Blum-Shub pseudorandom generator is defined as follows: 

\bigskip
Consider a number $ N = pq $ where $ p, q $ are primes congruent to $ 3\pmod{4} $. 
The seed to the PRG is a random element $ x_0\in \ZZ[N] $.  Let $ {\sf hcb} $ be a hardcore bit\footnote{A function $ {\sf hcb} $ is a hardcore bit of a one-way function $ f $ has the following property, that if given $ y = f(x)  $ for a random $ x $, $ {\sf hcb}(x) $ is pseudorandom. That is, given any algorithm that given $ y = f(x) $ can predict $ {\sf hcb}(x) $, then we can use this algorithm to invert $f $ with non-negligible probability. } of the function $ x \rightarrow x^2 \pmod N$ (eg parity or the most significant bit).
	
	\underline{\sf{BBSPRG}$(x_0, m) $}: 
	\begin{enumerate}[noitemsep,nolistsep]
	\item For $ i \in [1:m] $, 
	\begin{enumerate}[noitemsep,nolistsep]
		\item Set $ x_i = x_{i-1}^2 \pmod N $. 
		\item Set $ y_i = {\sf hcb}(x_i) $
	\end{enumerate}
	\item Output $ y_1, y_2, \dots, y_{m-1}, x_m $. 
	\end{enumerate}

The trapdoor property BLPR refer to construct the robust classifier is the following one: 
In the construction of the PRG, the security does not rely on outputting the last entry ($ x_m $) in its entireity. Though doing so enables the following ``trapdoor'' property: 
\begin{lemma}
	There exists a distinguisher $ \sfD $ that given the factorization of $ N $ can distinguish between the output of the $ \sf{BBSPRG} $ from random strings. That is, 
	\[ \prob[{x_0 \gets \ZZ[N]}]{ \sfD_{p,q}(\sf{BBSPRG}(x_0))} - \prob[y\gets \zo^m]{ \sfD_{p,q}(y)} > 0.99 \]
\end{lemma}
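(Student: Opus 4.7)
The plan is to build a distinguisher $\sfD_{p,q}$ that uses the factorization of $N = pq$ to ``rewind'' the BBS chain from $x_m$ and then verify the emitted hardcore bits. The key algebraic fact I will invoke is that when $p, q \equiv 3 \pmod 4$ (so $N$ is a Blum integer), the squaring map $x \mapsto x^2 \pmod N$ is a permutation on the group $\mathrm{QR}_N$ of quadratic residues modulo $N$, and the (unique) principal square root can be computed efficiently from $p, q$: over $\mathbb{Z}_p$, a square root of $z$ is $z^{(p+1)/4} \bmod p$ (and similarly mod $q$), and these are combined via CRT.

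Next I would specify $\sfD_{p,q}$ on input $(y_1, \ldots, y_{m-1}, x_m)$ as follows. First, use the Legendre symbols mod $p$ and mod $q$ to test whether $x_m \in \mathrm{QR}_N$; if not, output $0$. Otherwise, iteratively compute $\tilde{x}_{m-1}, \tilde{x}_{m-2}, \ldots, \tilde{x}_1$ by repeated principal square-root extraction (each intermediate value remains in $\mathrm{QR}_N$, so the next root exists). Finally, check whether ${\sf hcb}(\tilde{x}_i) = y_i$ for every $i \in \{1, \ldots, m-1\}$, and output $1$ iff every check passes.

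For completeness, observe that on an honest PRG output with seed $x_0$ the chain satisfies $x_i = x_{i-1}^2$, hence $x_i \in \mathrm{QR}_N$ for every $i \geq 1$; by uniqueness of principal roots on $\mathrm{QR}_N$, the recovered values satisfy $\tilde{x}_i = x_i$ for $i \geq 1$, so all hardcore-bit checks succeed and $\sfD_{p,q}$ accepts with probability $1$. For soundness, a uniformly random element of $\mathbb{Z}_N$ lies in $\mathrm{QR}_N$ with probability at most $1/4$, and conditioned on this the bits $y_1, \ldots, y_{m-1}$ are independent of the recovered $\tilde{x}_i$'s and uniform, so the joint check succeeds with probability $2^{-(m-1)}$. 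Combining, the acceptance probability on uniform inputs is at most $2^{-(m-1)}/4$, giving distinguishing advantage exceeding $0.99$ for any reasonable $m$.

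I do not anticipate a substantive obstacle: the one minor subtlety is that $x_0$ is drawn from $\mathbb{Z}_N$ rather than $\mathrm{QR}_N$, so $\tilde{x}_0$ may differ from $x_0$ even on honest outputs — but this is irrelevant, since the consistency test only uses $\tilde{x}_i$ for $i \geq 1$, and $x_1 = x_0^2$ is always a QR regardless of whether $x_0$ is. Everything else reduces to routine number-theoretic computation enabled by knowledge of $p, q$.
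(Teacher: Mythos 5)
Your proposal is correct and follows essentially the same route as the paper's proof sketch: reject if $x_m$ is not a quadratic residue, otherwise rewind the chain via the trapdoor (unique principal square roots, computable from $p,q\equiv 3\pmod 4$), and verify the hardcore bits, giving acceptance probability $1$ on PRG outputs versus roughly $\tfrac14\cdot 2^{-(m-1)}$ on uniform strings. Your added detail about $x_0$ possibly lying outside $\mathrm{QR}_N$ (while $x_1=x_0^2$ is always a residue) is a fine clarification but does not change the argument.
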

\begin{proof}[Proof Sketch]
The proof relies on the fact that Rabin's one way function $ f(x) = x^2 \bmod N $ is a trapdoor function that can be  efficiently inverted given the factorization of $ N $. Furthermore, the inverse returned is the only square root of $ x^2 $ that is a square itself. Hence the distinguisher does the following: 

\noindent
\begin{minipage}{0.6\textwidth}
	\medskip
	\underline{\sf{D}$(z) $}: 
	\begin{enumerate}[noitemsep,nolistsep]
		\item Interpret the input as $ y_1, y_2, \dots y_{m-1}, x_m $. 
		\item If $ x_m $ is not a square mod $ N $, output $ 0 $. 
		\item Compute $ x_1, x_2, \dots x_{m-1} $ as $ x_i = f^{-1}(x_{i+1}) $. 
		\item If $ y_i = {\sf hcb}(x_i) $ for all $ i $, return $ 1 $, else return $ 0 $.  
	\end{enumerate}
	\medskip
\end{minipage}

Observe that the distinguisher always outputs $ 1 $ on outputs of the PRG. On the other hand, when fed a random string, $ x_m $ is not a square with probability $ 3/4 $ and even when it is a square, the probability of each $ y_i = {\sf hcb}(x_i) $ is exactly $ 1/2 $ independently. Hence the probability that the distinguisher outputs $ 1 $ on a random string is $ \frac{1}{4} \cdot (\frac12)^{m-1} $ which is tiny. 
\end{proof}

Based on this, the BLPR counterexample is the following: 
\begin{blprexample*}
	Let $ N = pq $ where $ p,q $ are random $ n $-bit primes of the form $ 3 \pmod4 $. Let $ m = n^2 $. Define $ D_0, D_1 $ as: 
	\[ D_0 =\set{ (0, {\sf BBSPRG}(x_0) ) : x_0\gets \ZZ[N] } \text{ and,  } D_1: \set{ (1, z) : z  \gets \zo^{m + \log N}} \]
	Then, the learning task has the following properties: 
	(1) The distributions are easy to classify non-robustly. 
	(2) There exists an inefficient robust classifier for $ \eps = \theta(\sqrt n) $. 
	(3) No efficiently learned classifier can classify better than chance. 
	(4) Given the factorization of $ N $, there exists an efficient robust classifier for $ \eps = \theta(\sqrt n) $. 
	
\end{blprexample*}

Properties 1, 2, 3 are true. To the best of our knowledge, 4 is 
not known to be true. As we described earlier, we know of robust classifiers for $ \eps = O(1) $. This leaves us with the following open questions. 

\paragraph{Open Questions.}
\begin{enumerate}
\item Given factorization of $ N $, prove that there exists an efficient robust classifier for $ \eps = \omega(1) $-bits. 
\item (Perturbation Adversary 1) Consider the perturbation adversary that erases the first bit and adds random noise to each bit of the PRG with prob $ 1/\sqrt{n} $. Given the factorization a $ N $, does there exists an efficient robust classifier for this adversary. 
\item (Perturbation Adversary 2) The adversary deletes the last complete entry output by the PRG (i.e., $ x_m $). Given the factorization of $ N $, can we distinguish this PRG from random, when no other error is added. 

Although BBS is a trapdoor PRG, it crucially relies on the fact that $ x_m $, the last value is available completely intact. Without access to this value, BBS is still a PRG but it is not clear how to do the trapdoor decoding. 
\end{enumerate}

As we described earlier, Open Question 3 is a long-standing open question in the computational number theory community \citep{nadia-personal-comm,blogpost}. And Open Question 1 is a harder variant of that question. Finally, Question 2 asks a error correction or decoding question -- given the output of a PRG with random errors, can you recover the original PRG string (even given some trapdoor). We are not aware of any way in which this factorization actually helps decoding under random noise.
%
%
%
%

\end{document}